\newcommand{\citep}{\cite}
\setlist[itemize]{label=--}
\setlist[enumerate]{label=\arabic*),labelindent=\parindent,leftmargin=*}
\newtheorem{thm}{Theorem}
\newtheorem{lem}{Lemma}
\newtheorem{cor}{Corollary}
\newcommand{\parhead}[1]{ \noindent {\bfseries\boldmath\ignorespaces #1}}
\newtheoremstyle{slplain}
  {.4\baselineskip\@plus.1\baselineskip\@minus.1\baselineskip}
  {.3\baselineskip\@plus.1\baselineskip\@minus.1\baselineskip}
  {\itshape}
  {}
  {\bfseries}
  {.\xspace}
  { }
  {}
\theoremstyle{slplain} 
\title{AC/DC: Alternating Compressed/DeCompressed Training of Deep Neural Networks}
\author{Alexandra Peste \thanks{Correspondence to: Alexandra Peste \href{mailto:alexandra.peste@ist.ac.at}{<alexandra.peste@ist.ac.at>}} \\ \small{IST Austria}
\And Eugenia Iofinova \\ \small{IST Austria} 
\And Adrian Vladu \\ \small{CNRS \& IRIF}
\And Dan Alistarh \\ \small{IST Austria \& Neural Magic}}
\begin{document}

\maketitle

\begin{abstract}
    The increasing computational requirements of deep neural networks (DNNs) have led to significant interest in obtaining DNN models that are \emph{sparse}, yet \emph{accurate}. Recent work has investigated the even harder case of \emph{sparse training},
    where the DNN weights are, for as much as possible, already sparse to reduce computational costs during training.
    Existing sparse training methods are often empirical and can have lower accuracy relative to the dense baseline. In this paper, we present a general approach called Alternating Compressed/DeCompressed (AC/DC) training of DNNs, demonstrate convergence for a variant of the algorithm, and  show that AC/DC outperforms existing sparse training methods in accuracy at similar computational budgets; at high sparsity levels, AC/DC even outperforms existing methods that rely on accurate pre-trained dense models. An important property
    of AC/DC is that it allows \emph{co-training} of dense and sparse models, yielding accurate \emph{sparse--dense model pairs} at the end of the training process. This is useful in practice, where compressed variants may be desirable for deployment in resource-constrained settings without re-doing the entire training flow, and also provides us with insights into the accuracy gap between dense and compressed models. 
    The code is available at: \url{https://github.com/IST-DASLab/ACDC}.

\end{abstract}

\vspace{-1em}
\section{Introduction}

The tremendous progress made by deep neural networks in solving diverse tasks has driven significant research and industry interest in deploying efficient versions of these models. 
To this end, entire families of model compression methods have been developed, such as pruning~\citep{hoefler2021sparsity} and quantization~\citep{gholami2021survey}, which are now  accompanied by hardware and software support~\citep{vanholder2016efficient, chen2018tvm,david2020tensorflow, mishra2021accelerating, graphcore}. 

Neural network pruning, which is the focus of this paper, is the compression method with arguably the longest history~\citep{lecun1990optimal}. The basic goal of pruning is  to obtain neural networks for which many connections are removed by being set to zero, while maintaining the network's accuracy. A myriad pruning methods have been proposed---please see~\cite{hoefler2021sparsity} for an in-depth survey---and it is currently understood that many popular networks can be compressed by more than an order of magnitude, in terms of their number of connections, without significant accuracy loss. 

Many accurate pruning methods require a fully-accurate, dense variant of the model, from which weights are subsequently removed. A shortcoming of this approach is the fact that the memory and computational savings due to compression are only available for the \emph{inference}, post-training phase, and not during training itself. This distinction becomes important especially for large-scale modern models, which can have millions or even billions of parameters, and for which fully-dense training can have high computational and even non-trivial  environmental costs~\citep{strubell2020energy}. 

One approach to address this issue is \emph{sparse training}, which essentially aims to remove connections from the neural network as early as possible during training, while still matching, or at least approximating, the accuracy of the fully-dense model. For example, the RigL technique~\cite{evci2020rigging} randomly removes a large fraction of connections  early in training, and then proceeds to optimize over the sparse support, providing savings due to sparse back-propagation. Periodically, the method re-introduces some of the weights during the training process, based on a combination of heuristics, which requires taking full gradients. 
These works, as well as many recent sparse training approaches~\citep{bellec2017deep, mocanu2018scalable, jayakumar2020top}, which we cover in detail in the next section, have shown empirically that non-trivial computational savings, usually measured in theoretical FLOPs, can be obtained using sparse training, and that the optimization process can be fairly robust to sparsification of the support. 

At the same time, this line of work still leaves intriguing open questions. The first is \emph{theoretical}: to our knowledge, none of the methods optimizing over sparse support, and hence providing training speed-up, have been shown to have convergence guarantees. 
The second is \emph{practical}, and concerns a deeper understanding of the relationship between the densely-trained model, and the sparsely-trained one. Specifically, (1)~most existing sparse training methods still leave a non-negligible accuracy gap, relative to dense training, or even post-training sparsification; and (2)~most existing work on sparsity requires significant changes to the training flow, and focuses on maximizing global accuracy metrics; thus, we lack understanding when it comes to \emph{co-training} sparse and dense models, as well as with respect to correlations between sparse and dense models \emph{at the level of individual predictions}.

\parhead{Contributions.} In this paper, we take a step towards addressing these questions. 
We investigate a general hybrid approach for sparse training of neural networks, which we call \emph{Alternating Compressed / DeCompressed (AC/DC)} training. AC/DC performs \emph{co-training of sparse and dense models}, and can return both an accurate \emph{sparse} model, and a \emph{dense} model, which can recover the dense baseline accuracy via fine-tuning. We show that a variant of AC/DC ensures convergence for general non-convex but smooth objectives, under analytic assumptions. Extensive experimental results show that it provides state-of-the-art accuracy among sparse training techniques at comparable training budgets, and can even outperform \emph{post-training} sparsification approaches when applied at high sparsities. 
    
AC/DC builds on the classic \emph{iterative hard thresholding (IHT)} family of methods for sparse recovery~\citep{blumensath2008iterative}. As the name suggests, AC/DC works by alternating the standard \emph{dense} training phases with \emph{sparse} phases where optimization is performed exclusively over a fixed \emph{sparse support}, and a subset of the weights and their gradients are fixed at zero, leading to computational savings. (This is in contrast to \emph{error feedback} algorithms, e.g.~\citep{courbariaux2016binarized, lin2019dynamic} which require computing fully-dense gradients, even though the weights themselves may be sparse.)
The process uses the same hyper-parameters, including the number of epochs, as regular training, and the frequency and length of the phases can be safely set to standard values, e.g. 5--10 epochs. 
We ensure that training ends on a \emph{sparse} phase, and return the resulting \emph{sparse} model, as well as the last \emph{dense} model obtained at the end of a \emph{dense} phase. 
This dense model may be additionally fine-tuned for a short period, leading to a more accurate \emph{dense-finetuned} model, which we usually find to match the accuracy of the \emph{dense baseline}. 

We emphasize that algorithms alternating sparse and dense training phases for deep neural networks have been previously investigated~\citep{jin2016training, han2016dsd}, but with the different goal on using sparsity as a regularizer to obtain \emph{more accurate dense models}. Relative to these works, our goals are two-fold: we aim to produce highly-accurate, highly-sparse models, but also to maximize the fraction of training time for which optimization is performed over a sparse support, leading to computational savings. 
Further, we are the first to provide convergence guarantees for variants of this approach. 

We perform an extensive empirical investigation, showing that AC/DC provides consistently good results on a wide range of models and tasks (ResNet~\citep{he2016deep}  and MobileNets~\citep{howard2017mobilenets} on the ImageNet~\citep{imagenet} / CIFAR~\citep{cifar100} datasets, and Transformers~\citep{vaswani2017attention, dai2019transformer} on WikiText~\citep{wikitext103}), under standard values of the training hyper-parameters. 
Specifically, when executed on the same number of training epochs,
our method outperforms all previous \emph{sparse training} methods in terms of the accuracy of the resulting sparse model, often by significant margins.
This comes at the cost of slightly higher theoretical computational cost relative to prior sparse training methods, although AC/DC usually reduces  training FLOPs to 45--65\% of the dense baseline. 
AC/DC is also close to the accuracy of state-of-the-art post-training pruning methods~\citep{kusupati2020soft, singh2020woodfisher} at medium  sparsities (80\% and 90\%); surprisingly, it \emph{outperforms} them in terms of accuracy, at higher sparsities. In addition, AC/DC is flexible with respect to the structure of the ``sparse projection'' applied at each compressed step: we illustrate this by obtaining \emph{semi-structured} pruned models using the 2:4 sparsity pattern efficiently supported by new NVIDIA hardware~\citep{mishra2021accelerating}. 
Further, we show that the resulting sparse models can provide significant real-world speedups for DNN inference on CPUs~\citep{NM}.  
\vspace{-0.2em}

An interesting feature of AC/DC is that it allows for accurate dense/sparse co-training of models. 
Specifically, at medium sparsity levels (80\% and 90\%), the method allows the co-trained dense model to recover the dense baseline accuracy via a short fine-tuning period.
In addition, dense/sparse co-training provides us with a lens into the training dynamics, in particular relative to the sample-level accuracy of the two models, but also in terms of the dynamics of the sparsity masks. 
Specifically, we observe that co-trained sparse/dense pairs have higher sample-level agreement than sparse/dense pairs obtained via post-training pruning, and that weight masks still change later in training. 

Additionally, we probe the accuracy differences between sparse and dense models, by examining their ``memorization'' capacity~\citep{zhang2016understanding}. 
For this, we perform dense/sparse co-training in a setting where a small number of valid training samples have \emph{corrupted labels}, and examine how these samples are classified during dense and sparse phases, respectively. 
We observe that the sparse model is less able to ``memorize'' the corrupted labels, and instead often classifies the corrupted samples to their \emph{true} (correct) class. 
By contrast, during dense phases model can easily ``memorize'' the corrupted labels. (Please see Figure~\ref{fig:random-correct-no-da-main} for an illustration.) 
This suggests that one reason for the higher accuracy of dense models is their ability to ``memorize'' hard-to-classify samples.

\vspace{-0.5em}
\section{Related Work}
\vspace{-0.5em}

There has recently been tremendous research interest into pruning techniques for DNNs; we direct the reader to the recent surveys of~\cite{gale2019state} and~\cite{hoefler2021sparsity} for a more comprehensive overview. 
Roughly, most DNN pruning methods can be split as (1) \emph{post-training} pruning methods, which start from an accurate dense baseline, and remove weights, followed by fine-tuning;  
and (2) \emph{sparse training} methods, which perform weight removal during the training process itself. (Other categories such as \emph{data-free} pruning methods~\citep{lee2018snip, tanaka2020pruning} exist, but they are beyond our scope.) We focus on \emph{sparse training}, although we will also compare against state-of-the-art post-training methods.

Arguably, the most popular metric for weight removal is \emph{weight magnitude}~\citep{hagiwara1994, han2015learning, zhu2017prune}. Better-performing approaches exist, such as second-order metrics~\citep{lecun1990optimal, hassibi1993optimal, 2017-dong, singh2020woodfisher}, or Bayesian approaches~\citep{molchanov2017variational}, but they tend to have higher computational and implementation cost. 

The general goal of \emph{sparse training} methods is to perform both the forward (inference) pass \emph{and the backpropagation} pass over a sparse support, leading to computational gains during the training process as well.
One of the first approaches to maintain sparsity throughout training was Deep Rewiring~\citep{bellec2017deep}, where SGD steps applied to positive weights are augmented with random walks in parameter space, followed by inactivating negative weights. To maintain sparsity throughout training, randomly chosen inactive connections are re-introduced in the ``growth'' phase. 
Sparse Evolutionary Training (SET)~\citep{mocanu2018scalable} introduces a non-uniform sparsity distribution across layers, which scales with the number of input and output channels, and trains sparse networks by pruning weights with smallest magnitude and re-introducing some weights randomly. 
RigL~\citep{evci2020rigging} prunes weights at random after a warm-up period, and then periodically performs weight re-introduction using a combination of connectivity- and gradient-based statistics, which require periodically evaluating full gradients. RigL can lead to state-of-the-art accuracy results even compared to post-training methods; however, to achieve high accuracy it requires significant additional data passes (e.g. 5x) relative to the dense baseline. 
Top-KAST~\citep{jayakumar2020top} alleviated the drawback of periodically having to evaluate dense gradients by updating the sparsity masks using gradients \emph{of reduced sparsity} relative to the weight sparsity. 
The latter two methods set the state-of-the-art for sparse training: when executing for the same number of epochs as the dense baseline, they provide computational reductions the order of 2x, while the accuracy of the resulting sparse models is lower than that of leading post-training methods, executed at the same sparsity levels. 
To our knowledge, none of these methods have convergence guarantees. 

Another approach towards faster training is \emph{training sparse networks from scratch}. The masks are updated by continuously pruning and re-introducing weights. For example, \cite{lin2019dynamic} uses magnitude pruning after applying SGD on the dense network, whereas 
\cite{dettmers2019sparse} update the masks by re-introducing weights with the highest gradient momentum. STR~\citep{kusupati2020soft} learns a separate pruning threshold for each layer and allows sparsity both during forward and backward passes; however, the desired sparsity can not be explicitly imposed, and the network has low sparsity for a large portion of training. These methods can lead to only limited computational gains, since they either require dense gradients, or the sparsity level cannot be imposed.
By comparison, our method provides models of similar or better accuracy at the same sparsity, with  computational reductions. 
We also obtain dense models that match the baseline accuracy, with a fraction of the baseline FLOPs.

The idea of alternating sparse and dense training phases has been examined before in the context of neural networks, but with the goal of using temporary sparsification as a regularizer. 
Specifically, \emph{Dense-Sparse-Dense (DSD)}~\citep{han2016dsd} proposes to first \emph{train a dense model to full accuracy}; this model is then sparsified via magnitude; next, optimization is performed over the sparse support, followed by an additional optimization phase over the full dense support. Thus, this process is used as a regularization mechanism for the dense model, which results in relatively small, but consistent accuracy improvements relative to the original dense model. In \cite{jin2016training}, the authors propose a similar approach to DSD, but alternate sparse phases during the regular training process. The resulting process is similar to AC/DC, but, importantly, the goal of their procedure is to return a \emph{more accurate dense model}. (Please see their Algorithm 1.) For this, the authors use relatively low sparsity levels, and gradually increase sparsity during optimization; they observe accuracy improvements for the resulting dense models, at the cost of increasing the total number of epochs of training.  
By contrast, our focus is on obtaining accurate \emph{sparse} models, while reducing computational cost, and executing the dense training recipe. 
We execute at higher sparsity levels, and on larger-scale datasets and models. In addition, we also show that the method works for other sparsity patterns, e.g. the 2:4 semi-structured  pattern~\citep{mishra2021accelerating}. 

More broadly, the Lottery Ticket Hypothesis (LTH)~\citep{frankle2018lottery} states that sparse networks can be trained in isolation \emph{from scratch} to the same performance as a post-training pruning baseline, by starting from the ``right'' weight and sparsity mask initializations,  optimizing only over this sparse support.
However, initializations usually require the availability of the fully-trained dense model, falling under \emph{post-training} methods. 
There is still active research on replicating these intriguing findings to large-scale models and datasets~\citep{gale2019state, frankle2020linear}. 
Previous work \cite{gale2019state, zhu2017prune} have studied progressive sparsification during regular training, which may also achieve training time speed-up, after a sufficient sparsity level has been achieved. However, AC/DC generally achieves a better trade-off between validation accuracy and training time speed-up, compared to these methods. 

Parallel work by~\citep{mohtashami2021simultaneous} investigates a related approach, but focusing on low-rank decompositions for Transformer models. 
Both their analytical approach and their application domain are different to the ones of the current work.

\section{Alternating Compressed / DeCompressed (AC/DC) Training}
\label{sec:method}

\subsection{Background and Assumptions}
\label{subsec:theory}

Obtaining \emph{sparse} solutions to optimization problems is a problem of interest in several areas~\citep{candes2006near, blumensath2008iterative, foucart2011hard}, where the goal is to minimize a function $f:\mathbb{R}^N \rightarrow \mathbb{R}$ under sparsity constraints:
\begin{equation}
    \min_{\theta \in \mathbb{R}^N} f(\theta) \quad \text{s.t.} \quad \| \theta \|_0 \leq k\,. 
\label{eq:optim_l0}
\end{equation}
For the case of $\ell_2$ regression, $f(\theta)  = \| b - A\theta\|^2_2$, a solution has been provided by Blumensath and Davies~\cite{blumensath2008iterative}, known as the \emph{Iterative Hard Thresholding (IHT)} algorithm, and subsequent work \citep{foucart2011hard, foucart2012sparse, yuan2014gradient} provided theoretical guarantees for the linear operators used in compressed sensing. 
The idea consists of alternating gradient descent (GD) steps and applications of a thresholding operator to ensure the $\ell_0$ constraint is satisfied. 
More precisely, $T_k$ is defined as the ``top-k'' operator, which keeps the largest $k$ entries  of a vector $\theta$ in absolute value, and replaces the rest with $0$. The IHT update at step $t+1$ has the following form:
\begin{equation}
    \theta_{t + 1} = T_k(\theta_t - \eta \nabla f(\theta_t)).
\label{eq:iht-step}
\end{equation}

Most convergence results for IHT assume deterministic gradient descent steps. For DNNs, stochastic methods are preferred, so we describe and analyze a stochastic version of IHT.

\parhead{Stochastic IHT.} 
We  consider functions $f: \mathbb{R}^N \rightarrow \mathbb{R}$, for which we can compute stochastic gradients $g_{\theta}$, which are unbiased estimators of the true gradient $\nabla f(\theta)$. Define the \emph{stochastic} IHT update as:
\begin{equation}
    \theta_{t+1} = T_k(\theta_t - \eta g_{\theta_t}).
\label{eq:stochastic-iht}
\end{equation}

This formulation covers the practical case where the stochastic gradient $g_{\theta}$ corresponds to a \emph{mini-batch} stochastic gradient. Indeed, as in practice $f$ takes the form $f(\theta) = \frac{1}{m}\sum_{i=1}^m f(\theta; x_i)$,  where  $S=\{x_1, \ldots, x_m\}$ are data samples, the stochastic gradients obtained via backpropagation take the form $\frac{1}{\vert B\vert}\sum_{i \in B} \nabla f(\theta; x_i)$, where $B$ is a sampled mini-batch. 
We aim to prove strong convergence bounds for stochastic IHT, under common assumptions that arise in the context of training DNNs. 

\parhead{Analytical Assumptions.} Formally, our analysis uses the following assumptions on $f$.
\begin{enumerate}
    \item Unbiased gradients with variance $\sigma$:
       $\mathbb{E}[g_{\theta} | \theta] = \nabla f(\theta),\textnormal{ and } \, \mathbb{E}[ \| g_{\theta} - \nabla f(\theta) \|^2 ] \leq \sigma^2\,.$
    \item Existence of a $k^*$-sparse minimizer $\theta^*$:
        $\exists \theta^* \in \arg\min_{\theta} f(\theta), \textnormal{ s.t. } \|\theta^*\|_0 \leq k^*\,.$
    \item For $\beta > 0$, the $\beta$-smoothness condition when restricted to $t$ coordinates ($(t,\beta)$-smoothness):
    \begin{equation} 
    f(\theta + \delta) \leq f(\theta) + \nabla f(\theta)^\top \delta + \frac{\beta}{2} \|\delta\|^2, \, \textnormal{ for all } \theta, \delta \textnormal{ s.t. } \|\delta\|_0 \leq t\,.
    \end{equation}
    \item For $\alpha > 0$ and number of indices $r$, the \emph{$r$-concentrated Polyak-\L ojasiewicz ($(r, \alpha)$-CPL) condition}:
    \begin{equation}\label{eq:conc-pl}
        \|T_{r}(\nabla f(\theta))\| \geq \frac{\alpha}{2} (f(\theta) - f(\theta^*))\,, \textnormal{ for all } \theta.
    \end{equation}
\end{enumerate}

The first assumption is standard in stochastic optimization, while the existence of very sparse minimizers is a known property in over-parametrized DNNs~\citep{frankle2018lottery}, and is the very premise of our study. 
Smoothness is also a standard assumption, e.g.~\citep{lin2019dynamic}---we only require it along \emph{sparse} directions, which is a strictly weaker assumption. 
The more interesting requirement for our convergence proof is the $(r, \alpha)$-CPL condition in  Equation (\ref{eq:conc-pl}), which we now discuss in detail.

The standard Polyak-\L ojasiewicz (PL) condition~\citep{karimi2016linear}  is common in non-convex optimization, and versions of it are essential in the analysis of DNN training~\citep{liu2020toward, allen2019convergence}. 
Its standard form states that small gradient norm, i.e. approximate stationarity, implies closeness to optimum in function value. 
We require a slightly stronger version, in terms of the norm of the gradient contributed by its largest coordinates in absolute value. This restriction appears necessary for the success of IHT methods, as the sparsity enforced by the truncation step automatically reduces the progress ensured by a gradient step to an amount proportional to the norm of the top-$k$ gradient entries. This strengthening of the PL condition is supported both theoretically, by the mean-field view, which argues that gradients are sub-gaussian~\citep{shevchenko2020landscape}, and by empirical validations of this behaviour~\citep{alistarh2018convergence, shi2019understanding}.

We are now ready to state our main analytical result. 

\begin{restatable}{thm}{mainthm}
\label{thm:iht-pl}
Let $f:\mathbb{R}^N \rightarrow \mathbb{R}$ be a function with a $k^*$-sparse minimizer $\theta^*$. 
Let $\beta > \alpha > 0$ be parameters, let $k = C\cdot k^* \cdot (\beta/\alpha)^2$ for some appropriately chosen constant $C$, and suppose that $f$ is $(2k+3k^*,\beta)$-smooth and $(k^*, \alpha)$-CPL.
For initial parameters $\theta_{0}$ and precision $\epsilon>0$, 
given access to stochastic gradients with variance $\sigma$,
stochastic IHT~(\ref{eq:stochastic-iht}) converges
in $O\left(\frac{\beta}{\alpha}\cdot\ln\frac{f\left(\theta_{0}\right)-f\left(\theta^{*}\right)}{\epsilon}\right)$
iterations to a point $\theta$ with $\|\theta\|_0\leq k$,
such that
\vspace{-0.2cm}
\begin{equation*}
    \mathbb{E}\left[f\left(\theta\right)-f\left(\theta^{*}\right)\right]\leq\epsilon+\frac{16\sigma^{2}}{\alpha}.
\end{equation*}
\end{restatable}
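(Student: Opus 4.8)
The plan is to reduce everything to a single one-step recursion of the form
$\mathbb{E}[f(\theta_{t+1}) - f(\theta^*)] \le (1 - \Omega(\alpha/\beta))\,\mathbb{E}[f(\theta_t) - f(\theta^*)] + O(\sigma^2/\beta)$,
and then unroll it as a geometric series. The sparsity conclusion $\|\theta\|_0 \le k$ is immediate, since every iterate is an image of $T_k$. Granting the recursion, the transient term $\big(1-\Omega(\alpha/\beta)\big)^T\big(f(\theta_0)-f(\theta^*)\big)$ drops below $\epsilon$ once $T = O\!\big(\tfrac{\beta}{\alpha}\ln\tfrac{f(\theta_0)-f(\theta^*)}{\epsilon}\big)$, because $-\ln(1-\Omega(\alpha/\beta)) \approx \alpha/\beta$; and the equilibrium value of the geometric sum is of order $(\sigma^2/\beta)/(\alpha/\beta) = \sigma^2/\alpha$, matching the claimed $16\sigma^2/\alpha$. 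So the entire theorem rests on establishing the contraction.

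First I would treat the deterministic step to isolate the geometry. Set $\eta = \Theta(1/\beta)$, let $v_t = \theta_t - \eta\nabla f(\theta_t)$, so $\theta_{t+1} = T_k(v_t)$. Since $\theta_{t+1}-\theta_t$ is $2k$-sparse, I apply $(2k+3k^*,\beta)$-smoothness and complete the square to get $f(\theta_{t+1}) - f(\theta_t) \le \tfrac{\beta}{2}\|\theta_{t+1}-v_t\|^2 - \tfrac{1}{2\beta}\|\nabla f(\theta_t)\|^2$, where the negative term is the gradient progress and the first term is the energy lost to truncating $v_t$ to its top $k$ entries. The key geometric lemma is that, because $T_k$ keeps the $k$ largest coordinates while the comparison target $\theta^*$ is only $k^*$-sparse, the truncation loss $\|\theta_{t+1}-v_t\|^2$ is controlled by an $O(k^*/k)$ fraction of the relevant gradient energy; the extra $3k^*$ in the smoothness order accounts for the sparse optimizer $\theta^*$ and the truncated gradient directions entering this comparison. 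Choosing $k = Ck^*(\beta/\alpha)^2$ makes that fraction small enough that the truncation loss is dominated by the progress, leaving a net decrease proportional to $\tfrac{1}{\beta}\|T_{k^*}(\nabla f(\theta_t))\|^2$. I then invoke the $(k^*,\alpha)$-CPL condition, which lower-bounds the top-$k^*$ gradient energy by a quantity proportional to $\alpha$ times the suboptimality gap, converting this additive progress into the multiplicative contraction $f(\theta_{t+1})-f(\theta^*) \le (1-\Omega(\alpha/\beta))(f(\theta_t)-f(\theta^*))$.

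To pass to stochastic gradients I would redo the same computation with $v_t = \theta_t - \eta g_{\theta_t}$, keeping the truncation-optimality inequality written in terms of $g_{\theta_t}$ and only at the end substituting $g_{\theta_t} = \nabla f(\theta_t) + (g_{\theta_t}-\nabla f(\theta_t))$ and using $\mathbb{E}[g_{\theta_t}\mid\theta_t]=\nabla f(\theta_t)$ together with $\mathbb{E}\|g_{\theta_t}-\nabla f(\theta_t)\|^2 \le \sigma^2$. The main obstacle is exactly this substitution: because $\theta_{t+1} = T_k(\theta_t - \eta g_{\theta_t})$ is a \emph{nonlinear}, thresholded function of the noisy gradient, the cross term $\langle \nabla f(\theta_t)-g_{\theta_t},\,\theta_{t+1}-\theta_t\rangle$ does \emph{not} have zero conditional expectation and cannot simply be discarded as in vanilla SGD. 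I would tame it with a Young's-inequality split, tuning the constant so that the quadratic piece is absorbed into the descent term while the residual contributes only $O(\sigma^2)$ in expectation per step; getting this balance right \emph{without} eroding the $1-\Omega(\alpha/\beta)$ contraction factor is the delicate point, and it is what forces the noise to enter at order $\eta\sigma^2 = \sigma^2/\beta$ per step, hence $\sigma^2/\alpha$ after summation. Taking total expectations, applying the one-step recursion inductively, and summing the geometric series then yields the stated iteration count and the $16\sigma^2/\alpha$ equilibrium term.
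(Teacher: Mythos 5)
Your overall architecture matches the paper's: a one-step expected contraction $\mathbb{E}[f(\theta_{t+1})-f^*] \le (1-\Omega(\alpha/\beta))(f(\theta_t)-f^*) + O(\sigma^2/\beta)$ obtained from restricted smoothness with completion of the square, a comparison of the truncation loss against the $k^*$-sparse minimizer, the CPL condition to lower-bound the progress term, and a triangle-inequality/Young split to absorb the gradient noise (the paper likewise uses $(a+b)^2\le 2a^2+2b^2$ and sets $\eta=1/(2\beta)$ so the noise enters only as $O(\sigma^2/\beta)$ per step). The sparsity claim and the geometric-series unrolling are as you say.

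There is, however, a genuine gap in the middle of your argument. You assert that the truncation loss $\|\theta_{t+1}-v_t\|^2$ is controlled by an $O(k^*/k)$ fraction of ``the relevant gradient energy.'' That is not what the sparse-projection comparison gives, and if it were true a constant ratio $k/k^*$ would already suffice, which contradicts your own choice $k=Ck^*(\beta/\alpha)^2$. What the paper's Lemma~\ref{lem:sparse-proj} and Corollary~\ref{cor:prog} actually give is that the truncation loss, after subtracting a gradient term supported on $S^*$, is at most $\tfrac{3k^*}{k-k^*}$ times the \emph{squared distance} $\|(\theta_t-\eta\nabla f(\theta_t))_T-\theta^*\|^2$ to the sparse minimizer, for a set $T$ of size $O(k+k^*)$ containing the support of $\theta^*$. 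To close the recursion you must then convert this distance back into the function gap, and this requires a second, distinct use of the PL structure: the quadratic-growth consequence $\|\theta-\theta^*\|^2\le\tfrac{8}{\alpha}\left(f(\theta)-f^*\right)$ (Lemma~\ref{lem:conPLQG}), applied together with smoothness at the shifted point. It is precisely the mismatch between this $\tfrac{1}{\alpha}(f-f^*)$ upper bound on the truncation penalty and the $\tfrac{\alpha}{\beta^2}(f-f^*)$ lower bound on the progress (from CPL) that forces $k/k^*=\Omega((\beta/\alpha)^2)$. Your sketch invokes CPL only once, for the progress term; without the quadratic-growth step the truncation penalty is not controlled by the suboptimality gap and the contraction does not follow.
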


Assuming a fixed objective function $f$ and tolerance $\epsilon$, we can obtain lower loss and faster running time by either 1) increasing the support $k$ demanded from our approximate minimizer $\theta$ relative to the optimal $k^*$, or by reducing the gradient variance. 
We provide a complete proof of this result in the Appendix. Our analysis approach also works in the absence of the CPL condition (Theorem~\ref{thm:iht-nonconv}), 
in which case we prove that a version of the algorithm can find sparse nearly-stationary points.
As a bonus, we also simplify existing analyses for IHT and extend them to the stochastic case (Theorem~\ref{thm:vanilla-iht-stoch}). 
Another interpretation of our results is in showing that, under our assumptions, \emph{error feedback}~\citep{lin2019dynamic} is not necessary for recovering good sparse minimizers; this has practical implications, as it allows us to perform fully-sparse back-propagation in sparse optimization phases. 
Next, we discuss our practical implementation, and its connection to these theoretical results.

\begin{figure*}[t]
    \centering
        \includegraphics[height=0.9in]{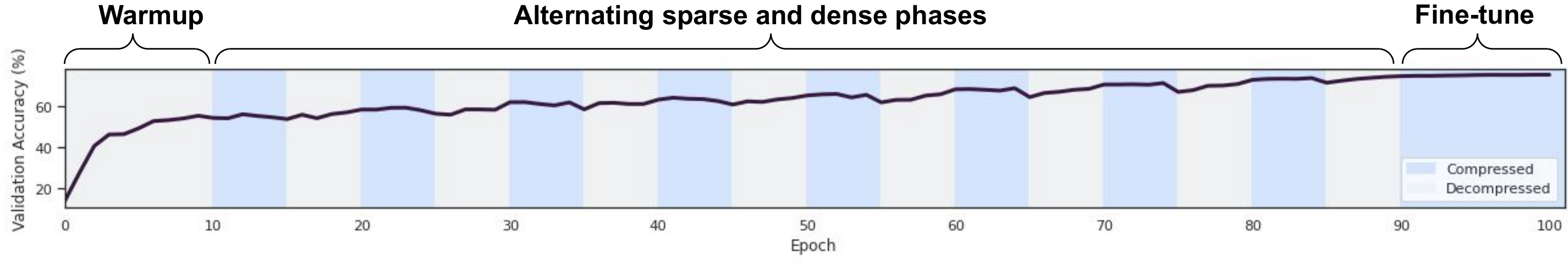}
        \caption{The AC/DC training process. After a short warmup we alternatively prune to maximum sparsity and restore the pruned weights. The plot shows the sparsity and validation accuracy throughout the process for a sample run on ResNet50/ImageNet at 90\% sparsity.}
        \label{fig:ac-dc}
\end{figure*}

\begin{algorithm}
\small{
\caption{Alternating Compressed/Decompressed (AC/DC) Training \label{alg:iht}}

\begin{algorithmic}[1]

\Require Weights $\theta \in \mathbb{R}^N$, data $S$, sparsity $k$, compression phases $\mathcal{C}$, decompression phases $\mathcal{D}$

\State Train the weights $\theta$ for $\Delta_w$ epochs \Comment{Warm-up phase}
\While{epoch $\leq$ max epochs}
    \If{entered a compression phase}  
    \State $\theta \leftarrow T_k(\theta, k)$ \Comment{apply compression (top-k) operator on weights}
    \State $m \leftarrow \mathbbm{1}[\theta_i \neq 0]$ \Comment{create masks}
    \EndIf
         
    \If{entered a decompression phase}
    \State $m \leftarrow \mathbbm{1}_N$ \Comment{reset all masks}
    \EndIf
    \State $\theta \leftarrow \theta \odot m$ \Comment{apply the masks (ensure sparsity for compression phases)}
    \State $\tilde{\theta} \leftarrow \{\theta_i | m_i \neq 0, 1\leq i \leq N \}$ \Comment{get the support for the gradients}
    \For{x mini-batch in $S$}
        \State $\theta \leftarrow \theta - \eta \nabla_{\tilde{\theta}} f(\theta; x)$ \Comment{optimize the active weights}
    \EndFor
    \State epoch $\leftarrow$ epoch $+1$
\EndWhile
\State \Return $\theta$
\end{algorithmic}
}
\end{algorithm}

\subsection{AC/DC: Applying IHT to Deep Neural Networks}

AC/DC starts from a standard DNN training flow, using standard optimizers such as SGD with momentum~\citep{qian1999momentum} or Adam~\citep{kingma2014adam}, and it preserves all standard training hyper-parameters. It will only periodically modify the \emph{support} for optimization. 
Please see Algorithm~\ref{alg:iht} for pseudocode. 

We partition the set of training epochs into \emph{compressed} epochs $\mathcal{C}$, and \emph{decompressed} epochs $\mathcal{D}$. We begin with a \emph{dense warm-up} period of $\Delta_w$ consecutive epochs, during which regular dense (decompressed) training is performed. 
We then start alternating \emph{compressed optimization} phases of length $\Delta_c$ epochs each, with \emph{decompressed (regular) optimization} phases of length $\Delta_d$ epochs each. The process completes on a compressed fine-tuning phase, returning an accurate sparse model. 
Alternatively, if our goal is to return a dense model matching the baseline accuracy, we take the best dense checkpoint obtained during alternation, and fine-tune it over the entire support. In practice, we noticed that allowing a longer final decompressed phase of length $\Delta_D > \Delta_d$ improves the performance of the dense model, by allowing it to better recover the baseline accuracy after fine-tuning. 
Please see Figure~\ref{fig:ac-dc} for an illustration of the schedule.

In our experiments, we focus on the case where the compression operation is unstructured or semi-structured pruning. In this case, at the beginning of each sparse optimization phase, we apply the top-k operator across all of the network weights to obtain a mask $M$ over the weights $\theta$. The top-k operator is applied globally across all of the network weights, and will represent the sparse support over which optimization will be performed for the rest of the current sparse phase. 
At the end of the sparse phase, the mask $M$ is reset to all-$1$s, so that the subsequent dense phase will optimize over the full dense support. Furthermore, once all weights are re-introduced, it is beneficial to reset to $0$ the gradient momentum term of the optimizer; this is particularly useful for the weights that were previously pruned, which would otherwise have stale versions of gradients.

\parhead{Discussion}.
Moving from IHT to a robust implementation in the context of DNNs required some adjustments. 
First, each \emph{decompressed  phase} can be directly mapped to a \emph{deterministic/stochastic IHT} step, where, instead of a single gradient step in between consecutive truncations of the support, we perform several stochastic steps. These additional steps improve the accuracy of the method in practice, and we can bound their influence in theory as well, although they do not necessarily provide better bounds. 
This leaves open the interpretation of the \emph{compressed phases}: for this, notice that the core of the proof for Theorem~\ref{thm:iht-pl} is in showing that a single IHT step significantly decreases the expected value of the objective; using a similar argument, we can prove that additional optimization steps over the sparse support can only improve convergence. 
Additionally, we show convergence for a variant of IHT closely following AC/DC (please see Corollary~\ref{cor:iht-acdc} in the Supplementary Material), but the bounds do not improve over Theorem~\ref{thm:iht-pl}. However, this additional result confirms that the good experimental results obtained with AC/DC are theoretically motivated.

\section{Experimental Validation}
\label{sec:experiments}

\parhead{Goals and Setup.} We tested AC/DC on image classification tasks (CIFAR-100~\citep{cifar100} and ImageNet~\citep{imagenet}) and on language modelling tasks~\citep{wikitext103} using the Transformer-XL model \citep{dai2019transformer}. The goal is to examine the \emph{validation accuracy} of the resulting sparse and dense models, versus the induced sparsity, as well as the number of FLOPs used for training and inference, relative to other sparse training methods. Additionally, we compare to state-of-the-art post-training pruning methods~\citep{singh2020woodfisher}. 
We also examine prediction differences between the sparse and dense models. 
We use PyTorch \citep{pytorch} for our implementation, Weights \& Biases \citep{wandb} for experimental tracking, and NVIDIA GPUs for training. 
All reported image classification experiments were performed in triplicate by varying the random seed; we report mean and standard deviation. 
Due to computational limitations, the language modelling experiments were conducted in a single run.

\parhead{ImageNet Experiments.}
On the ImageNet dataset~\citep{imagenet}, we test AC/DC on ResNet50 \citep{he2016deep} and MobileNetV1 \citep{howard2017mobilenets}. In all reported results, the models were trained for a fixed number of 100 epochs, using SGD with momentum. We use a cosine learning rate scheduler and training hyper-parameters following  \cite{kusupati2020soft}, but without label smoothing. The models were trained and evaluated using mixed precision (FP16). On a small subset of experiments, we noticed differences in accuracy of up to 0.2-0.3\% between AC/DC trained with full or mixed precision. However, the differences in evaluating the models with FP32 or FP16 are negligible (less than 0.05\%). Our dense ResNet50 baseline has $76.84\%$ validation accuracy. 
Unless otherwise specified, weights are pruned globally, based on their magnitude and in a single step. Similar to previous work, we did not prune biases, nor the Batch Normalization parameters. The sparsity level is computed with respect to all the parameters, except the biases and Batch Normalization parameters and this is consistent with previous work \cite{evci2020rigging, singh2020woodfisher}.

For all results, the AC/DC training schedule starts with a ``warm-up'' phase of dense training for 10 epochs, after which we alternate between compression and de-compression every 5 epochs, until the last dense and sparse phase. It is beneficial to allow these last two ``fine-tuning'' phases to run longer: 
the last decompression phase runs for 10 epochs, whereas the final 15 epochs are the compression fine-tuning phase. 
We reset SGD momentum at the beginning of every decompression phase. 
In total, we have an equal number of epochs of dense and sparse training; see Figure~(\ref{fig:valacc-rn50}) for an illustration. We use exactly the same setup for both ResNet50 and MobileNetV1 models, which resulted in high-quality sparse models. 
To recover a dense model with baseline accuracy using AC/DC, we finetune the best dense checkpoint obtained during training; practically, this replaces the last \emph{sparse} fine-tuning phase with a phase where the \emph{dense} model is fine-tuned instead.

\begin{minipage}[c]{0.5\textwidth}
\centering
\captionof{table}{\small{ResNet50/ImageNet, medium sparsity results.}}
\label{table:medium-sparse-rn50}
\vspace{-0.2cm}
\scalebox{0.63}{%
\begin{tabular}{@{}ccccc@{}}
\toprule
Method   & \begin{tabular}[c]{@{}c@{}}Sparsity\\ ($\%$)\end{tabular}  & \begin{tabular}[c]{@{}c@{}}Top-1\\ Acc. ($\%$)\end{tabular} &  \begin{tabular}[c]{@{}c@{}}GFLOPs\\ Inference\end{tabular} & \begin{tabular}[c]{@{}c@{}}EFLOPs\\ Train\end{tabular} \\ \midrule 
Dense    & $0$      & $76.84$ &  $8.2$ & $3.14$ \\ \midrule \midrule
\bf{AC/DC}  & $80$   & $76.3 \pm 0.1$  &  $0.29 \times$ & $0.65 \times$ \\
RigL$_{1\times}$  & $80$  & $74.6 \pm 0.06$   & $0.23 \times$ & $0.23 \times$  \\
RigL$_{1\times}$(ERK)  & $80$  & $75.1 \pm 0.05$   & $0.42 \times$ & $0.42 \times$  \\
Top-KAST & $80$ fwd, $50$ bwd & $75.03$  & $0.23\times$ & $0.32\times$ \\
\hline
STR  & $79.55$    & $76.19$  & $0.19 \times$ & -    \\
\bf{WoodFisher} & 80 & $\bf{76.76}$  & $0.25 \times$ & - \\
\midrule
\midrule
\textbf{AC/DC}   & $90$     & $75.03 \pm 0.1 $   & $0.18 \times$ & $0.58 \times$ \\
RigL$_{1\times}$   & $90$     & $72.0 \pm 0.05$ & $0.13 \times$  & $0.13 \times$  \\
RigL$_{1\times}$ (ERK)   & $90$     & $73.0 \pm 0.04$ & $0.24 \times$  & $0.25 \times$  \\
Top-KAST  & $90$ fwd, $80$ bwd   & $74.76$   & $0.13\times$ & $0.16\times$   \\
\hline
STR  & $90.23$ & $74.31$  & $0.08 \times$  & -  \\
\bf{WoodFisher} & $90$  & $\bf{75.21}$ & $0.15 \times$  & -   \\ 
\bottomrule 
\end{tabular}}
\end{minipage}
\hspace{0.15cm}
\begin{minipage}[c]{0.46\textwidth}
\captionof{table}{\small{ResNet50/ImageNet, high sparsity results.}}
\label{table:high-sparse-rn50}
\vspace{-0.2cm}
\scalebox{0.62}{%
\begin{tabular}{@{}ccccc@{}}
\toprule
Method   & \begin{tabular}[c]{@{}c@{}}Sparsity\\ ($\%$)\end{tabular} & \begin{tabular}[c]{@{}c@{}}Top-1\\ Acc. ($\%$)\end{tabular} & \begin{tabular}[c]{@{}c@{}}GFLOPs\\ Inference\end{tabular} & \begin{tabular}[c]{@{}c@{}}EFLOPs\\ Train\end{tabular} \\ \midrule 
Dense  & $0$  & $76.84$       & $8.2$ & $3.14$ \\ \midrule \midrule
\bf{AC/DC}   & $95$    & $\bf{73.14} \pm 0.2 $    & $0.11 \times$ & $0.53 \times$ \\
RigL$_{1\times}$   & $95$   & $67.5 \pm 0.1$  & $0.08 \times$  & $0.08 \times$  \\
RigL$_{1\times}$ (ERK)   & $95$   & $69.7 \pm 0.17$  & $0.12 \times$  & $0.13 \times$  \\
Top-KAST  & $95$ fwd, $50$ bwd   & $71.96$   & $0.08\times$  & $0.22\times$   \\
\hline
STR  & $94.8$ & $70.97$  & $0.04 \times$  & -  \\
WoodFisher & $95$ & $72.12$  &  $0.09 \times$  & -   \\ 
\midrule \midrule
\bf{AC/DC}    & $98$     & $\bf{68.44} \pm 0.09 $  & $0.06 \times$ & $0.46 \times$ \\
Top-KAST & $98$ fwd, $90$ bwd & 67.06 & $0.05\times$ & $0.08 \times$ \\
\hline
STR  & $97.78$ & $62.84$  & $0.02 \times$  & -  \\
WoodFisher & $98$ & $65.55$  & $0.05 \times$  & -   \\
\bottomrule
\end{tabular}}
\end{minipage}

\vspace{0.1cm}

\parhead{ResNet50 Results.} 
Tables~\ref{table:medium-sparse-rn50}\&~\ref{table:high-sparse-rn50} contain the validation accuracy results across medium and high global sparsity levels, as well as inference and training FLOPs. Overall, AC/DC achieves higher validation accuracy than any of the state-of-the-art sparse training methods, when using the same number of epochs. 
At the same time, due to dense training phases, AC/DC has
higher FLOP requirements relative to RigL or Top-KAST at the same sparsity. 
At medium sparsities (80\% and 90\%), AC/DC sparse models are slightly less accurate than the state-of-the-art post-training methods (e.g. WoodFisher), by small margins. 
The situation is reversed at higher sparsities, where AC/DC produces more accurate models: the gap to the second-best methods (WoodFisher / Top-KAST) is of more than 1\% at 95\% and 98\% sparsity.

\begin{figure*}[t]
    \centering
    \begin{subfigure}[t]{0.5\textwidth}
        \centering
        \includegraphics[height=1.2in]{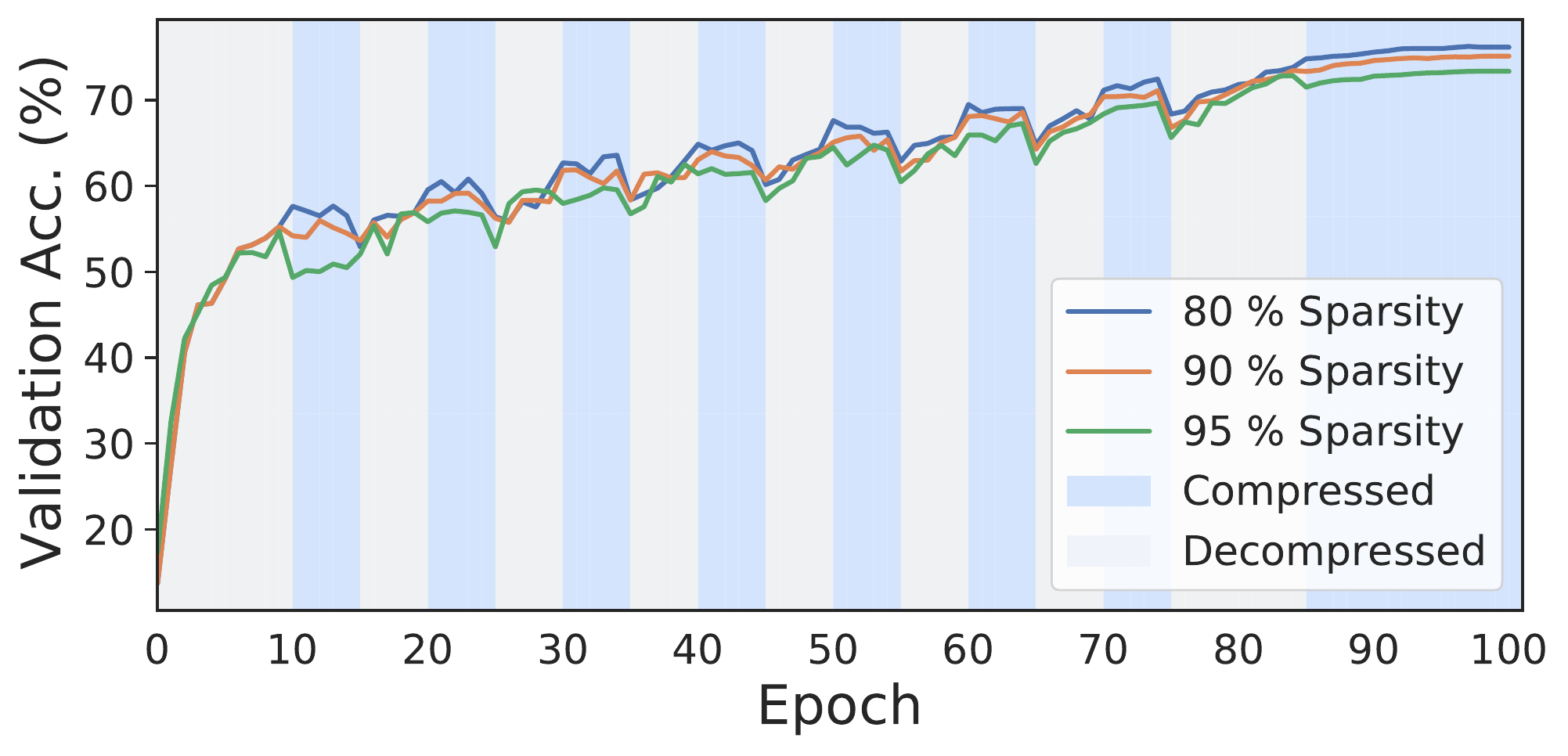}
        \caption{Sparsity pattern and validation accuracy vs. number of epochs (ResNet50/ImageNet).}
        \label{fig:valacc-rn50}
    \end{subfigure}%
    ~ 
    \begin{subfigure}[t]{0.5\textwidth}
        \centering
        \includegraphics[height=1.2in]{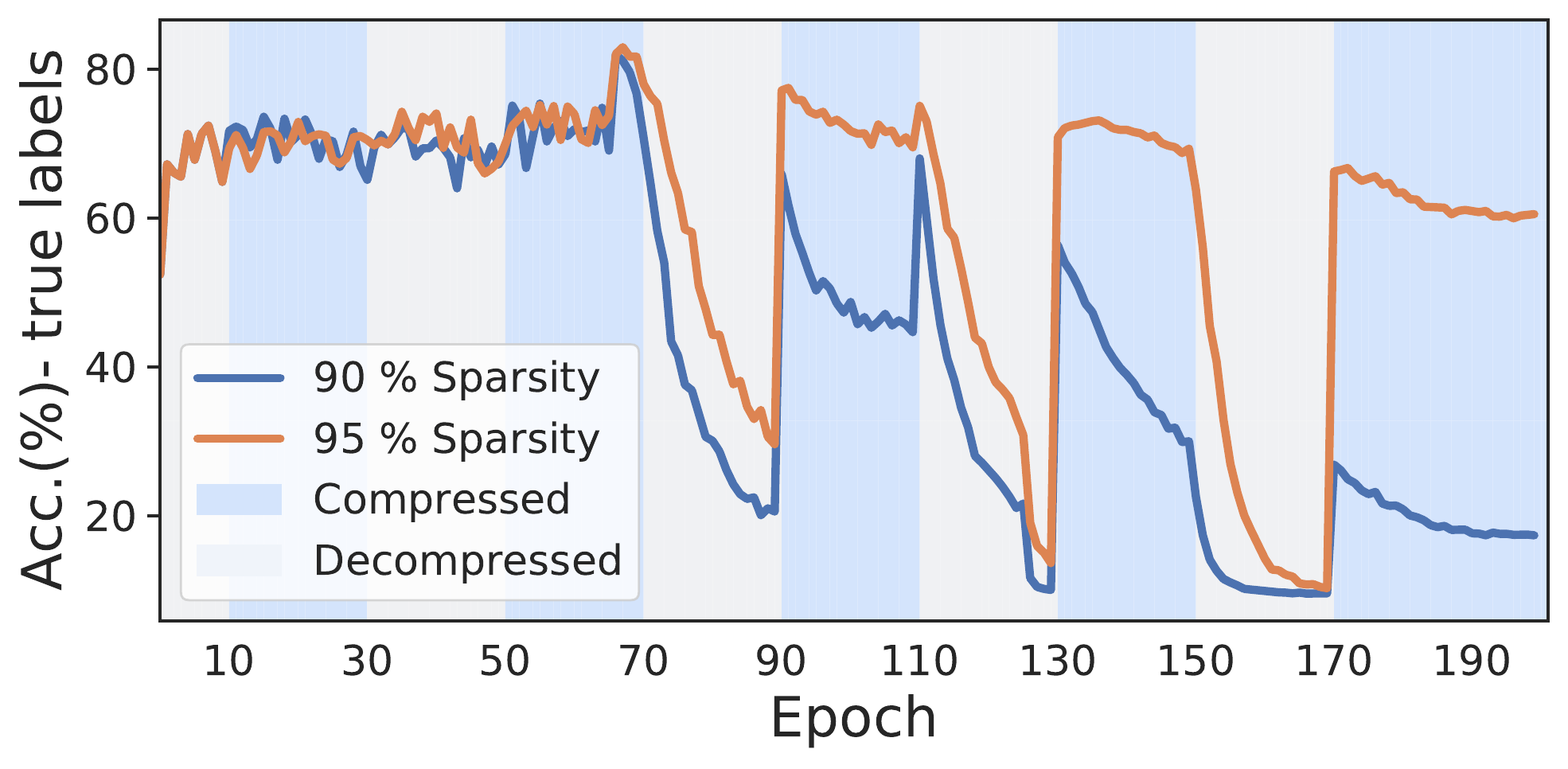}
        \caption{Percentage of samples with corrupted training labels classified to their \emph{true} class (ResNet20/CIFAR10).}
        \label{fig:random-correct-no-da-main}
    \end{subfigure}%
    \caption{Accuracy vs. sparsity during training, for the ResNet50/ImageNet experiment (left) and accuracy on the corrupted samples for ResNet20/CIFAR10, w.r.t. the \emph{true} class (right).  
    }
    \label{fig:acc-masks-rn50}
\end{figure*}

Of the existing sparse training methods, Top-KAST is closest in terms of validation accuracy to our sparse model, at 90\% sparsity. However, Top-KAST does not prune the first and last layers, whereas the results in the tables do not restrict the sparsity pattern. 
For fairness, we executed AC/DC using the same layer-wise sparsity distribution as Top-KAST, for both uniform and global magnitude pruning. For $90\%$ global pruning, results for AC/DC improved; the best sparse model reached 75.64\% validation accuracy (0.6\% increase over Table~\ref{table:medium-sparse-rn50}), while the best dense model had 76.85\% after fine-tuning. For uniform sparsity, our results were very similar: 75.04\% validation accuracy for the sparse model and 76.43\% - for the fine-tuned dense model. We also note that Top-KAST has better results  
at 98\% when increasing the number of training epochs 2 times, and considerably fewer training FLOPs (e.g. $15\%$ of the dense FLOPs).  
For fairness, we compared against all methods on a fixed number of 100 training epochs and we additionally trained AC/DC at high sparsity without pruning the first and last layers. Our results improved to $74.16\%$ accuracy for 95\% sparsity, and $71.27\%$ for 98\% sparsity, both surpassing Top-KAST with prolonged training. We provide a more detailed comparison in the Supplementary Material, which also contains results on CIFAR-100.

An advantage of AC/DC is that it provides \emph{both}  sparse and dense models at cost \emph{below} that of a single dense training run. 
For medium sparsity, the accuracy of the dense-finetuned model is very close to the dense baseline. 
Concretely, at 90\% sparsity, with 58\% of the total (theoretical) baseline training FLOPs, we obtain a \emph{sparse} model which is close to state of the art; 
in addition, by fine-tuning the best dense model, we obtain a dense model with  $76.56\%$ (average) validation accuracy. The whole process takes  at most 73\% of the baseline training FLOPs. In general, for 80\% and 90\% target sparsity, the dense models derived from AC/DC are able to recover the baseline accuracy, after finetuning, defined by replacing the final compression phase with regular dense training. The complete results are presented in the Supplementary Material, in Table~\ref{table:dense-rn50}.

The sparsity distribution over layers does not change dramatically during training; yet, the dynamic of the masks has an important impact on the performance of AC/DC. Specifically, we observed that masks update over time, although the change between consecutive sparse masks decreases. Furthermore, a small percentage of the weights remain fixed at $0$ even during dense training, which is explained by filters that are pruned away during the compressed phases.
Please see the Supplementary Material for additional results and analysis.

We additionally compare AC/DC with Top-KAST and RigL, in terms of the validation accuracy achieved depending on the number of training FLOPs. We report results at uniform sparsity, which ensures that the inference FLOPs will be the same for all methods considered. For AC/DC and Top-KAST, the first and last layers are kept dense, whereas for RigL, only the first layer is kept dense; however, this has a negligible impact on the number of FLOPs. Additionally, we experiment with extending the number of training iterations for AC/DC at 90\% and 95\% sparsity two times, similarly to Top-KAST and RigL which also provide experiments for extended training. The comparison between AC/DC, Top-KAST and RigL presented in Figure \ref{fig:flops-vs-acc-rn50} shows that AC/DC is similar or surpasses Top-KAST 2x at 90\% and 95\% sparsity, and RigL 5x at 95\% sparsity both in terms of training FLOPs and validation accuracy. Moreover, we highlight that extending the number of training iterations two times results in AC/DC models with uniform sparsity that surpass all existing methods at both 90\% and 95\% sparsity; namely, we obtain 76.1\% and 74.3\% validation accuracy with 90\% and 95\% uniform sparsity, respectively. 

\begin{figure*}[t]
    \centering
    \includegraphics[height=1.4in]{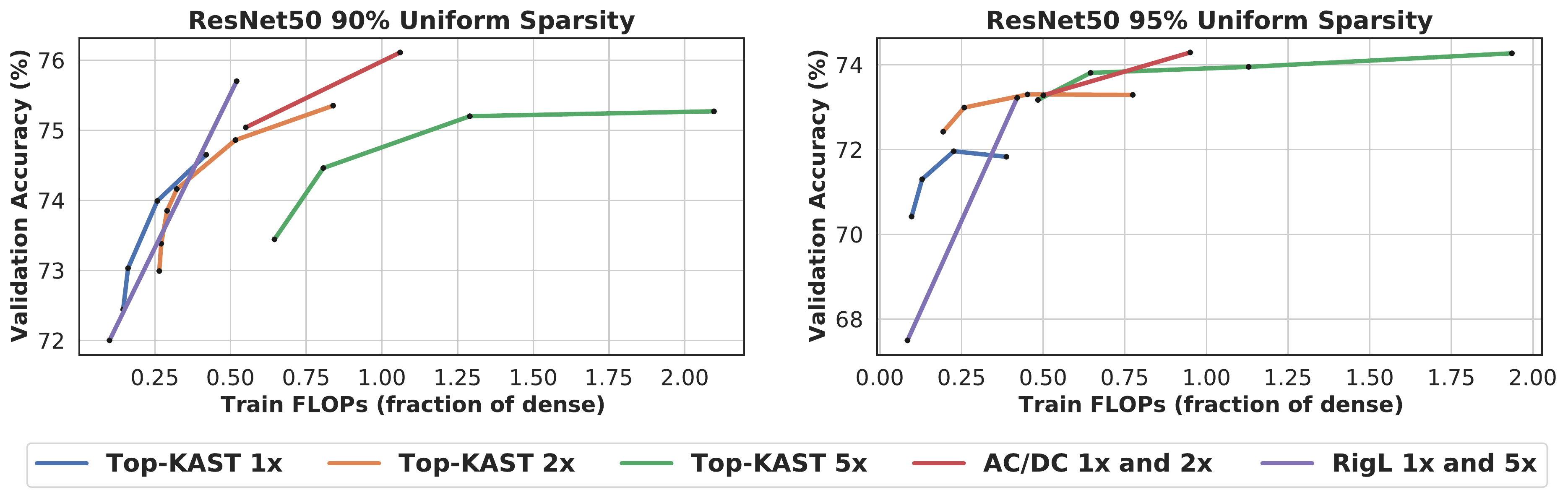}
    \caption{Training FLOPs vs validation accuracy for AC/DC, RigL and Top-KAST, with uniform sparsity, at 90\% and 95\% sparsity levels. (ResNet50/ImageNet).}
    \label{fig:flops-vs-acc-rn50}
\end{figure*}

Compared to purely sparse training methods, such as Top-KAST or RigL, AC/DC requires dense training phases. The length of the dense phases can be decreased, with a small impact on the accuracy of the sparse model. Specifically, we use dense phases of two instead of five epochs in length, and we no longer extend the final decompressed phase prior to the finetuning phase. For 90\% global sparsity, this resulted in 74.6\% validation accuracy for the sparse model, using 44\% of the baseline FLOPs. Similarly, for uniform sparsity, we obtain 74.7\% accuracy on the 90\% sparse model, with 40\% of the baseline FLOPs; this value can be further improved to 75.8\% validation accuracy when extending two times the number of training iterations. Furthermore, at 95\% uniform sparsity, we reach 72.8\% accuracy with 35\% of the baseline training FLOPs.

\parhead{MobileNet Results.} We perform the same experiment, using exactly the same setup, on the MobileNetV1 architecture~\citep{howard2017mobilenets}, which is compact and thus harder to compress. On a training budget of 100 epochs, our method finds sparse models with higher Top-1 validation accuracy than existing sparse- and post-training methods, on both 75\% and 90\% sparsity levels (Table~\ref{table:sparse-mobnet}). Importantly, AC/DC uses exactly the same hyper-parameters used for training the dense baseline~\citep{kusupati2020soft}. 
Similar to ResNet50, at  75\% sparsity, the \emph{dense-finetuned model} recovers the baseline performance, while for 90\% it is less than 1\% below the baseline.
The only method which obtains higher accuracy for the same sparsity is the version of RigL~\citep{evci2020rigging} which executes for 5x more training epochs than the dense baseline. 
However, this version also uses more computation than the dense model.
We limit ourselves to a fixed number of 100 epochs, the same used to train the dense baseline, which would allow for savings in training time. 
Moreover, RigL does not prune the first layer and the depth-wise convolutions, whereas for the results reported we do not impose any sparsity restrictions. Overall, we found that keeping these layers dense improved our results on 90\% sparsity by almost 0.5\%. Then, our results are quite close to RigL$_{2\times}$, with half the training epochs, and less training FLOPs. We provide a more detailed comparison in the Supplementary Material.

\begin{minipage}[c]{0.45\textwidth}
\centering
\captionof{table}{\small{MobileNetV1/ImageNet sparsity results}}
\vspace{-0.2cm}
\label{table:sparse-mobnet}
\scalebox{0.65}{%
\begin{tabular}{@{}ccccc@{}}
\toprule
Method   & \multicolumn{1}{c}{\begin{tabular}[c]{@{}c@{}}Sparsity \\ (\%)\end{tabular}}  & \multicolumn{1}{c}{\begin{tabular}[c]{@{}c@{}}Top-1 \\ Acc. (\%)\end{tabular}} &  \multicolumn{1}{c}{\begin{tabular}[c]{@{}c@{}}GFLOPs \\ Inference \end{tabular}} & \multicolumn{1}{c}{\begin{tabular}[c]{@{}c@{}}EFLOPs \\ Train \end{tabular}} \\ \midrule
Dense    & 0    & 71.78   & $1.1$  & $0.44$  \\ \midrule
\bf{AC/DC}    & $75$ & $\bf{70.3 \pm 0.07}$  & $0.34\times$ & $0.64\times$  \\
RigL$_{1\times}$ (ERK)   & $75$ & $68.39$  & $0.52\times$ & $0.53 \times$ \\
STR    & 75.28   & 68.35  & $0.18 \times$  & - \\
WoodFisher & 75.28   & 70.09  & $0.28 \times$ & -  \\ \midrule
\bf{AC/DC}   & 90 & $\bf{66.08} \pm 0.09$  & $0.18 \times$ & $0.56 \times$ \\ 
RigL$_{1\times}$ (ERK)   & 90   & $63.58$   & $0.27 \times$ & $0.29 \times$ \\
STR       & 89.01 & 62.1        & $0.07 \times$   & - \\
WoodFisher & 89 & 63.87 & -   & - \\
\bottomrule                
\end{tabular}}
\end{minipage}
\hspace{0.35cm}
\begin{minipage}[c]{0.5\textwidth}
\centering
    \captionof{table}{\small{Transformer-XL/WikiText sparsity results}}
    \label{table:transformer}
    \vspace{-0.2cm}
    \scalebox{0.66}{%
    \begin{tabular}{@{}ccccc@{}}
    \toprule
    Method   & Sparsity (\%) & \multicolumn{1}{c}{\begin{tabular}[c]{@{}c@{}}Perplexity \\ Sparse \end{tabular}} & \multicolumn{1}{c}{\begin{tabular}[c]{@{}c@{}}Perplexity \\ Dense\end{tabular}} & \multicolumn{1}{c}{\begin{tabular}[c]{@{}c@{}}Perplexity \\ Finetuned Dense \end{tabular}} \\ \midrule
    
    Dense    &  0   &  - & 18.95  & -  \\ \midrule
    AC/DC    & 80   & 20.65 & 20.24 & 19.54 \\
    AC/DC    & 80, 50 embed. & 20.83 & 20.25 & 19.68 \\
    \bf{Top-KAST}     & 80, 0 bwd & \bf{19.8}  & -  & - \\
    Top-KAST  & 80, 60 bwd & 21.3  & - & -  \\ \midrule
    \bf{AC/DC}    & 90 & \bf{22.32} & 21.0 & 20.28\\ 
    \bf{AC/DC}    & 90, 50 embed. & \bf{22.84} & 21.34 & 20.41 \\
    Top-KAST   & 90, 80 bwd  & 25.1   & - & - \\ \bottomrule            
    \end{tabular}}
\end{minipage}

\vspace{0.1cm}
\parhead{Semi-structured Sparsity.} 
We also experiment with the recent 2:4 sparsity pattern (2 weights out of each block of 4 are zero) proposed by NVIDIA, which ensures inference speedups on the Ampere architecture. Recently,~\cite{mishra2021accelerating} showed that accuracy can be preserved under this pattern, by re-doing the entire training flow. Also, \cite{zhou2021learning} proposed more general N:M structures, together with a method for training such sparse models from scratch.  We applied AC/DC to the 2:4 pattern, performing training from scratch and obtained sparse models with $76.64\%\pm 0.05$ validation accuracy, i.e. slightly below the baseline.
Furthermore, the dense-finetuned model fully recovers the baseline performance (76.85\% accuracy). We additionally experiment with using AC/DC with global pruning at 50\%; in this case we obtain sparse models that slightly improve the baseline accuracy to 77.05\%. This confirms our intuition that AC/DC can act as a regularizer, similarly to~\cite{han2016dsd}.

\parhead{Language Modeling.}
Next, we apply AC/DC to compressing NLP models. 
We use Transformer-XL~\citep{dai2019transformer}, on the  WikiText-103 dataset \citep{wikitext103}, with the standard model configuration with 18 layers and 285M parameters, trained using the Lamb optimizer~\citep{you2019large} and standard hyper-parameters, which we describe in the Supplementary Material. 
The same Transformer-XL model trained on WikiText-103 was used in Top-KAST \citep{jayakumar2020top}, which allows a direct comparison. Similar to Top-KAST, we did not prune the embedding layers, as this greatly affects the quality, without reducing computational cost. (For completeness, we do provide results when embeddings are pruned to 50\% sparsity.) 
Our sparse training configuration consists in starting with a dense warm-up phase of 5 epochs, followed by alternating between compression and decompression phases every 3 epochs; we follow with a longer decompression phase between epochs 33-39, 
and end with a compression phase between epochs 40-48. 
The results are shown in Table~\ref{table:transformer}. 
Relative to Top-KAST, our approach provides significantly improved test perplexity at 90\% sparsity, as well as better results at 80\% sparsity with sparse back-propagation.  
The results confirm that AC/DC is scalable and extensible.
We note that our hyper-parameter tuning for this experiment was minimal.  

\parhead{Output Analysis.} 
Finally, we probe the accuracy difference between the sparse and dense-finetuned models. 
We first examineed \emph{sample-level agreement} between sparse and dense-finetuned pairs produced by AC/DC, relative to model pairs produced by gradual magnitude pruning (GMP). 
Co-trained model pairs consistently agree on more samples relative to GMP: 
for example, on the 80\%-pruned ResNet50 model, the AC/DC model pair agrees on the Top-1 classification of 90\% of validation samples, whereas the GMP models agree on 86\% of the samples. The differences are better seen in terms of validation error (10\% versus 14\%), which indicate that the dense baseline and GMP model disagree on 40\% more samples compared to the AC/DC models. A similar trend holds for the \emph{cross-entropy} between model outputs. This is a potentially useful side-effect of the method; for example, in constrained environments where sparse models are needed, it is important to estimate their similarity to the dense ones.

Second, we analyze differences in ``memorization'' capacity~\citep{zhang2016understanding} between dense and sparse models. 
For this, we apply AC/DC to ResNet20 trained on a variant of CIFAR-10 where a subset of 1000 samples have randomly corrupted class labels, and examine the accuracy on these samples during training. We consider 90\% and 95\% sparsity AC/DC runs. 
Figure~\ref{fig:random-correct-no-da-main} shows the results, when the accuracy for each sample is measured with respect to the \emph{true, un-corrupted} label. 
During early training and during \emph{sparse phases}, 
the network tends to classify corrupted samples to their \emph{true class}, ``ignoring'' label corruption. 
However, as training progresses, due to dense training phases and lower learning rate, networks tend to ``memorize'' these samples, assigning them to their corrupted class. This phenomenon is even more prevalent at 95\% sparsity, where the network is less capable of memorization. 
We discuss this finding in more detail in the Supplementary Material.

\parhead{Practical Speedups.} 
One remaining question regards the potential of sparsity to provide real-world speedups. 
While this is an active research area, e.g.~\citep{elsen2020fast}, 
we partially address this concern in the Supplementary Material, by showing inference speedups for our models on a CPU inference platform supporting unstructured sparsity~\citep{NM}: for example, our 90\% sparse ResNet50 model provides  1.75x speedup for real-time inference (batch-size 1) on a resource-constrained processor with 4 cores, and 2.75x speedup on 16 cores at batch size 64, versus the dense model.  

\vspace{-0.5em}
\section{Conclusion, Limitations, and Future Work}
\label{sec:conclusion}
\vspace{-0.5em}

We introduced AC/DC---a method for co-training sparse and dense models, with theoretical guarantees. Experimental results show that AC/DC improves upon the accuracy of previous sparse training methods, and obtains state-of-the-art results at high sparsities. Importantly, we recover near-baseline performance for dense models and do not require extensive hyper-parameter tuning. We also show that AC/DC has potential for real-world speed-ups in  inference and training, with the appropriate software and hardware support. 
The method has the advantage of returning both an accurate standard model, and a compressed one. 
Our model output analysis confirms the intuition that sparse training phases act as a regularizer, preventing the (dense) model from memorizing corrupted samples. At the same time, they prevent the memorization of \emph{hard samples}, which can affect accuracy. 

The main limitations of AC/DC are its reliance on dense training phases, which limits the achievable training speedup, and the need for tuning the length and frequency of sparse/dense phases. We believe the latter issue can be addressed with more experimentation (we show some preliminary results in Section~\ref{sec:experiments} and Appendix \ref{sec:cifar100}); however, both the theoretical results and the output analysis suggest that dense phases may be \emph{necessary} for good accuracy. 
We plan to further investigate this in future work, together with applying AC/DC to other compression methods, such as quantization, as well as leveraging sparse training on hardware that could efficiently support it, such as Graphcore IPUs~\citep{graphcore}.

\vspace{-0.2em}
\acksection
\vspace{-0.2em}
This project has received funding from the European Research Council (ERC) under the European Union’s Horizon 2020 research and innovation programme (grant agreement No 805223 ScaleML), and a CNRS PEPS grant. This research was supported by the Scientific Service Units (SSU) of IST Austria through resources provided by Scientific Computing (SciComp). We would also like to thank Christoph Lampert for his feedback on an earlier version of this work, as well as for providing hardware for the Transformer-XL experiments.

\small{
\bibliographystyle{plain}
\bibliography{papers}

\begin{thebibliography}{10}

\bibitem{alistarh2018convergence}
Dan Alistarh, Torsten Hoefler, Mikael Johansson, Sarit Khirirat, Nikola
  Konstantinov, and C{\'e}dric Renggli.
\newblock The convergence of sparsified gradient methods.
\newblock In {\em Conference on Neural Information Processing Systems
  (NeurIPS)}, 2018.

\bibitem{allen2019convergence}
Zeyuan Allen-Zhu, Yuanzhi Li, and Zhao Song.
\newblock A convergence theory for deep learning via over-parameterization.
\newblock In {\em International Conference on Machine Learning (ICML)}. PMLR,
  2019.

\bibitem{axiotis2020sparse}
Kyriakos Axiotis and Maxim Sviridenko.
\newblock Sparse convex optimization via adaptively regularized hard
  thresholding.
\newblock In {\em International Conference on Machine Learning (ICML)}. PMLR,
  2020.

\bibitem{bellec2017deep}
Guillaume Bellec, David Kappel, Wolfgang Maass, and Robert Legenstein.
\newblock Deep rewiring: Training very sparse deep networks.
\newblock {\em International Conference on Learning Representations (ICLR)},
  2018.

\bibitem{wandb}
Lukas Biewald.
\newblock {Experiment Tracking with Weights and Biases}, 2020.
\newblock Software available from wandb.com.

\bibitem{blumensath2008iterative}
Thomas Blumensath and Mike~E Davies.
\newblock Iterative thresholding for sparse approximations.
\newblock {\em Journal of Fourier analysis and Applications}, 14(5-6):629--654,
  2008.

\bibitem{candes2006near}
Emmanuel~J Candes and Terence Tao.
\newblock Near-optimal signal recovery from random projections: Universal
  encoding strategies?
\newblock {\em IEEE transactions on information theory}, 52(12):5406--5425,
  2006.

\bibitem{chen2018tvm}
Tianqi Chen, Thierry Moreau, Ziheng Jiang, Lianmin Zheng, Eddie Yan, Haichen
  Shen, Meghan Cowan, Leyuan Wang, Yuwei Hu, Luis Ceze, et~al.
\newblock {TVM}: An automated end-to-end optimizing compiler for deep learning.
\newblock In {\em 13th {USENIX} Symposium on Operating Systems Design and
  Implementation ({OSDI} 18)}, pages 578--594, 2018.

\bibitem{courbariaux2016binarized}
Matthieu Courbariaux, Itay Hubara, Daniel Soudry, Ran El-Yaniv, and Yoshua
  Bengio.
\newblock Binarized neural networks: Training deep neural networks with weights
  and activations constrained to +1 or -1.
\newblock {\em arXiv preprint arXiv:1602.02830}, 2016.

\bibitem{dai2019transformer}
Zihang Dai, Zhilin Yang, Yiming Yang, Jaime Carbonell, Quoc~V Le, and Ruslan
  Salakhutdinov.
\newblock {Transformer-XL: Attentive language models beyond a fixed-length
  context}.
\newblock In {\em Proceedings of the 57th Annual Meeting of the Association for
  Computational Linguistics}, 2019.

\bibitem{david2020tensorflow}
Robert David, Jared Duke, Advait Jain, Vijay~Janapa Reddi, Nat Jeffries, Jian
  Li, Nick Kreeger, Ian Nappier, Meghna Natraj, Shlomi Regev, et~al.
\newblock {TensorFlow Lite Micro: Embedded machine learning on TinyML systems}.
\newblock {\em arXiv preprint arXiv:2010.08678}, 2020.

\bibitem{NM}
DeepSparse.
\newblock {NeuralMagic DeepSparse Inference Engine}, 2021.

\bibitem{dettmers2019sparse}
Tim Dettmers and Luke Zettlemoyer.
\newblock Sparse networks from scratch: Faster training without losing
  performance.
\newblock {\em arXiv preprint arXiv:1907.04840}, 2019.

\bibitem{2017-dong}
Xin Dong, Shangyu Chen, and Sinno~Jialin Pan.
\newblock Learning to prune deep neural networks via layer-wise optimal brain
  surgeon.
\newblock {\em arXiv preprint arXiv:1705.07565}, 2017.

\bibitem{elsen2020fast}
Erich Elsen, Marat Dukhan, Trevor Gale, and Karen Simonyan.
\newblock Fast sparse convnets.
\newblock In {\em Conference on Computer Vision and Pattern Recognition
  (CVPR)}, 2020.

\bibitem{evci2020rigging}
Utku Evci, Trevor Gale, Jacob Menick, Pablo~Samuel Castro, and Erich Elsen.
\newblock Rigging the lottery: Making all tickets winners.
\newblock In {\em International Conference on Machine Learning (ICML)}. PMLR,
  2020.

\bibitem{foucart2011hard}
Simon Foucart.
\newblock Hard thresholding pursuit: an algorithm for compressive sensing.
\newblock {\em SIAM Journal on Numerical Analysis}, 49(6):2543--2563, 2011.

\bibitem{foucart2012sparse}
Simon Foucart.
\newblock Sparse recovery algorithms: sufficient conditions in terms of
  restricted isometry constants.
\newblock In {\em Approximation Theory XIII: San Antonio 2010}, pages 65--77.
  Springer, 2012.

\bibitem{frankle2018lottery}
Jonathan Frankle and Michael Carbin.
\newblock The lottery ticket hypothesis: Finding sparse, trainable neural
  networks.
\newblock In {\em International Conference on Learning Representations (ICLR)},
  2019.

\bibitem{frankle2020linear}
Jonathan Frankle, Gintare~Karolina Dziugaite, Daniel Roy, and Michael Carbin.
\newblock Linear mode connectivity and the lottery ticket hypothesis.
\newblock In {\em International Conference on Machine Learning (ICML)}. PMLR,
  2020.

\bibitem{gale2019state}
Trevor Gale, Erich Elsen, and Sara Hooker.
\newblock The state of sparsity in deep neural networks.
\newblock {\em arXiv preprint arXiv:1902.09574}, 2019.

\bibitem{gholami2021survey}
Amir Gholami, Sehoon Kim, Zhen Dong, Zhewei Yao, Michael~W Mahoney, and Kurt
  Keutzer.
\newblock A survey of quantization methods for efficient neural network
  inference.
\newblock {\em arXiv preprint arXiv:2103.13630}, 2021.

\bibitem{graphcore}
Graphcore.
\newblock {Graphcore Poplar SDK 2.0}, 2021.

\bibitem{hagiwara1994}
Masafumi Hagiwara.
\newblock A simple and effective method for removal of hidden units and
  weights.
\newblock {\em Neurocomputing}, 6(2):207 -- 218, 1994.
\newblock Backpropagation, Part IV.

\bibitem{han2016dsd}
Song Han, Jeff Pool, Sharan Narang, Huizi Mao, Enhao Gong, Shijian Tang, Erich
  Elsen, Peter Vajda, Manohar Paluri, John Tran, et~al.
\newblock {DSD}: Dense-sparse-dense training for deep neural networks.
\newblock {\em International Conference on Learning Representations (ICLR)},
  2017.

\bibitem{han2015learning}
Song Han, Jeff Pool, John Tran, and William~J Dally.
\newblock Learning both weights and connections for efficient neural networks.
\newblock In {\em Conference on Neural Information Processing Systems
  (NeurIPS)}, 2015.

\bibitem{hassibi1993optimal}
Babak Hassibi, David~G Stork, and Gregory~J Wolff.
\newblock Optimal brain surgeon and general network pruning.
\newblock In {\em IEEE international conference on neural networks}. IEEE,
  1993.

\bibitem{he2016deep}
Kaiming He, Xiangyu Zhang, Shaoqing Ren, and Jian Sun.
\newblock Deep residual learning for image recognition.
\newblock In {\em Conference on Computer Vision and Pattern Recognition
  (CVPR)}, 2016.

\bibitem{hoefler2021sparsity}
Torsten Hoefler, Dan Alistarh, Tal Ben-Nun, Nikoli Dryden, and Alexandra Peste.
\newblock Sparsity in deep learning: Pruning and growth for efficient inference
  and training in neural networks.
\newblock {\em arXiv preprint arXiv:2102.00554}, 2021.

\bibitem{howard2017mobilenets}
Andrew~G Howard, Menglong Zhu, Bo~Chen, Dmitry Kalenichenko, Weijun Wang,
  Tobias Weyand, Marco Andreetto, and Hartwig Adam.
\newblock Mobilenets: Efficient convolutional neural networks for mobile vision
  applications.
\newblock {\em arXiv preprint arXiv:1704.04861}, 2017.

\bibitem{jain2014iterative}
Prateek Jain, Ambuj Tewari, and Purushottam Kar.
\newblock On iterative hard thresholding methods for high-dimensional
  {M}-estimation.
\newblock In {\em Conference on Neural Information Processing Systems
  (NeurIPS)}, 2014.

\bibitem{jayakumar2020top}
Siddhant Jayakumar, Razvan Pascanu, Jack Rae, Simon Osindero, and Erich Elsen.
\newblock {Top-KAST}: {Top-K} always sparse training.
\newblock {\em Conference on Neural Information Processing Systems (NeurIPS)},
  2020.

\bibitem{jin2016training}
Xiaojie Jin, Xiaotong Yuan, Jiashi Feng, and Shuicheng Yan.
\newblock Training skinny deep neural networks with iterative hard thresholding
  methods.
\newblock {\em arXiv preprint arXiv:1607.05423}, 2016.

\bibitem{karimi2016linear}
Hamed Karimi, Julie Nutini, and Mark Schmidt.
\newblock Linear convergence of gradient and proximal-gradient methods under
  the {P}olyak-{{\L}}ojasiewicz condition.
\newblock In {\em Joint European Conference on Machine Learning and Knowledge
  Discovery in Databases}, pages 795--811. Springer, 2016.

\bibitem{kingma2014adam}
Diederik~P Kingma and Jimmy Ba.
\newblock Adam: A method for stochastic optimization.
\newblock {\em International Conference on Learning Representations (ICLR)},
  2015.

\bibitem{cifar100}
Alex Krizhevsky, Geoffrey Hinton, et~al.
\newblock Learning multiple layers of features from tiny images.
\newblock 2009.

\bibitem{kusupati2020soft}
Aditya Kusupati, Vivek Ramanujan, Raghav Somani, Mitchell Wortsman, Prateek
  Jain, Sham Kakade, and Ali Farhadi.
\newblock Soft threshold weight reparameterization for learnable sparsity.
\newblock In {\em International Conference on Machine Learning (ICML)}. PMLR,
  2020.

\bibitem{lecun1990optimal}
Yann LeCun, John~S Denker, and Sara~A Solla.
\newblock Optimal brain damage.
\newblock In {\em Conference on Neural Information Processing Systems
  (NeurIPS)}, 1990.

\bibitem{lee2018snip}
Namhoon Lee, Thalaiyasingam Ajanthan, and Philip~HS Torr.
\newblock {SNIP}: Single-shot network pruning based on connection sensitivity.
\newblock {\em International Conference on Learning Representations (ICLR)},
  2019.

\bibitem{lin2019dynamic}
Tao Lin, Sebastian~U Stich, Luis Barba, Daniil Dmitriev, and Martin Jaggi.
\newblock Dynamic model pruning with feedback.
\newblock In {\em International Conference on Learning Representations (ICLR)},
  2019.

\bibitem{liu2020toward}
Chaoyue Liu, Libin Zhu, and Mikhail Belkin.
\newblock Toward a theory of optimization for over-parameterized systems of
  non-linear equations: the lessons of deep learning.
\newblock {\em arXiv preprint arXiv:2003.00307}, 2020.

\bibitem{wikitext103}
Stephen Merity, Caiming Xiong, James Bradbury, and Richard Socher.
\newblock Pointer sentinel mixture models.
\newblock {\em arXiv preprint arXiv:1609.07843}, 2016.

\bibitem{mishra2021accelerating}
Asit Mishra, Jorge~Albericio Latorre, Jeff Pool, Darko Stosic, Dusan Stosic,
  Ganesh Venkatesh, Chong Yu, and Paulius Micikevicius.
\newblock Accelerating sparse deep neural networks.
\newblock {\em arXiv preprint arXiv:2104.08378}, 2021.

\bibitem{mocanu2018scalable}
Decebal~Constantin Mocanu, Elena Mocanu, Peter Stone, Phuong~H Nguyen,
  Madeleine Gibescu, and Antonio Liotta.
\newblock Scalable training of artificial neural networks with adaptive sparse
  connectivity inspired by network science.
\newblock {\em Nature communications}, 9(1):1--12, 2018.

\bibitem{mohtashami2021simultaneous}
Amirkeivan Mohtashami, Martin Jaggi, and Sebastian~U Stich.
\newblock Simultaneous training of partially masked neural networks.
\newblock {\em arXiv preprint arXiv:2106.08895}, 2021.

\bibitem{molchanov2017variational}
Dmitry Molchanov, Arsenii Ashukha, and Dmitry Vetrov.
\newblock Variational dropout sparsifies deep neural networks.
\newblock In {\em International Conference on Machine Learning (ICML)}. PMLR,
  2017.

\bibitem{pytorch}
Adam Paszke, Sam Gross, Francisco Massa, Adam Lerer, James Bradbury, Gregory
  Chanan, Trevor Killeen, Zeming Lin, Natalia Gimelshein, Luca Antiga, Alban
  Desmaison, Andreas Kopf, Edward Yang, Zachary DeVito, Martin Raison, Alykhan
  Tejani, Sasank Chilamkurthy, Benoit Steiner, Lu~Fang, Junjie Bai, and Soumith
  Chintala.
\newblock {PyTorch}: An imperative style, high-performance deep learning
  library.
\newblock In {\em Conference on Neural Information Processing Systems
  (NeurIPS)}. 2019.

\bibitem{qian1999momentum}
Ning Qian.
\newblock On the momentum term in gradient descent learning algorithms.
\newblock {\em Neural networks}, 12(1):145--151, 1999.

\bibitem{imagenet}
Olga Russakovsky, Jia Deng, Hao Su, Jonathan Krause, Sanjeev Satheesh, Sean Ma,
  Zhiheng Huang, Andrej Karpathy, Aditya Khosla, Michael Bernstein, et~al.
\newblock Imagenet large scale visual recognition challenge.
\newblock {\em International journal of computer vision}, 115(3):211--252,
  2015.

\bibitem{shevchenko2020landscape}
Alexander Shevchenko and Marco Mondelli.
\newblock Landscape connectivity and dropout stability of {SGD} solutions for
  over-parameterized neural networks.
\newblock In {\em International Conference on Machine Learning (ICML)}. PMLR,
  2020.

\bibitem{shi2019understanding}
Shaohuai Shi, Xiaowen Chu, Ka~Chun Cheung, and Simon See.
\newblock Understanding top-k sparsification in distributed deep learning.
\newblock {\em arXiv preprint arXiv:1911.08772}, 2019.

\bibitem{singh2020woodfisher}
Sidak~Pal Singh and Dan Alistarh.
\newblock Woodfisher: Efficient second-order approximation for neural network
  compression.
\newblock {\em Conference on Neural Information Processing Systems (NeurIPS)},
  2020.

\bibitem{strubell2020energy}
Emma Strubell, Ananya Ganesh, and Andrew McCallum.
\newblock Energy and policy considerations for modern deep learning research.
\newblock In {\em Proceedings of the AAAI Conference on Artificial
  Intelligence}, 2020.

\bibitem{tanaka2020pruning}
Hidenori Tanaka, Daniel Kunin, Daniel~L Yamins, and Surya Ganguli.
\newblock Pruning neural networks without any data by iteratively conserving
  synaptic flow.
\newblock {\em Conference on Neural Information Processing Systems (NeurIPS)},
  2020.

\bibitem{vanholder2016efficient}
Han Vanholder.
\newblock Efficient inference with {TensorRT}.
\newblock {NVIDIA GTC} On-Demand. Slides available at
  https://on-demand-gtc.gputechconf.com/gtcnew/sessionview.php?sessionName=23425-efficient+inference+with+tensorrt,
  2017.

\bibitem{vaswani2017attention}
Ashish Vaswani, Noam Shazeer, Niki Parmar, Jakob Uszkoreit, Llion Jones,
  Aidan~N Gomez, {\L}ukasz Kaiser, and Illia Polosukhin.
\newblock Attention is all you need.
\newblock In {\em Conference on Neural Information Processing Systems
  (NeurIPS)}, 2017.

\bibitem{you2019large}
Yang You, Jing Li, Sashank Reddi, Jonathan Hseu, Sanjiv Kumar, Srinadh
  Bhojanapalli, Xiaodan Song, James Demmel, Kurt Keutzer, and Cho-Jui Hsieh.
\newblock Large batch optimization for deep learning: Training bert in 76
  minutes.
\newblock In {\em International Conference on Learning Representations (ICLR)},
  2020.

\bibitem{yuan2014gradient}
Xiaotong Yuan, Ping Li, and Tong Zhang.
\newblock Gradient hard thresholding pursuit for sparsity-constrained
  optimization.
\newblock In {\em International Conference on Machine Learning (ICML)}. PMLR,
  2014.

\bibitem{zagoruyko2016wide}
Sergey Zagoruyko and Nikos Komodakis.
\newblock Wide residual networks.
\newblock In {\em British Machine Vision Conference (BMVC)}. British Machine
  Vision Association, 2016.

\bibitem{zhang2016understanding}
Chiyuan Zhang, Samy Bengio, Moritz Hardt, Benjamin Recht, and Oriol Vinyals.
\newblock Understanding deep learning requires rethinking generalization.
\newblock {\em International Conference on Learning Representations (ICLR)},
  2017.

\bibitem{zhou2021learning}
Aojun Zhou, Yukun Ma, Junnan Zhu, Jianbo Liu, Zhijie Zhang, Kun Yuan, Wenxiu
  Sun, and Hongsheng Li.
\newblock Learning {N: M} fine-grained structured sparse neural networks from
  scratch.
\newblock {\em International Conference on Learning Representations (ICLR)},
  2021.

\bibitem{zhu2017prune}
Michael Zhu and Suyog Gupta.
\newblock To prune, or not to prune: exploring the efficacy of pruning for
  model compression.
\newblock {\em arXiv preprint arXiv:1710.01878}, 2017.

\end{thebibliography}
}

\newpage

\appendix
\appendixpage
\addappheadtotoc
\tableofcontents

\global\long\def\xx{\theta}%

\global\long\def\ss{k}%

\global\long\def\supp{\textnormal{supp}}%

\global\long\def\dom{\mathbb{R}^{N}}%

\global\long\def\PLlong{\text{Polyak-\L ojasiewicz}}%

\global\long\def\PLshort{\text{PL}}%

\global\long\def\cPLshort{\text{CPL}}%

\section{Convergence Proofs}
In this section we provide the convergence analysis for our algorithms. We prove Theorem~\ref{thm:iht-pl} and show as a corollary that under reasonable assumptions our implementation of AC/DC converges to a sparse minimizer.
\subsection{Overview}

We use the notation and assumptions defined in Section~\ref{subsec:theory}.
As all of our analyses revolve around bounding the progress made in a single iteration, to simplify notation we will generally use $\theta$ to denote the current iterate, and $\theta'$ to denote the iterate obtained after the IHT update:
\[
\theta' = T_k(\theta -  \eta g_{\theta})\,.
\]
Additionally, we let $S, S', S^*\subseteq [N]$ to denote the support of $\theta$, $\theta'$ and $\theta^*$ respectively, where $\theta^*$ is the promised $k^*$-sparse minimizer. 
Given an arbitrary vector $x$, we use $\supp(x)$ to denote its support, i.e. the set of coordinates where $x$ is nonzero.
We may also refer to the minimizing value of $f$ as $f^* = f(\theta^*)$.

Before providing more intuition for the analysis, we give the main theorem statements.

\subsubsection{Stochastic IHT for Functions with Concentrated PL Condition}
We restate the Theorem from Section~\ref{subsec:theory}.

\mainthm*

Additionally, we give a corollary that justifies our implementation of AC/DC. As opposed to the theoretical stochastic IHT algorithm, AC/DC performs a sequence of several dense SGD steps before applying a single pruning step. We show that even with this change we can provide theoretical convergence bounds, although these bounds can be weaker than the baseline IHT method under our assumptions.

\begin{cor}[Convergence of AC/DC]\label{cor:iht-acdc}
	Let $f:\dom \rightarrow \mathbb{R}$ be a function that decomposes as $f(\theta) = \frac{1}{m} \sum_{i=1}^m f_i(\theta)$, and has a $k^*$-sparse minimizer $\theta^*$. Let $\beta>\alpha>0$ be parameters, let $k = C\cdot k^* \cdot (\beta/\alpha)^2$ for some appropriately chosen constant $C$, suppose that each $f_i$ is $(N,\beta)$-smooth, and $L$-Lipschitz, and that $f$ is $(k^*,\alpha)$-CPL.
	
	Let $\Delta_c$ and $B$ be integers, and let $\{S_1, \dots, S_B\}$ be a partition of $[m]$ into $B$ subsets of cardinality $O(m/ B)$ each.
	Given $\theta$, let $g_{\theta}^{(i)} = \frac{1}{\vert S_i \vert} \sum_{j \in S_i} \nabla f_j(\theta)$.
	
	Suppose we replace the IHT iteration with a dense/sparse phase consisting of
	\begin{enumerate}
		\item $\Delta_c$ phases during each of which we perform a full pass over the data by performing the iteration $\theta' = \theta - \eta g_{\theta}^{(i)}$ for all $i \in [B]$, with an appropriate step size $\eta$;
		\item a pruning step implemented via an application the truncation operator $T_k$;
		\item an optional sparse training phase which fully optimizes $f$ over the sparse support.
	\end{enumerate}
	
	For initial parameters $\theta_0$ and precision $\epsilon >0$, this algorithm converges
	in $O\left(\frac{\beta}{\alpha}\cdot\ln\frac{f\left(\theta_{0}\right)-f\left(\theta^{*}\right)}{\epsilon}\right)$
	dense/sparse phases to a point $\theta$ with $\|\theta\|_0\leq k$,
	such that
	\vspace{-0.2cm}
	\begin{equation*}
	f\left(\theta\right)-f\left(\theta^{*}\right)\leq\epsilon+ O\left(\frac{L^2}{\alpha}\right).
	\end{equation*}
	
\end{cor}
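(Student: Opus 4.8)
The plan is to reduce a single dense/sparse phase of this AC/DC variant to one stochastic IHT step, so that the per-iteration contraction established in the proof of Theorem~\ref{thm:iht-pl} can be re-used almost verbatim. Recall that the heart of that proof is a one-step progress inequality of the form $\mathbb{E}[f(\theta') - f^*] \leq (1 - \Omega(\alpha/\beta))(f(\theta) - f^*) + (\textnormal{error})$, where $\theta' = T_k(\theta - \eta g_\theta)$ and the additive error scales with the gradient variance $\sigma^2$. My goal is to prove the same contraction with $\theta'$ replaced by the iterate produced by an \emph{entire} dense/sparse phase, and with $\sigma^2$ replaced by $O(L^2)$; iterating such a per-phase inequality then immediately gives both the $O\left(\frac{\beta}{\alpha}\ln\frac{f(\theta_0)-f^*}{\epsilon}\right)$ phase count and the additive floor $O(L^2/\alpha)$.

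First I would control the stochasticity of the dense phase using the $L$-Lipschitz assumption. Since each $f_i$ is $L$-Lipschitz, every component gradient satisfies $\|\nabla f_i(\theta)\| \leq L$, so each mini-batch estimator obeys $\|g_\theta^{(i)} - \nabla f(\theta)\| \leq 2L$ and hence has squared deviation at most $4L^2$. This is exactly the substitution that turns the $16\sigma^2/\alpha$ term of Theorem~\ref{thm:iht-pl} into the $O(L^2/\alpha)$ term of the corollary, and doing a full pass over the data per epoch is what lets this bounded-gradient estimate play the role of the variance bound. Next, because each $f_i$ is $(N,\beta)$-smooth, the full objective is $\beta$-smooth, and with an appropriately small step size the $\Delta_c$ epochs of SGD form a descent sequence in expectation; thus the pre-truncation iterate obtained at the end of the dense phase is at least as good in function value as the single gradient step $\theta - \eta g_\theta$ analyzed in Theorem~\ref{thm:iht-pl}. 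In short, the extra dense steps can only help, which is the intuition flagged in the main text.

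It then remains to account for the pruning step and the optional sparse optimization. For the pruning step I would invoke the same two ingredients as in Theorem~\ref{thm:iht-pl}: $(2k+3k^*,\beta)$-smoothness to bound the increase in $f$ caused by zeroing the small coordinates, and the $(k^*,\alpha)$-CPL condition to lower-bound the gradient mass concentrated on the top coordinates, which is what drives the $\Omega(\alpha/\beta)$ contraction once $k = C k^*(\beta/\alpha)^2$ is large enough. The optional sparse training phase only decreases $f$ further over its fixed support, so it preserves the contraction. Chaining the resulting per-phase inequality yields geometric decay of $f(\theta) - f^*$ down to the $O(L^2/\alpha)$ floor, matching the claim. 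The reason the bound does not improve over Theorem~\ref{thm:iht-pl} is that the bottleneck remains the truncation step, whose progress is capped by CPL and the ratio $\beta/\alpha$; the extra dense and sparse steps add computation per phase but cannot sharpen the per-phase contraction factor.

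The main obstacle I anticipate is making ``the extra steps only help'' rigorous while keeping the truncation analysis valid. The one-step bound in Theorem~\ref{thm:iht-pl} applies the CPL condition at the iterate that is about to be truncated; after a full dense phase this iterate may be dense and far from the phase's starting point, so I must verify that both the CPL lower bound and the smoothness-based truncation error are still controlled there, rather than only at the initial point. Quantifying the cumulative effect of the $\Delta_c$ epochs — ensuring the descent sequence does not overshoot and that the accumulated noise stays $O(L^2)$ — is the delicate part; the cleanest route is to argue that among the dense iterates there is always one whose truncation satisfies the required progress inequality, feed that point into the contraction, and absorb the remainder into the $O(L^2/\alpha)$ slack.
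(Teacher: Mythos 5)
There is a genuine gap, and you have in fact put your finger on it yourself in your final paragraph without resolving it. The contraction in Theorem~\ref{thm:iht-pl} is not of the form ``the gradient step decreases $f$, and truncation undoes only a fraction of that decrease in function value.'' Rather, the truncation error is bounded (via Lemma~\ref{lem:smooth-ub} and Corollary~\ref{cor:prog}) by a quantity of the form $\frac{3k^*}{k-k^*}\left\|\left(\theta-\eta\, g_\theta\right)_T-\theta^*\right\|^2$, which is then converted to $\frac{8}{\alpha}\left(f(\theta)-f(\theta^*)\right)$ using Lemma~\ref{lem:conPLQG} \emph{anchored at the starting point $\theta$ of the step}; the whole argument requires the pre-truncation iterate to have the exact form $\theta-\eta\, g_\theta$ with $g_\theta$ close to $\nabla f(\theta)$. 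Your claim that the end-of-phase iterate is ``at least as good in function value as the single gradient step'' is therefore not the right currency: a lower function value at the pre-truncation point does not control the geometric quantities that Corollary~\ref{cor:prog} needs, and your fallback of selecting some intermediate dense iterate whose truncation works is not substantiated.

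The paper's proof closes this gap differently, with one concrete device you are missing: it damps the step size to $\eta'=\eta/(\Delta_c B)$ so that, by the $L$-Lipschitz property, the total movement over the entire dense phase satisfies $\|\widetilde{\theta}-\theta\|\leq \Delta_c B\,\eta' L=\eta L$. By $(N,\beta)$-smoothness of each $f_i$, every mini-batch gradient encountered during the phase then deviates from its value at $\theta$ by at most $\beta\eta L$, so the scaled total displacement $\frac{1}{\eta}(\theta-\widetilde{\theta})$ is a \emph{gradient mapping} satisfying $\left\|\frac{1}{\eta}(\theta-\widetilde{\theta})-\nabla f(\theta)\right\|\leq\beta\eta L=O(L)$ for $\eta=\Theta(1/\beta)$. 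The whole dense phase thus becomes literally one perturbed IHT step anchored at $\theta$, and the stochastic analysis of Theorem~\ref{thm:iht-pl} applies verbatim with the noise term $\sigma^2$ replaced by $O(L^2)$, yielding the $O(L^2/\alpha)$ floor. Note that this error is a deterministic \emph{bias} bound, not the per-step variance bound $\|g_\theta^{(i)}-\nabla f(\theta)\|\leq 2L$ you propose; the latter addresses single-step noise but not the drift accumulated over $\Delta_c B$ steps. Your treatment of the optional sparse phase (it only decreases $f$, hence preserves the contraction) does match the paper.
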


To provide more intuition for these results, let us understand how various parameters affect convergence. As we will argue in more detail in Section~\ref{sec:proof-approach}, the main idea behind these algorithms is based on the vanilla IHT algorithm, which consists of alternating full gradient steps with pruning/truncation steps. Pruning to the largest $k$ coordinates in absolute value essentially represents projecting the current weights onto the non-convex set of $k$-sparse vectors, so a natural approach is to simply try to adapt the analysis of projected gradient methods to this setup. The major caveat of this idea is that projecting onto a non-convex set can potentially undo some of the progress made so far, unlike the standard case of projections over convex sets. However, we can argue that if we set the target sparsity $k$ sufficiently large compared to the promised sparsity $k^*$ of the optimum, the pruning step can only increase the function value by a fraction of how much it was decreased by the full gradient step. 

Notably, in order to guarantee this property, we require the target sparsity $k$ to be of order $\Omega(k^* \cdot (\beta/\alpha)^2)$, where $\beta/\alpha$ represents the "restricted condition number" of $f$. Hence "well conditioned" functions allow for better sparsity. The number of iterations is same as in vanilla smooth and strongly convex optimization, which only depends on the function $f$, and not on the target sparsity.

The next step is specializing this approach to the non-convex case, under the restricted smoothness and CPL conditions, when only stochastic gradients are available. To handle non-convexity, we show that in fact a weaker property than strong convexity is required, and we can achieve similar results only under these restricted assumptions. More importantly, we can also handle stochastic gradients, which occur naturally in deep learning. In fact, the stochastic variance ends up contributing an extra additive error of $O(\sigma^2/\alpha)$ to our final error bound (see Theorem~\ref{thm:iht-pl}). This can be reduced by decreasing variance, i.e. taking larger batches. Additionally, this term carries a dependence in the CPL parameter $\alpha$ -- the larger $\alpha$, the smaller the error. We leave as an open problem whether this dependence on variance can be eliminated for stochastic IHT methods.

Building on this, in Corollary~\ref{cor:iht-acdc} we provide theoretical guarantees for the algorithm we implemented. Notably, while stochastic IHT takes a single stochastic gradient step, followed by pruning, in practice we replace this with a dense training phase, followed by optimization in the sparse support.
We show that this does not significantly change the state of affairs. The dense training phase can be thought of as a replacement for a single stochastic gradient step. The difficulty in AC/DC comes from the fact that the total movement of the weights during the dense phase does not constitute an unbiased estimator for the true gradient, as it was previously the case. However, by damping down the learning rate we can see that under reasonable assumptions this total movement does not differ too much from the step we would have made using a single full gradient. The sparse training phase can only help, since our entire analysis is based on proving that the function value decreases in each step. Hence as long as sparse training reduces the function value, it can only improve convergence.

We provide the main proofs for these statements in Section~\ref{sec:stoch-iht-pl}.

\subsubsection{Stochastic IHT for Functions with Restricted Smoothness and Strong
	Convexity}
We also provide a streamlined analysis for stochastic IHT under standard assumptions.
Compared to \citep{jain2014iterative} which achieves similar guarantees (in particular
both suffer from a blow-up in sparsity that is quadratic in the restricted
condition number of $f$) we significantly simplify the analysis and
provide guarantees when only stochastic gradients are available. Notably, the quadratic dependence in condition number can be improved to linear, at the expense of a significantly more sophisticated algorithm which requires a longer running time~\citep{axiotis2020sparse}.
\begin{thm}\label{thm:vanilla-iht-stoch}
	Let $f : \dom \rightarrow \mathbb{R}$ be a function with a $\ss^{*}$-sparse minimizer $\xx^{*}$.
	Let $\beta > \alpha > 0$ be parameters, let $k = C\cdot k^* \cdot (\beta/\alpha)^2$ for some appropriately chosen constant $C$, and suppose that $f$ is $(2k+k^*,\beta)$-smooth and $(k+k^*,\alpha)$-strongly convex in the sense that
	\[
	f(\theta+\delta) \geq f(\theta) + \nabla f(\theta)^\top \delta + \frac{\alpha}{2} \|\delta\|^2, \textnormal{ for all } \theta, \delta \textnormal{ s. t. } \|\delta\|_0 \leq k+k^*.
	\]
	For initial parameters $\theta_0$ and precision $\epsilon > 0$, given access to stochastic gradients with variance $\sigma$, IHT~(\ref{eq:stochastic-iht}) converges in $O\left( \frac{\beta}{\alpha} \cdot \ln \frac{\|\theta_0 - \theta^*\|}{\epsilon/\beta + \sigma^2/(\alpha\beta)} \right)$ iterations to a point $\theta$ with $\|\theta\|_0 \leq k$, such that 
	\[
	\mathbb{E}\left[\left\Vert \xx-\xx^{*}\right\Vert ^{2}\right]\leq\frac{\epsilon}{\beta}+\frac{2\sigma^{2}}{\alpha\beta}
	\]
	and 
	\[
	\mathbb{E}\left[f\left(\xx\right)-f\left(\xx^{*}\right)\right]\leq\epsilon+\frac{2\sigma^{2}}{\alpha}\ .
	\]
\end{thm}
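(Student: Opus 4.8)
The plan is to establish a one-step contraction of the squared distance to the optimum of the form $\mathbb{E}[\|\theta_{t+1}-\theta^*\|^2\mid\theta_t] \le \rho\,\|\theta_t-\theta^*\|^2 + c\,\eta^2\sigma^2$ with a rate $\rho = 1-\Omega(\alpha/\beta)<1$, and then to unroll this recursion as a geometric series. Throughout I fix the step size $\eta = \Theta(1/\beta)$ and work on the union support $U = \supp(\theta_t)\cup\supp(\theta_{t+1})\cup\supp(\theta^*)$, which has size at most $2k+k^*$ --- exactly the level at which smoothness is assumed. The final function-value bound is then a cheap consequence of $(2k+k^*,\beta)$-smoothness together with $\nabla f(\theta^*)=0$: since $\theta-\theta^*$ is supported on a set of size $\le k+k^*$, we get $f(\theta)-f(\theta^*)\le \tfrac{\beta}{2}\|\theta-\theta^*\|^2$, and substituting the distance bound yields the claimed $\epsilon + 2\sigma^2/\alpha$.

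The contraction itself splits into two ingredients. First, writing $b = \theta_t - \eta g_{\theta_t}$ so that $\theta_{t+1}=T_k(b)$, I would use the best-$k$-term property of the truncation operator, $\|b-\theta_{t+1}\|\le\|b-\theta^*\|$ (valid since $k\ge k^*$), restricted to $U$ so that the off-support mass cancels. The crude form of this gives only $\|\theta_{t+1}-\theta^*\|\le 2\|b_U-\theta^*\|$, and the factor $2$ is fatal: a single (restricted) gradient step of an $\alpha$-strongly-convex, $\beta$-smooth function contracts only by $1-\Theta(\alpha/\beta)$, which cannot overcome a factor $2$ when $\beta\gg\alpha$. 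The fix, and the heart of the argument, is a quantitative truncation lemma showing that the expansion incurred by projecting onto the nonconvex set of $k$-sparse vectors is only $1+O(\sqrt{k^*/k})$ rather than $2$ --- intuitively, the coordinates of $\theta^*$ that $T_k$ fails to select must carry small magnitude, and their total contribution scales with the ``overshoot'' $k^*/k$ in the sparsity budget.

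Second, for the gradient step I would expand $\|b_U-\theta^*\|^2 = \|\theta_t-\theta^* - \eta P_U\nabla f(\theta_t)\|^2$ (deterministic part) and invoke $(k+k^*,\alpha)$-strong convexity on $\theta_t-\theta^*$ together with $(2k+k^*,\beta)$-smoothness on $U$ to control the cross term and the gradient norm, obtaining a contraction by $1-\Theta(\alpha/\beta)$. Combining the two ingredients multiplies the truncation expansion $1+O(\sqrt{k^*/k})$ with the gradient contraction $1-\Theta(\alpha/\beta)$; this product is strictly below $1$ precisely when $\sqrt{k^*/k}\lesssim \alpha/\beta$, i.e.\ when $k = C\,k^*\,(\beta/\alpha)^2$. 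This is exactly where the quadratic condition-number blow-up in the sparsity enters, and I expect this balancing to be the main obstacle to carry out cleanly with explicit constants.

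Finally, to incorporate stochasticity I would take the conditional expectation given $\theta_t$: unbiasedness lets me replace $g_{\theta_t}$ by $\nabla f(\theta_t)$ in the linear terms, while the variance assumption contributes an additive $\eta^2\sigma^2 = \Theta(\sigma^2/\beta^2)$ per step from the quadratic term. A technical wrinkle is that $U$ itself depends on $g_{\theta_t}$ and is therefore random; I would handle this by phrasing the truncation and smoothness estimates over the realized support so that the bounds hold pointwise before expectations are taken. Unrolling $\mathbb{E}\|\theta_t-\theta^*\|^2 \le \rho^t\|\theta_0-\theta^*\|^2 + \tfrac{c\,\eta^2\sigma^2}{1-\rho}$ with $1-\rho=\Theta(\alpha/\beta)$ produces the steady-state term $\Theta(\sigma^2/(\alpha\beta))$ and the $O\!\big(\tfrac{\beta}{\alpha}\ln\tfrac{\|\theta_0-\theta^*\|}{\epsilon/\beta+\sigma^2/(\alpha\beta)}\big)$ iteration count, completing the distance bound from which the function-value bound follows as above.
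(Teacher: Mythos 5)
Your proposal is correct and follows essentially the same route as the paper's proof: tracking $\|\theta_t-\theta^*\|^2$ over the restricted union support, using the sharp non-convex projection bound (the paper's Lemma~\ref{lem:sparse-proj}, giving expansion $\bigl(1+\sqrt{k^*/k}\bigr)^2$ rather than the fatal factor $4$), contracting the gradient step via restricted strong convexity and smoothness, absorbing the stochastic error as an additive $O(\eta^2\sigma^2)$ term, and converting the final distance bound to a function-value bound via restricted smoothness. The balancing condition $\sqrt{k^*/k}\lesssim\alpha/\beta$ you identify is exactly how the paper chooses $k\geq 81(\beta/\alpha)^2 k^*$.
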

We prove this statement in Section~\ref{sec:iht-standard-proof}.

\subsubsection{Finding a Sparse Nearly-Stationary Point}
We can further relax the assumptions, and show that IHT can recover sparse iterates that are nearly-stationary (i.e. have
small gradient norm). Finding nearly-stationary points is a standard
objective in non-convex optimization. However, enforcing the sparsity
guarantee is not. Here we show that IHT can further
provide stationarity guarantees even with minimal assumptions. 

Intuitively, this results suggests that error feedback may not be necessary for converging to stationary points under parameter sparsity.

In this case, we alternate IHT steps with optimization steps over the sparse support, which reduce the norm of the gradient resteicted to the support to some target error $\epsilon$.
\begin{thm}
	\label{thm:iht-nonconv}Let $f:\dom\rightarrow\mathbb{R}$. Let $\beta,\epsilon> 0$ be parameters,  let $k$ be the target sparsity, and suppose that $f$ is $(2k,\beta)$-smooth. 
	Furthermore, suppose that after each IHT step with a step size $\eta = \beta^{-1}$ and target sparsity $k$, followed by optimizing over the support to gradient norm $\epsilon$,
	all the obtained iterates satisfy 
	\[
	\left\Vert \xx\right\Vert _{\infty}\leq R_{\infty}\ .
	\]
	For initial parameters $\theta_0$ and precision $\epsilon > 0$, given access to stochastic gradients with variance $\sigma$, in  $O\left(\beta\left(f\left(\xx_{0}\right)-f\left(\xx_{T}\right)\right)\cdot\min\left\{ \frac{1}{\epsilon^{2}},\frac{1}{\beta^{2}\ss R_{\infty}^{2}},\frac{1}{\beta^{2}\sigma^{2}}\right\} \right)$
	iterations we can obtain an $\ss$-sparse iterate $\xx$ such that
	\begin{align*}
	\left\Vert \nabla f\left(\xx\right)\right\Vert _{\infty} & =O\left(\beta R_{\infty}\sqrt{k}+\beta\sigma+\epsilon\right)\,.
	\end{align*}
\end{thm}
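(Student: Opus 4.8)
The plan is to run a potential-decrease (descent) argument on the objective value, exploiting the fact that the optimize-over-support phase guarantees that at the start of every IHT step the gradient restricted to the current support $S = \supp(\theta)$ has norm at most $\epsilon$. Consequently the only way $\|\nabla f(\theta)\|_\infty$ can be large is through coordinates \emph{outside} $S$, and these are precisely the coordinates that a hard-thresholding step is able to recruit. I would show that each combined step (a stochastic IHT update followed by re-optimization over the new support) decreases $f$ monotonically, and that whenever $\|\nabla f(\theta)\|_\infty$ exceeds the target threshold $O(\beta R_\infty\sqrt k + \beta\sigma + \epsilon)$ the decrease is at least $\Omega(\|\nabla f(\theta)\|_\infty^2/\beta)$. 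Telescoping these decreases against the total drop $f(\theta_0)-f(\theta_T)$ then bounds the number of ``bad'' steps, and any step that is not bad already produces the desired sparse near-stationary iterate.

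The core is a single-step descent lemma. Writing $u = \theta - \eta g_\theta$ with $\eta = \beta^{-1}$ and $\theta' = T_k(u)$, the $(2k,\beta)$-smoothness inequality (the change $\theta'-\theta$ is supported on $S\cup S'$, of size at most $2k$) rearranges, after completing the square, into
\[
f(\theta') \le f(\theta) + \tfrac{\beta}{2}\big(\|\theta'-u\|^2 - \|\theta-u\|^2\big).
\]
I would then expand the bracket using the added coordinates $A = S'\setminus S$ and the dropped coordinates $D = S\setminus S'$, of equal cardinality $p\le k$, obtaining a progress term proportional to $\sum_{i\in A}u_i^2 - \sum_{i\in D}u_i^2$ plus the harmless $\eta^2\|(\nabla f)_S\|^2$ piece. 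Since $A$ collects the largest off-support entries of $u$, it contains the top off-support gradient coordinate, so $\sum_{i\in A}u_i^2 \ge \eta^2\|(\nabla f)_{\bar S}\|_\infty^2$; since every dropped coordinate lies in $S$ with $|u_i|\le R_\infty + \eta\epsilon$ (here both the iterate bound $R_\infty$ and the small on-support gradient $\epsilon$ enter), one gets $\sum_{i\in D}u_i^2 \le 2kR_\infty^2 + 2\eta^2\epsilon^2$. Combining yields a per-step decrease of at least $\tfrac{1}{2\beta}\|\nabla f(\theta)\|_\infty^2 - \beta k R_\infty^2 - \epsilon^2/\beta$ in the deterministic case.

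This decomposition makes all three terms of the final bound transparent. The $\sqrt k$ in $\beta R_\infty\sqrt k$ is exactly the price of truncation: recruiting one large off-support coordinate may evict up to $k$ support coordinates, each of magnitude up to $R_\infty$, so the ``undo'' cost is $\Theta(\beta k R_\infty^2)$ and net progress is guaranteed only once $\|\nabla f(\theta)\|_\infty^2 \gtrsim \beta^2 k R_\infty^2$. The $\epsilon$ term is inherited from the accuracy to which we optimize over the support, and the $\beta\sigma$ term from gradient noise. Declaring a step ``bad'' when $\|\nabla f(\theta)\|_\infty \ge c(\beta R_\infty\sqrt k + \beta\sigma + \epsilon)$ makes its guaranteed decrease at least a constant times $\max\{\beta k R_\infty^2,\,\beta\sigma^2,\,\epsilon^2/\beta\}$, and dividing the total decrease $f(\theta_0)-f(\theta_T)$ by this quantity reproduces the claimed count $O(\beta(f(\theta_0)-f(\theta_T))\cdot\min\{1/\epsilon^2,\,1/(\beta^2 k R_\infty^2),\,1/(\beta^2\sigma^2)\})$.

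The main obstacle is the stochastic version of the descent lemma. Because the index sets $A$ and $D$ are chosen from the \emph{noisy} vector $u = \theta - \eta g_\theta$, they are random and correlated with the noise $\xi = g_\theta - \nabla f(\theta)$, so the clean deterministic accounting of $\sum_{i\in A}u_i^2$ and $\sum_{i\in D}u_i^2$ must be carried out in expectation while controlling the event that a coordinate with large \emph{true} gradient fails to be selected. I expect to absorb these effects using only $\mathbb{E}[\|\xi\|^2]\le\sigma^2$: the noise contributes an additive $O(\beta\sigma^2)$ to the per-step correction (hence the $\beta\sigma$ slack in the gradient bound), and the threshold is set large enough that the top true-gradient coordinate still dominates the perturbed ordering. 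The remaining steps are routine bookkeeping: verifying $|A|=|D|\le k$, checking that the optimize-over-support phase can only decrease $f$ (so it never hurts the telescoping), and assembling the three regimes into the stated minimum.
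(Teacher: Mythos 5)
Your proposal is correct and follows essentially the same route as the paper's proof: your completed-square form of restricted smoothness is exactly Lemma~\ref{lem:smooth-ub}, your added/dropped-coordinate split of the bracket into $\sum_{i\in A}u_i^2-\sum_{i\in D}u_i^2$ is an equivalent rewriting of the paper's $\left\Vert(\theta-\eta\nabla f(\theta))_{S\setminus S'}\right\Vert^2-\left\Vert\eta\nabla f(\theta)_{S\cup S'}\right\Vert^2$ accounting, and the $R_\infty$ and $\epsilon$ bounds on dropped coordinates, the $O(\beta\sigma^2)$ absorption of gradient noise, and the telescoping all match (the paper returns a uniformly random iterate and applies AM--QM where you count bad steps, which yields the same iteration bound). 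The one point you should make explicit is the edge case $S'=S$ (your set $A$ empty), where the lower bound $\sum_{i\in A}u_i^2\ge\eta^2\left\Vert\nabla f(\theta)_{\overline{S}}\right\Vert_\infty^2$ is vacuous; the paper handles it separately by observing that if thresholding leaves the support unchanged then every off-support coordinate of $u$ is at most $R_\infty+\eta\epsilon$ in magnitude, hence $\left\Vert\nabla f(\theta)\right\Vert_\infty\le\beta R_\infty+\epsilon$ and the iterate is already near-stationary.
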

This theorem provides provable guarantees even in absence of properties that bound distance to the optimum via function value (such as restricted strong convexity or the $\cPLshort$ condition. Instead, it uses the reasonable assumption that the $\ell_\infty$ norm of all the sparse iterates witnessed during the optimization process is bounded by a parameter $R_\infty$. In fact, in practice we often notice that weights stay in a small range, so this assumption is well motivated.

Since it can not possibly offer guarantees on global optimality, this theorem instead provides a sparse near-stationary point, in the sense that the $\ell_\infty$ norm of the gradient at the returned point is small.
In fact this depends on several parameters, including the target sparsity $k$, smoothness $\beta$ and the promise on $R_\infty$. It is worth mentioning that the $\ell_\infty$ norm of the gradient depends on the norm of the output sparsity, which is independent on the sparsity of any promise, unlike in the previously analyzed cases. 

Several other works attempted to compute sparse near-stationary points~\cite{lin2019dynamic, mohtashami2021simultaneous}. As opposed to these we do not require feedback, nor do we sparsify the gradients. Instead, we simply prune the weights after updating them with a dense gradient, following which we optimize in the sparse support until we reach a point whose gradient has only small coordinates within that support. 

Since all iterates are in a small $\ell_\infty$  box, we can show that the progress guaranteed by the dense gradient step alone is "mildly" affected by pruning. Hence unless we are already at a near-stationary point in, which case the algorithm can terminate, we decrease the value of $f$ significantly, while only suffering a bit of penalty which is bounded by the parameters of the instance.

We provide full proofs in Section~\ref{subsec:sparse-stationary}.

\subsubsection{Proof Approach}
\label{sec:proof-approach}
Let us briefly explain the intuition behind our theoretical analyses.  We can view IHT as a version of projected gradient descent, where iterates are projected onto the \textit{non-convex} domain of $k$-sparse vectors. 

In general, when the domain is convex, projections do not hurt convergence. This is because under specific assumptions, gradient descent makes progress by provably decreasing the $\ell_2$ distance between the iterate and the optimal solution. As projecting the iterate back onto the convex domain can only improve the distance to the optimum, convergence is unaffected even in those constrained settings.

In our setting, as the set of $k$-sparse vectors is non-convex, the distance to the optimal solution can increase. However, we can show a trade-off between how much this distance increases and the ratio between the target and optimal sparsity $k/k^*$. 
\subparagraph{Intuition.}
Intuitively, consider a point $\widetilde{\theta}$ obtained by taking a gradient step
\begin{equation}
\widetilde{\theta} = \theta - \eta \nabla f(\theta)\,. \label{eq:vanilla-sgd-step}
\end{equation}
While this step provably decreases the distance to the optimum $\|\widetilde{\theta} - \theta^*\| \ll \|\theta - \theta^*\|$, after applying the projection 
by moving to the projection $T_k(\widetilde{\theta})$, the distance $\|T_k(\widetilde{\theta}) - \theta^*\|$ may increase again. The key is that this increase can be controlled. For example, the new distance can be bounded via triangle inequality by
\[
\|T_k(\widetilde{\theta}) - \theta^*\| \leq \|\widetilde{\theta} - T_k(\widetilde{\theta})\|  + \|\widetilde{\theta} - \theta^*\| \leq 2\|\widetilde{\theta} - \theta^*\| \,.
\]
The last inequality follows from the fact that by definition $T_k(\widetilde{\theta})$ is the closest $k$-sparse vector to $\widetilde{\theta}$ in $\ell_2$ norm, and thus the additional distance payed to move to this projected point is bounded by $\|\widetilde{\theta} - \theta^*\|$. Thus, if the gradient step (\ref{eq:vanilla-sgd-step}) made sufficient progress, for example 
$\|\widetilde{\theta} - \theta^*\| \leq \frac{1}{3} \|\theta - \theta^*\|$ we can conclude that the additional truncating step does not fully undo progress, as 
\[
\|T_k(\widetilde{\theta}) - \theta^*\| \leq 2\|\widetilde{\theta} - \theta^*\| \leq \frac{2}{3} \|\theta - \theta^*\|\,,
\]
so the iterate still converges to the optimum.

In reality, we can not always guarantee that a single gradient step reduces the distance by a large constant fraction -- as a matter of fact this is determined by how well the function $f$ is conditioned. However we can reduce the lost progress that is caused by the truncation step simply by increasing the number of nonzeros of the target solution, i.e. increasing the ratio $k / k^*$.

This is captured by the following crucial lemma which also appears in~\cite{jain2014iterative}. A short proof can be found in Section~\ref{sec:missing-proofs}.
\begin{lem}
	\label{lem:sparse-proj}Let $\xx^{*}$ be a $\ss^{*}$-sparse vector,
	and let $\xx$ be an $n$-sparse vector. Then
	\[
	\frac{\left\Vert T_{\ss}\left(\xx\right)-\xx\right\Vert ^{2}}{n-\ss}\leq\frac{\left\Vert \xx^{*}-\xx\right\Vert ^{2}}{n-\ss^{*}}\,.
	\]
\end{lem}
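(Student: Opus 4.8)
The plan is to unpack both sides of the claimed inequality as \emph{averages of squared coordinates} over carefully chosen index sets, and then to compare those averages using two elementary facts about ``keeping the smallest entries''. Throughout I write $\theta$ for $\xx$, and I assume $k\ge k^*$, which is the only regime in which the statement is used (for $k<k^*$ it can genuinely fail, e.g. $\theta=(3,2,1)$, $\theta^*=(3,2,0)$, $k=1$, $k^*=2$). Without loss of generality I may also take $\supp(\theta)$ to have size exactly $n$, padding with zero-valued coordinates if necessary; this changes neither $\theta$, nor $T_k(\theta)$, nor $\|\theta^*-\theta\|$. Since $T_k(\theta)$ agrees with $\theta$ on its $k$ largest-magnitude coordinates and vanishes elsewhere, letting $B\subseteq\supp(\theta)$ be the set of the $n-k$ smallest-magnitude coordinates of $\theta$ gives
\[
\frac{\|T_k(\theta)-\theta\|^2}{n-k}=\frac{1}{n-k}\sum_{i\in B}\theta_i^2\,,
\]
so the left-hand side is exactly the average squared magnitude over the $n-k$ smallest coordinates.

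First I would introduce a \emph{bridge} set $B'\subseteq\supp(\theta)$ consisting of the $n-k^*$ smallest-magnitude coordinates. Because $k\ge k^*$ we have $n-k\le n-k^*$, hence $B\subseteq B'$, and every coordinate added in passing from $B$ to $B'$ has magnitude at least that of every coordinate of $B$. Averaging in these larger entries can only increase the mean, which yields the monotonicity bound
\[
\frac{1}{n-k}\sum_{i\in B}\theta_i^2\;\le\;\frac{1}{n-k^*}\sum_{i\in B'}\theta_i^2\,.
\]
The formal justification is simply that each $\theta_j^2$ with $j\in B'\setminus B$ is at least the average of $\{\theta_i^2:i\in B\}$, so appending them does not lower the average.

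Next I would connect $B'$ to the optimum's support. Let $A=\supp(\theta)\setminus\supp(\theta^*)$; since $\theta^*$ is $k^*$-sparse, $|A|\ge n-k^*$. The second elementary fact is \emph{minimality}: among all size-$(n-k^*)$ subsets of $\supp(\theta)$, the bottom set $B'$ minimizes the sum of squares. Choosing any $T\subseteq A$ with $|T|=n-k^*$ therefore gives $\sum_{i\in B'}\theta_i^2\le\sum_{i\in T}\theta_i^2\le\sum_{i\in A}\theta_i^2$. Finally, on every $i\in A$ we have $\theta^*_i=0$, so $(\theta^*-\theta)_i^2=\theta_i^2$ and thus $\sum_{i\in A}\theta_i^2\le\|\theta^*-\theta\|^2$. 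Chaining the inequalities,
\[
\frac{\|T_k(\theta)-\theta\|^2}{n-k}\le\frac{1}{n-k^*}\sum_{i\in B'}\theta_i^2\le\frac{1}{n-k^*}\sum_{i\in A}\theta_i^2\le\frac{\|\theta^*-\theta\|^2}{n-k^*}\,,
\]
which is the desired bound.

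The individual calculations are all routine; the only real content, and the step I expect to require care, is pairing the two combinatorial facts through the intermediate set $B'$: monotonicity of the bottom-average enlarges the averaging window from $n-k$ to $n-k^*$, while minimality of the bottom-sum lets me trade the smallest $(n-k^*)$ coordinates for coordinates lying in $\supp(\theta)\setminus\supp(\theta^*)$, where $\theta^*$ contributes nothing and the gap $\theta^*-\theta$ is directly controlled. I would also state explicitly the harmless reduction to $|\supp(\theta)|=n$ and the requirement $k\ge k^*$, since the inequality fails without the latter.
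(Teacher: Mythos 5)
Your proof is correct and takes essentially the same approach as the paper's: the paper's two-line argument consists precisely of your two steps, namely the monotonicity of $k\mapsto\|T_k(\theta)-\theta\|^2/(n-k)$ (your comparison of bottom-averages over $B\subseteq B'$) followed by the minimality of $T_{k^*}(\theta)$ among all $k^*$-sparse vectors (which your argument via $A=\mathrm{supp}(\theta)\setminus\mathrm{supp}(\theta^*)$ reproves from scratch). Your only additions are spelling out these two one-line claims in detail and making explicit the hypothesis $k\ge k^*$, which the paper leaves implicit but which holds in every application of the lemma.
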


Its usefulness is made obvious in the proof of Theorem~\ref{thm:vanilla-iht-stoch} which explicitly tracks as a measure of progress the distance between the current iterate and the sparse optimum. This is shown in detail in Section~\ref{sec:iht-standard-proof}. 

The proofs of Theorems~\ref{thm:iht-pl} and~\ref{thm:iht-nonconv} are slightly more complicated, as they track progress in terms of function value rather than distance to the sparse optimum. However, similar arguments based on Lemma~\ref{lem:sparse-proj} still go through.  An added benefit is that these analyses show that alternating IHT steps with gradient steps over the sparse support can only help convergence, as these additional steps further decrease error in function value. This fact is important to theoretically justify the performance of the AC/DC algorithm. In the following sections we provide proofs for our theorems.

\subsection{Stochastic IHT for Non-Convex Functions with Concentrated PL Condition}
\label{sec:stoch-iht-pl}
In this section we prove Theorem~\ref{thm:iht-pl} by showing that the ideas developed before apply to non-convex settings. We analyze IHT for a class
of functions that satisfy a special version of the $\PLlong$ ($\PLshort$)
condition~\citep{karimi2016linear} which is standard in non-convex optimization, and certain versions of it were essential in several works analyzing the convergence of training methods for deep neural networks~\citep{liu2020toward, allen2019convergence}. Usually this
condition says that small gradient norm i.e. approximate stationarity
implies closeness to optimum in function value. Here we use the stronger $(r,\alpha)$-CPL condition (see Equation~\ref{eq:conc-pl}), which considers the norm of the gradient
contributed by its largest coordinates in absolute value.

We prove strong convergence bounds for functions that satisfy the
$\cPLshort$ condition. Compared to the classical $\PLlong$ condition,
this adds the additional assumption that most of the mass of the gradient
is concentrated on a small subset of coordinates. This phenomenon
has been witnessed in several instances, and is implicitly used in~\citep{lin2019dynamic}. %

Before proceeding with the proof we again provide a few useful lemmas.
\begin{lem}
	\label{lem:smooth-ub}If $f$ is $(\ell, \beta)$-smooth, then for any $\ell$-sparse $\delta$, one has that
	\[
	f\left(\xx+\delta\right)\leq f\left(\xx\right)+\frac{\beta}{2}\left\Vert \left(\frac{1}{\beta}\nabla f\left(\xx\right)+\delta\right)_{\supp\left(\delta\right)}\right\Vert ^{2}-\frac{1}{2\beta}\left\Vert \nabla f\left(\xx\right)_{\supp\left(\delta\right)}\right\Vert ^{2}\ .
	\]
\end{lem}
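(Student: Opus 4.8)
The plan is to derive the claimed bound directly from the $(\ell,\beta)$-smoothness inequality by completing the square, being careful to track which coordinates actually enter the computation. Write $g = \nabla f(\theta)$ and let $T = \supp(\delta)$. Since $\delta$ is $\ell$-sparse, the restricted smoothness assumption applies verbatim and gives
\[
f(\theta+\delta) \leq f(\theta) + g^\top \delta + \frac{\beta}{2}\|\delta\|^2 \,.
\]

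First I would observe that both $\delta$-dependent terms on the right only see the coordinates in $T$: because $\delta$ vanishes outside $T$, we have $g^\top \delta = g_T^\top \delta$ and $\|\delta\|^2 = \|\delta_T\|^2$, where $(\cdot)_T$ denotes restriction to the index set $T$. Thus the entire right-hand side can be rewritten purely in terms of the restricted vectors $g_T$ and $\delta_T$, so that no contribution from coordinates outside $\supp(\delta)$ appears.

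Next I would complete the square on $T$. The key algebraic identity is
\[
g_T^\top \delta + \frac{\beta}{2}\|\delta\|^2 = \frac{\beta}{2}\left\|\left(\delta + \tfrac{1}{\beta}g\right)_T\right\|^2 - \frac{1}{2\beta}\|g_T\|^2 \,,
\]
which is verified by expanding $\frac{\beta}{2}\bigl\|\delta_T + \tfrac{1}{\beta}g_T\bigr\|^2 = \frac{\beta}{2}\|\delta_T\|^2 + \delta_T^\top g_T + \frac{1}{2\beta}\|g_T\|^2$ and cancelling the cross term against $g_T^\top \delta = \delta_T^\top g_T$. Substituting this back into the smoothness bound and identifying $\bigl(\delta + \tfrac{1}{\beta}g\bigr)_T = \bigl(\tfrac{1}{\beta}\nabla f(\theta) + \delta\bigr)_{\supp(\delta)}$ together with $g_T = \nabla f(\theta)_{\supp(\delta)}$ yields exactly the stated inequality.

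I do not expect any serious obstacle here: the argument is essentially a one-line completion of the square. The only point requiring care — and the reason the lemma is phrased with explicit support restrictions — is the bookkeeping showing that the linear and quadratic terms depend only on $\supp(\delta)$, so that the squared term on the right involves the gradient \emph{restricted} to $\supp(\delta)$ rather than the full gradient. This restricted form is precisely what makes the lemma useful downstream: it isolates the gain $\frac{1}{2\beta}\|\nabla f(\theta)_{\supp(\delta)}\|^2$ obtainable by moving along coordinates in the support, which is exactly the quantity that the top-$k$ truncation step in IHT is designed to capture.
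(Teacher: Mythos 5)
Your proof is correct and follows essentially the same route as the paper's: both are a completion of the square applied to the smoothness upper bound, with the only cosmetic difference being that you restrict to $\supp(\delta)$ before completing the square, whereas the paper completes the square over all coordinates and then cancels the contributions outside $\supp(\delta)$ between the two squared terms.
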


\begin{proof}
	Applying smoothness we bound
	\begin{align*}
	f\left(\xx+\delta\right) & \leq f\left(\xx\right)+\left\langle \nabla f\left(\xx\right),\delta\right\rangle +\frac{\beta}{2}\left\Vert \delta\right\Vert ^{2}\\
	& =f\left(\xx\right)+\frac{1}{2\beta}\left\Vert \nabla f\left(\xx\right)\right\Vert ^{2}+\left\langle \nabla f\left(\xx\right),\delta\right\rangle +\frac{\beta}{2}\left\Vert \delta\right\Vert ^{2}-\frac{1}{2\beta}\left\Vert \nabla f\left(\xx\right)\right\Vert ^{2}\\
	& =f\left(\xx\right)+\frac{1}{2}\left\Vert \frac{1}{\sqrt{\beta}}\nabla f\left(\xx\right)+\sqrt{\beta}\delta\right\Vert ^{2}-\frac{1}{2\beta}\left\Vert \nabla f\left(\xx\right)\right\Vert ^{2}\\
	& =f\left(\xx\right)+\frac{\beta}{2}\left\Vert \frac{1}{\beta}\nabla f\left(\xx\right)+\delta\right\Vert ^{2}-\frac{1}{2\beta}\left\Vert \nabla f\left(\xx\right)\right\Vert ^{2}\ .
	\end{align*}
	Next we notice that the contributions of the two terms $\frac{\beta}{2}\left\Vert \frac{1}{\beta}\nabla f\left(\xx\right)+\delta\right\Vert ^{2}$
	and $\frac{1}{2\beta}\left\Vert \nabla f\left(\xx\right)\right\Vert ^{2}$
	exactly match on the coordinates not touched by $\delta$. Hence everything
	outside the support of $\delta$ cancels out, which yields the desired
	conclusion.
\end{proof}
We require another lemma which will be very useful in the analysis. 
\begin{lem}
	\label{lem:trunc-and-opt}Let $\xx,g\in\dom$ such that $\supp\left(\xx\right)=S$,
	and let $S',S^{*}$ be some arbitrary subsets, with $\left|S'\right|=\left|S\right|>\left|S^{*}\right|$.
	Furthermore suppose that
	\[
	T_{\ss}\left(\xx+g\right)=\left(\xx+g\right)_{S'}\ .
	\]
	Then 
	\[
	\left\Vert \left(\xx+g\right)_{S\setminus S'}\right\Vert ^{2}-\left\Vert g_{S\cup S'}\right\Vert ^{2}\leq\left\Vert \left(\xx+g\right)_{Z\setminus S'}\right\Vert ^{2}-\left\Vert g_{S^{*}}\right\Vert ^{2}\ ,
	\]
	for some set $Z$ such that $\left|Z\setminus S'\right|\leq2\left|S^{*}\right|$.
\end{lem}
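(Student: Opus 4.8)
The plan is to push everything through the single vector $w := \theta + g$, exploiting two elementary facts. Since $\supp(\theta) = S$, we have $w_i = g_i$ for every $i \notin S$; and since $T_k(\theta+g) = w_{S'}$ with $|S'| = |S|$, the threshold $\tau := \min_{i \in S'}|w_i|$ separates the support from its complement, i.e. $|w_j| \le \tau \le |w_i|$ whenever $j \notin S'$ and $i \in S'$. The first move is to rewrite the left-hand side exactly. Decomposing $S \cup S' = (S\setminus S') \sqcup (S\cap S') \sqcup (S'\setminus S)$ and using $w = g$ on $S'\setminus S$ (which lies outside $S$), I get $\|g_{S\cup S'}\|^2 = \|g_S\|^2 + \|w_{S'\setminus S}\|^2$, hence
\[
\|w_{S\setminus S'}\|^2 - \|g_{S\cup S'}\|^2 = \|w_{S\setminus S'}\|^2 - \|w_{S'\setminus S}\|^2 - \|g_S\|^2 .
\]
The negative term $-\|w_{S'\setminus S}\|^2$, contributed by the top-$k$ coordinates that are \emph{not} already in $S$, is exactly what will let me absorb the large set $S\setminus S'$ without paying for it in $Z$.

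Next I would introduce the arbitrary set $S^*$. Writing $\|g_{S^*}\|^2 - \|g_S\|^2 = \|g_{S^*\setminus S}\|^2 - \|g_{S\setminus S^*}\|^2 \le \|g_{S^*\setminus S}\|^2 = \|w_{S^*\setminus S}\|^2$ (again $w=g$ off $S$) gives $-\|g_S\|^2 \le \|w_{S^*\setminus S}\|^2 - \|g_{S^*}\|^2$. Substituting into the identity above, the lemma reduces to finding $Z$ with $|Z\setminus S'|\le 2|S^*|$ such that
\[
\|w_{S\setminus S'}\|^2 - \|w_{S'\setminus S}\|^2 + \|w_{S^*\setminus S}\|^2 \le \|w_{Z\setminus S'}\|^2 ,
\]
which is now a purely quadratic statement about $w$.

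For the construction I would split $S^*\setminus S = B_1 \sqcup B_2$, where $B_1 := (S^*\cap S')\setminus S \subseteq S'\setminus S$ and $B_2 := S^*\setminus(S\cup S')$ is disjoint from both $S$ and $S'$. Because $B_1 \subseteq S'\setminus S$, we have $-\|w_{S'\setminus S}\|^2 + \|w_{B_1}\|^2 = -\|w_{(S'\setminus S)\setminus S^*}\|^2$, so the displayed inequality becomes $\|w_{S\setminus S'}\|^2 - \|w_{(S'\setminus S)\setminus S^*}\|^2 + \|w_{B_2}\|^2 \le \|w_{Z\setminus S'}\|^2$. I then set $Z\setminus S' := B_2 \cup R$, where $R$ is \emph{any} subset of $S\setminus S'$ with $|R| = |B_1|$; this is possible since $|B_1| \le |S'\setminus S| = |S\setminus S'|$, the last equality being the one place the hypothesis $|S|=|S'|$ is used. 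As $R\subseteq S$ and $B_2\cap S=\emptyset$, the two pieces are disjoint and both avoid $S'$, and $|Z\setminus S'| = |B_1| + |B_2| = |S^*\setminus S| \le |S^*| \le 2|S^*|$, well within budget. After cancelling $\|w_{B_2}\|^2$, the claim collapses to
\[
\|w_{(S\setminus S')\setminus R}\|^2 \le \|w_{(S'\setminus S)\setminus S^*}\|^2 .
\]

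This final inequality is the crux and the main obstacle, but it is settled by counting against the threshold $\tau$. Both index sets have exactly $|S\setminus S'| - |B_1|$ elements; every entry of $(S'\setminus S)\setminus S^* \subseteq S'$ has magnitude at least $\tau$, while every entry of $(S\setminus S')\setminus R \subseteq \overline{S'}$ has magnitude at most $\tau$, so with equal cardinalities the sum of squares on the left is at most that on the right. The conceptual content is that the top-$k$ property lets the ``large'' discarded coordinates $S'\setminus S$ pay for the ``small'' retained-then-truncated coordinates $S\setminus S'$, leaving only an $|S^*|$-sized residual — the set $Z\setminus S'$ — to be handled explicitly.
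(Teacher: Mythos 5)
Your proof is correct and is essentially the paper's argument with the bookkeeping rearranged: both rest on the same mechanism, namely that the top-$k$ threshold lets the equal-cardinality sets $S\setminus S'$ (small entries, outside $S'$) and $S'\setminus S$ (large entries, inside $S'$) pay for one another, leaving a residual $Z\setminus S'$ consisting of a $|B_1|$-sized piece of $S\setminus S'$ together with $B_2=S^*\setminus(S\cup S')$, which is exactly the set the paper arrives at as $\left(\left(S^{*}\cup S\right)\setminus S'\right)\setminus R$. Your accounting even yields the slightly sharper bound $|Z\setminus S'|\le |S^*\setminus S|\le |S^*|$ rather than $2|S^*|$, but since only $2|S^*|$ is needed downstream this changes nothing.
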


\begin{proof}
	We prove this as follows. We write
	\begin{align*}
	\left\Vert \left(\xx+g\right)_{S\setminus S'}\right\Vert ^{2}-\left\Vert g_{S\cup S'}\right\Vert ^{2} & =\left\Vert \left(\xx+g\right)_{\left(S^{*}\cup S\right)\setminus S'}\right\Vert ^{2}-\left\Vert \left(\xx+g\right)_{S^{*}\setminus\left(S\cup S'\right)}\right\Vert ^{2}-\left\Vert g_{S\cup S'}\right\Vert ^{2}\\
	& =\left\Vert \left(\xx+g\right)_{\left(S^{*}\cup S\right)\setminus S'}\right\Vert ^{2}-\left\Vert g_{S^{*}\setminus\left(S\cup S'\right)}\right\Vert ^{2}-\left\Vert g_{S\cup S'}\right\Vert ^{2}\\
	& =\left\Vert \left(\xx+g\right)_{\left(S^{*}\cup S\right)\setminus S'}\right\Vert ^{2}-\left\Vert g_{S^{*}\cup S\cup S'}\right\Vert ^{2}\\
	& =\left\Vert \left(\xx+g\right)_{\left(S^{*}\cup S\right)\setminus S'}\right\Vert ^{2}-\left\Vert g_{S'\setminus\left(S^{*}\cup S\right)}\right\Vert ^{2}-\left\Vert g_{S^{*}\cup S}\right\Vert ^{2}\ .
	\end{align*}
	Since 
	\[
	\left\Vert g_{S'\setminus\left(S^{*}\cup S\right)}\right\Vert ^{2}=\left\Vert \left(\xx+g\right)_{S'\setminus\left(S^{*}\cup S\right)}\right\Vert ^{2}\geq\left\Vert \left(\xx+g\right)_{R}\right\Vert ^{2}\ ,
	\]
	where $R$ is a subset $R\subseteq\left(S^{*}\cup S\right)\setminus S'$
	with $\left|R\right|=\left|S'\setminus\left(S^{*}\cup S\right)\right|$.
	Such a set definitely exists as 
	\begin{align*}
	\left|\left(S^{*}\cup S\right)\setminus S'\right| & \geq\left|S\setminus S'\right|=\left|S'\setminus S\right|\geq\left|S'\setminus\left(S^{*}\cup S\right)\right|=\left|R\right|\ .
	\end{align*}
	Hence we obtain that
	\[
	\left\Vert \left(\xx+g\right)_{S\setminus S'}\right\Vert ^{2}-\left\Vert g_{S\cup S'}\right\Vert ^{2}\leq\left\Vert \left(\xx+g\right)_{\left(\left(S^{*}\cup S\right)\setminus S'\right)\setminus R}\right\Vert ^{2}-\left\Vert g_{S^{*}\cup S}\right\Vert ^{2}\ .
	\]
	Note that 
	\begin{align*}
	\left|\left(\left(S^{*}\cup S\right)\setminus S'\right)\setminus R\right| & =\left|\left(S^{*}\cup S\right)\setminus S'\right|-\left|R\right|=\left|\left(S^{*}\cup S\right)\setminus S'\right|-\left|S'\setminus\left(S^{*}\cup S\right)\right|\\
	& \leq\left(\left|S^{*}\right|+\left|S\setminus S'\right|\right)-\left(\left|S'\setminus S\right|-\left|S^{*}\right|\right)\\
	& =2\left|S^{*}\right|\ .
	\end{align*}
	This concludes the proof.
\end{proof}
Using this we derive the following useful corollary.
\begin{cor}
	\label{cor:prog}Let $\xx,g\in\dom$ such that $\supp\left(\xx\right)=S$,
	and let $S',S^{*}$ be arbitrary subsets, with $\left|S'\right|=\left|S\right|>\left|S^{*}\right|$. Furthermore suppose that
	\[
	T_{\ss}\left(\xx+g\right)=\left(\xx+g\right)_{S'}\ .
	\]
	Then one has that 
	\[
	\left\Vert \left(\xx+g\right)_{S\setminus S'}\right\Vert ^{2}-\left\Vert g_{S\cup S'}\right\Vert ^{2}\leq\frac{2\left|S^{*}\right|+\left|\supp\left(\xx^{*}\right)\right|}{\left|S'\right|-\left|\supp\left(\xx^{*}\right)\right|}\cdot\left\Vert \left(\xx+g\right)_{T}-\xx^{*}\right\Vert ^{2}-\left\Vert g_{S^{*}}\right\Vert ^{2}\ ,
	\]
	for some $T$, such that $\left|T\right|\leq2\left|S^{*}\right|+\left|\supp\left(\xx^{*}\right)\right|+\left|S'\right|$
	and $\supp\left(\xx^{*}\right)\subseteq T$.
\end{cor}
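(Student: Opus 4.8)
The plan is to combine Lemma~\ref{lem:trunc-and-opt} with the sparse-projection bound of Lemma~\ref{lem:sparse-proj}. Lemma~\ref{lem:trunc-and-opt} already reduces the quantity of interest to $\left\Vert(\xx+g)_{S\setminus S'}\right\Vert^2 - \left\Vert g_{S\cup S'}\right\Vert^2 \leq \left\Vert(\xx+g)_{Z\setminus S'}\right\Vert^2 - \left\Vert g_{S^*}\right\Vert^2$ for some set $Z$ with $\left|Z\setminus S'\right|\leq 2\left|S^*\right|$. Thus the only remaining work is to control the overflow term $\left\Vert(\xx+g)_{Z\setminus S'}\right\Vert^2$ by the squared distance $\left\Vert(\xx+g)_T-\xx^*\right\Vert^2$ with the stated size ratio, after which the corollary follows by direct substitution into the conclusion of Lemma~\ref{lem:trunc-and-opt}.

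First I would fix the target support by setting $T = (Z\setminus S')\cup S'\cup\supp(\xx^*)$. By construction $\supp(\xx^*)\subseteq T$ and $\left|T\right|\leq 2\left|S^*\right|+\left|S'\right|+\left|\supp(\xx^*)\right|$, which are precisely the two structural requirements the corollary places on $T$. Since $Z\setminus S'$ is disjoint from $S'$ and contained in $T$, it is contained in $T\setminus S'$, so $\left\Vert(\xx+g)_{Z\setminus S'}\right\Vert^2 \leq \left\Vert(\xx+g)_{T\setminus S'}\right\Vert^2$, and it suffices to bound the latter.

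Next I would apply Lemma~\ref{lem:sparse-proj} to the restricted vector $x := (\xx+g)_T$, which is $\left|T\right|$-sparse, using the $\left|\supp(\xx^*)\right|$-sparse minimizer $\xx^*$. The key observation is that the hypothesis $T_\ss(\xx+g)=(\xx+g)_{S'}$ (with $\ss=\left|S'\right|$) says that $S'$ indexes the $\ss$ largest entries of $\xx+g$ in absolute value \emph{globally}; since $S'\subseteq T$, the same set $S'$ also indexes the $\ss$ largest entries of the restriction $x$, so $T_\ss(x)=(\xx+g)_{S'}$ and the truncation remainder is exactly $\left\Vert T_\ss(x)-x\right\Vert^2 = \left\Vert(\xx+g)_{T\setminus S'}\right\Vert^2$. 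Lemma~\ref{lem:sparse-proj} then yields
\[
\left\Vert(\xx+g)_{T\setminus S'}\right\Vert^2 \leq \frac{\left|T\right|-\left|S'\right|}{\left|T\right|-\left|\supp(\xx^*)\right|}\cdot\left\Vert(\xx+g)_T-\xx^*\right\Vert^2\,.
\]

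Finally I would bound the size ratio: from $\left|T\right|\leq 2\left|S^*\right|+\left|S'\right|+\left|\supp(\xx^*)\right|$ we get $\left|T\right|-\left|S'\right|\leq 2\left|S^*\right|+\left|\supp(\xx^*)\right|$, while $S'\subseteq T$ gives $\left|T\right|-\left|\supp(\xx^*)\right|\geq \left|S'\right|-\left|\supp(\xx^*)\right|$, so the fraction is at most $\frac{2\left|S^*\right|+\left|\supp(\xx^*)\right|}{\left|S'\right|-\left|\supp(\xx^*)\right|}$. Chaining this with the previous monotonicity inequality and substituting into Lemma~\ref{lem:trunc-and-opt} produces exactly the claimed bound. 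The main obstacle, and the only genuinely non-routine step, is the third one: verifying that $S'$ remains the top-$\ss$ set \emph{after restriction to $T$}, so that the truncation remainder of $x$ equals $(\xx+g)_{T\setminus S'}$ and not some strictly smaller set; everything else is bookkeeping on cardinalities and the elementary monotonicity of norms under shrinking supports. Ties in the top-$\ss$ selection should be broken by a fixed rule so that the global and restricted truncations agree on $S'$.
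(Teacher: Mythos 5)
Your proposal is correct and takes essentially the same route as the paper's own proof: the paper also sets $T=Z\cup S'\cup\supp\left(\xx^{*}\right)$, chains Lemma~\ref{lem:trunc-and-opt} with the monotonicity of the norm under enlarging the support, and then applies Lemma~\ref{lem:sparse-proj} to the restriction $\left(\xx+g\right)_{T}$ with the identical cardinality bookkeeping for the ratio. The one step you single out as non-routine --- that $S'$ remains the top-$\ss$ support after restricting to $T\supseteq S'$, so the truncation remainder is exactly $\left\Vert \left(\xx+g\right)_{T\setminus S'}\right\Vert ^{2}$ --- is used implicitly but not spelled out in the paper, so your write-up is, if anything, slightly more careful.
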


\begin{proof}
	Using Lemma \ref{lem:trunc-and-opt} we can write 
	\begin{align*}
	\left\Vert \left(\xx+g\right)_{S\setminus S'}\right\Vert ^{2}-\left\Vert g_{S\cup S'}\right\Vert ^{2} & \leq\left\Vert \left(\xx+g\right)_{Z\setminus S'}\right\Vert ^{2}-\left\Vert g_{S^{*}}\right\Vert ^{2}\leq\left\Vert \left(\xx+g\right)_{\left(Z\cup\supp\left(\xx^{*}\right)\right)\setminus S'}\right\Vert ^{2}-\left\Vert g_{S^{*}}\right\Vert ^{2}\\
	& =\left\Vert \left(\xx+g\right)_{Z\cup\supp\left(\xx^{*}\right)\cup S'}-\left(\xx+g\right)_{S'}\right\Vert ^{2}-\left\Vert g_{S^{*}}\right\Vert ^{2}\ .
	\end{align*}
	where $\left|Z\cup\supp\left(\xx^{*}\right)\cup S'\right|\leq2\left|S^{*}\right|+\left|\supp\left(\xx^{*}\right)\right|+\left|S'\right|$.
	Applying Lemma \ref{lem:sparse-proj} we furthermore obtain that 
	\begin{align*}
	\left\Vert \left(\xx+g\right)_{Z\cup S^{*}\cup S'}-\left(\xx+g\right)_{S'}\right\Vert ^{2} & \leq\frac{\left|Z\cup\supp\left(\xx^{*}\right)\cup S'\right|-\left|S'\right|}{\left|Z\cup\supp\left(\xx^{*}\right)\cup S'\right|-\left|\supp\left(\xx^{*}\right)\right|}\cdot\left\Vert \left(\xx+g\right)_{Z\cup\supp\left(\xx^{*}\right)\cup S'}-\xx^{*}\right\Vert ^{2}\\
	& \leq\frac{2\left|S^{*}\right|+\left|\supp\left(\xx^{*}\right)\right|}{\left|S'\right|-\left|\supp\left(\xx^{*}\right)\right|}\cdot\left\Vert \left(\xx+g\right)_{Z\cup\supp\left(\xx^{*}\right)\cup S'}-\xx^{*}\right\Vert ^{2}\ .
	\end{align*}
\end{proof}
We can now proceed with the main proof.

\begin{proof}[Proof of Theorem~\ref{thm:iht-pl}]
	For simplicity we first provide the proof for the deterministic version, which roughly follows the ideas described in~\citep{jain2014iterative}. Afterwards, we extend it to the stochastic setting. To simplify notation, throughout this proof we will use $\eta=\frac{1}{\beta}$.
	
	Using Lemma \ref{lem:smooth-ub} we can write that for the update
	\[
	\delta=T_{\ss}\left(\xx-\eta\nabla f\left(\xx\right)\right)-\xx\,,
	\]
	we have
	\begin{align*}
	f\left(\xx'\right) & \leq f\left(\xx\right)+\frac{1}{2\eta}\left\Vert \left(\frac{1}{\beta}\nabla f\left(\xx\right)+T_{\ss}\left(\xx-\eta\nabla f\left(\xx\right)\right)-\xx\right)_{\text{supp}\left(\delta\right)}\right\Vert ^{2}-\frac{1}{2\eta}\left\Vert \eta\nabla f\left(\xx\right)_{\text{supp}\left(\delta\right)}\right\Vert ^{2}\\
	& =f\left(\xx\right)+\frac{1}{2\eta}\left\Vert \left(T_{\ss}\left(\xx-\eta\nabla f\left(\xx\right)\right)-\left(\xx-\eta\nabla f\left(\xx\right)\right)\right)_{\text{supp}\left(\delta\right)}\right\Vert ^{2}-\frac{1}{2\eta}\left\Vert \eta\nabla f\left(\xx\right)_{\text{supp}\left(\delta\right)}\right\Vert ^{2}\ .
	\end{align*}
	At this point we use the fact that by definition $\text{supp}\left(\delta\right)=S'\cup S$.
	Furthermore we see that 
	\begin{align*}
	\left\Vert \left(T_{\ss}\left(\xx-\eta\nabla f\left(\xx\right)\right)-\left(\xx-\eta\nabla f\left(\xx\right)\right)\right)_{S'\cup S}\right\Vert ^{2} & =\left\Vert \left(\xx-\eta\nabla f\left(\xx\right)\right)_{S\setminus S'}\right\Vert ^{2}
	\end{align*}
	since $T_{\ss}\left(\xx-\eta\nabla f\left(\xx\right)\right)$ exactly
	matches $\xx-\eta\nabla f\left(\xx\right)$ for the coordinates in
	$S'$, and is $0$ for all the others. Thus we have that
	\begin{align*}
	f\left(\xx'\right) & \leq f\left(\xx\right)+\frac{1}{2\eta}\left(\left\Vert \left(\xx-\eta\nabla f\left(\xx\right)\right)_{S\setminus S'}\right\Vert ^{2}-\left\Vert \eta\nabla f\left(\xx\right)_{S\cup S'}\right\Vert ^{2}\right)\ .
	\end{align*}
	In order to apply the $\cPLshort$ property, we need to relate this
	to the contribution to the gradient norm given by the heavy signal
	$\left\Vert T_{\ss^{*}}\left(\nabla f\left(\xx\right)\right)\right\Vert ^{2}$.
	Let $S^{*}$ be the support of $T_{\ss^{*}}\left(\nabla f\left(\xx\right)\right)$.
	We apply Corollary \ref{cor:prog} to further bound
	\begin{align*}
	f\left(\xx'\right) & \leq f\left(\xx\right)+\frac{1}{2\eta}\left(\frac{3\ss^{*}}{\ss-\ss^{*}}\left\Vert \left(\xx-\eta\nabla f\left(\xx\right)\right)_{T}-\xx^{*}\right\Vert ^{2}-\left\Vert \eta\nabla f\left(\xx\right)_{S^{*}}\right\Vert ^{2}\right)\ ,
	\end{align*}
	where $\left|T\right|\leq3\ss^{*}+\ss$. Now we apply the $\cPLshort$
	property as follows. From Lemma \ref{lem:conPLQG} we upper bound
	\begin{align*}
	\left\Vert \left(\xx-\eta\nabla f\left(\xx\right)\right)_{T}-\xx^{*}\right\Vert ^{2} & \leq\left\Vert \left(\xx-\eta\nabla f\left(\xx\right)\right)_{T\cup S}-\xx^{*}\right\Vert ^{2}\\
	& \leq\frac{8}{\alpha}\left(f\left(\xx-\eta\nabla f\left(\xx\right)_{T\cup S}\right)-f\left(\xx^{*}\right)\right)\\
	& \leq\frac{8}{\alpha}\left(f\left(\xx\right)-f\left(\xx^{*}\right)\right)\ ,
	\end{align*}
	In the first inequality we used the fact that $\supp\left(\xx^{*}\right)\subseteq T$.
	In the second one we applied Lemma \ref{lem:conPLQG}. In the third
	one we applied $(3\ss^{*}+2\ss,\beta)$-smoothness
	(since $\left|T\cup S\right|\leq3\ss^{*}+2\ss$) together with the
	fact that $\eta=1/\beta$, and so $f\left(\xx-\eta\nabla f\left(\xx\right)_{T\cup S}\right)\leq f\left(\xx\right)$.
	
	Similarly we apply the $\cPLshort$ inequality to conclude that
	\[
	f\left(\xx'\right)\leq f\left(\xx\right)+\frac{1}{2\eta}\left(\frac{3\ss^{*}}{\ss-\ss^{*}}\cdot\frac{8}{\alpha}\left(f\left(\xx\right)-f\left(\xx^{*}\right)\right)-\eta^{2}\cdot\frac{\alpha}{2}\left(f\left(\xx\right)-f\left(\xx^{*}\right)\right)\right)\ .
	\]
	Thus equivalently:
	\begin{align*}
	f\left(\xx'\right)-f\left(\xx^{*}\right) & \leq\left(f\left(\xx\right)-f\left(\xx^{*}\right)\right)\left(1+\frac{12\ss^{*}}{\ss-\ss^{*}}\cdot\frac{1}{\eta\alpha}-\frac{\eta\alpha}{4}\right)\\
	& =\left(f\left(\xx\right)-f\left(\xx^{*}\right)\right)\left(1+\frac{12\ss^{*}}{\ss-\ss^{*}}\cdot\kappa-\frac{1}{4\kappa}\right)\,,
	\end{align*}
	where $\kappa=\beta/\alpha=1/(\eta\alpha)$. Since $\ss\geq\ss^{*}\cdot\left(96\kappa^{2}+1\right)$
	we equivalently have that $\frac{\ss^{*}}{\ss-\ss^{*}}\leq\frac{1}{96k^{2}}$
	and thus $\frac{12\ss^{*}}{\ss-\ss^{*}}\cdot\kappa\leq\frac{1}{8\kappa}$.
	Hence 
	\[
	f\left(\xx'\right)-f\left(\xx^{*}\right)\leq\left(f\left(\xx\right)-f\left(\xx^{*}\right)\right)\cdot\left(1-\frac{1}{8\kappa}\right)\ .
	\]
	This shows that in $O\left(\kappa\ln\frac{f\left(\xx_{0}\right)-f\left(\xx^{*}\right)}{\epsilon}\right)$
	we reach a point $\xx$ such that $f\left(\xx\right)-f\left(\xx^{*}\right)\leq\epsilon$,
	which concludes the proof.
	
	We extend this proof to the stochastic version in Section \ref{sec:missing-proofs}.
\end{proof}

Now let us show how Corollary~\ref{cor:iht-acdc} follows from the same analysis.
\begin{proof}[Proof of Corollary~\ref{cor:iht-acdc}]
	The proof extends from that to Theorem~\ref{thm:iht-pl}. The difference we need to handle is the error introduced by performing $\Delta_c$ passes through the data instead of a single stochastic gradient step. Suppose that before starting the dense training phase, the current iterate is $\theta$. The key is to bound the error introduced by performing these $\Delta_c$ passes instead of simply changing the iterate by $-\eta \nabla f(\theta)$, prior to applying the pruning step. To do so we first upper bound the change in the iterate after $\Delta _c$ passes, which lead to a new iterate $\widetilde{\theta}$.
	Since we perform $\Delta_c \cdot B$ iterations instead of a single one, we damp down our step size by setting $\eta' = \eta / (\Delta_c B)$, and measure the movement we made compared to the one we would have made with a single deterministic gradient step.
	
	Using the Lipschitz property of the functions in the decomposition, we see that each step changes our iterate by at most $\eta' L$ in $\ell_2$ norm. Hence over $\Delta_c$ passes through the data, each of which involves $B$ mini-batches, the total change in the iterate is at most:
	\[
	\|\widetilde{\theta} - \theta\| \leq \Delta_c \cdot B  \cdot \eta' L = \eta L \,.
	\]
	Using the smoothness property of each $f_i$ this guarantees that for all the seen iterates $\widehat{\theta}$, the gradients of $f_i$'s never deviate significantly from their values at the original point $\theta$:
	\[
	\| \nabla f_i(\widehat{\theta}) - \nabla f_i(\theta) \| \leq \beta \eta  L\,,
	\]
	Thus if we interpret the scaled total movement in iterate $\frac{1}{\eta}(\theta - \widetilde{\theta})$ as a gradient mapping, it satisfies 
	\[
	\left\| \frac{1}{\eta}(\theta - \widetilde{\theta}) - \nabla f(\theta)\ \right\|  \leq \beta \eta L\,.
	\]
	Applying the analysis for the stochastic version of Theorem~\ref{thm:iht-pl} we can treat the error in the gradient mapping exactly as the stochastic noise. For the specific choice of step size used there $\eta = 1/\Theta(\beta)$, we thus get that
	\[
	\left\| \frac{1}{\eta}(\theta - \widetilde{\theta}) - \nabla f(\theta)\ \right\|^2 = O(L^2)\,,
	\]
	which enables us to conclude the analysis.
	Note that the steps performed during the sparse training phases do not affect convergence as they can only improve error in function value, which is the main quantity that our analysis tracks.

\end{proof}

\subsection{Stochastic IHT for Functions with Restricted Smoothness and Strong
	Convexity}\label{sec:iht-standard-proof}

Here we prove Theorem~\ref{thm:vanilla-iht-stoch}. Before proceeding with the proof, we provide a few useful statements related to the
fact that $f$ is well conditioned along sparse directions.
\begin{lem}
	If $f$ is $(2\ss+\ss^{*},\beta)$-smooth and $(\ss+\ss^*, \alpha)$-strongly convex
	then
	\begin{align*}
	f(\xx^{*}) & \geq f(\xx)+\left\langle \nabla f(\xx)_{S\cup S'\cup S^{*}},\xx^{*}-\xx\right\rangle +\frac{\alpha}{2}\left\Vert \xx-\xx^{*}\right\Vert ^{2}\ ,\\
	f\left(\xx-\frac{1}{\beta}\nabla f(\xx)_{S\cup S'\cup S^{*}}\right) & \leq f(\xx)-\frac{1}{2\beta}\left\Vert \nabla f(\xx)_{S\cup S'\cup S^{*}}\right\Vert ^{2}\ .
	\end{align*}
\end{lem}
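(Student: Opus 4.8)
The plan is to prove each inequality by a single direct application of the corresponding restricted assumption; the only substantive point is verifying that the perturbation used in each case is sparse enough to fall within the stated restricted parameter, after which everything follows from the definitions of restricted strong convexity and smoothness.

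For the first inequality I would invoke $(\ss+\ss^*,\alpha)$-strong convexity at the displacement $\delta = \xx^* - \xx$. Since $\supp(\xx) = S$ and $\supp(\xx^*) = S^*$, we have $\supp(\delta) \subseteq S \cup S^*$, so $\|\delta\|_0 \leq |S| + |S^*| \leq \ss + \ss^*$, exactly matching the restricted parameter. Strong convexity then gives $f(\xx^*) \geq f(\xx) + \langle \nabla f(\xx), \xx^* - \xx \rangle + \frac{\alpha}{2}\|\xx - \xx^*\|^2$. To match the stated form it remains only to note that, because $\xx^* - \xx$ vanishes outside $S \cup S^* \subseteq S \cup S' \cup S^*$, restricting the gradient to this larger support does not change the inner product: $\langle \nabla f(\xx), \xx^* - \xx\rangle = \langle \nabla f(\xx)_{S \cup S' \cup S^*}, \xx^* - \xx\rangle$.

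For the second inequality I would apply $(2\ss+\ss^*,\beta)$-smoothness at the gradient-descent displacement $\delta = -\frac{1}{\beta}\nabla f(\xx)_{S \cup S' \cup S^*}$. Here $\supp(\delta) \subseteq S \cup S' \cup S^*$, so $\|\delta\|_0 \leq |S| + |S'| + |S^*| \leq 2\ss + \ss^*$, again matching the restricted parameter. Smoothness gives $f(\xx + \delta) \leq f(\xx) + \langle \nabla f(\xx), \delta\rangle + \frac{\beta}{2}\|\delta\|^2$. Since $\delta$ is supported on $S \cup S' \cup S^*$, the linear term collapses to $\langle \nabla f(\xx), \delta\rangle = -\frac{1}{\beta}\|\nabla f(\xx)_{S \cup S' \cup S^*}\|^2$, while the quadratic term equals $\frac{1}{2\beta}\|\nabla f(\xx)_{S \cup S' \cup S^*}\|^2$; combining the two yields the claimed bound.

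The main (and essentially only) thing to be careful about is the support bookkeeping---confirming $|S \cup S^*| \leq \ss + \ss^*$ and $|S \cup S' \cup S^*| \leq 2\ss + \ss^*$ so that the perturbations live within the scope of the restricted assumptions, and recalling that restricting a gradient to any superset of $\supp(\delta)$ leaves $\langle \nabla f(\xx), \delta\rangle$ unchanged. There is no deeper obstacle: this lemma is a convenience packaging of the restricted strong-convexity and smoothness inequalities, specialized to the particular supports that arise in the IHT analysis of Theorem~\ref{thm:vanilla-iht-stoch}.
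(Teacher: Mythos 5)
Your proof is correct and follows essentially the same route as the paper: the first inequality is a direct application of restricted strong convexity to $\delta=\xx^*-\xx$ (which the paper dismisses as "follows directly from the definition"), and the second is exactly the paper's computation, applying restricted smoothness to $\delta=-\frac{1}{\beta}\nabla f(\xx)_{S\cup S'\cup S^*}$ and collapsing the inner product onto the support of $\delta$. Your explicit support bookkeeping ($|S\cup S^*|\leq \ss+\ss^*$ and $|S\cup S'\cup S^*|\leq 2\ss+\ss^*$) is the only detail the paper leaves implicit, and it checks out.
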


\begin{proof}
	The former follows directly from the definition. For the latter we
	have
	\begin{align*}
	f\left(\xx-\frac{1}{\beta}\nabla f(\xx)_{S\cup S'\cup S^{*}}\right) & \leq f(\xx)-\frac{1}{\beta}\left\langle \nabla f(\xx),\nabla f(\xx)_{S\cup S'\cup S^{*}}\right\rangle +\frac{\beta}{2}\left\Vert \nabla f(\xx)_{S\cup S'\cup S^{*}}\right\Vert ^{2}\\
	& =f(\xx)-\frac{1}{2\beta}\left\Vert \nabla f(\xx)_{S\cup S'\cup S^{*}}\right\Vert ^{2}\ .
	\end{align*}
\end{proof}
Finally we can prove the main statement:
\begin{proof}
	We will track progress by measuring the distance$\left\Vert \xx-\xx^{*}\right\Vert ^{2}$.
	To do so we write
	\begin{align*}
	\left\Vert \xx'-\xx^{*}\right\Vert ^{2} & =\left\Vert T_{\ss}\left(\xx-\eta g_{\xx}\right)-\xx^{*}\right\Vert ^{2}\\
	& =\left\Vert \left(\xx-\eta g_{\xx}-\xx^{*}\right)+\left(T_{\ss}\left(\xx-\eta g_{\xx}\right)-\left(\xx-\eta g_{\xx}\right)\right)\right\Vert ^{2}\\
	& =\left\Vert \left(\xx-\eta g_{\xx}-\xx^{*}\right)_{S'\cup S^{*}}+\left(T_{\ss}\left(\left(\xx-\eta g_{\xx}\right)_{S'\cup S^{*}}\right)-\left(\xx-\eta g_{\xx}\right)_{S'\cup S^{*}}\right)\right\Vert ^{2}\ .
	\end{align*}
	The last identity follows from the fact that the term inside the norm
	is only supported at $S'\cup S^{*}$. Thus, after applying the triangle
	inequality, we obtain that:
	\[
	\left\Vert \xx'-\xx^{*}\right\Vert ^{2}\leq\left(\left\Vert \left(\xx-\eta g_{\xx}\right)_{S'\cup S^{*}}-\xx^{*}\right\Vert +\left\Vert T_{\ss}\left(\left(\xx-\eta g_{\xx}\right)_{S'\cup S^{*}}\right)-\left(\xx-\eta g_{\xx}\right)_{S'\cup S^{*}}\right\Vert \right)^{2}\ .
	\]
	The second term can now be bounded using Lemma \ref{lem:sparse-proj}.
	Since the sparsity of the projected point is $\ss+\ss^{*}$ we have
	that 
	\begin{align*}
	\left\Vert T_{\ss}\left(\left(\xx-\eta g_{\xx}\right)_{S'\cup S^{*}}\right)-\left(\xx-\eta g_{\xx}\right)_{S'\cup S^{*}}\right\Vert ^{2} & \leq\frac{\ss+\ss^{*}-\ss}{\ss+\ss^{*}-\ss^{*}}\left\Vert \left(\xx-\eta g_{\xx}\right)_{S'\cup S^{*}}-\xx^{*}\right\Vert ^{2}\\
	& =\frac{\ss^{*}}{\ss}\cdot\left\Vert \left(\xx-\eta g_{\xx}\right)_{S'\cup S^{*}}-\xx^{*}\right\Vert ^{2}\,.
	\end{align*}
	Therefore:
	\begin{align*}
	\left\Vert \xx'-\xx^{*}\right\Vert ^{2} & \leq\left(1+\sqrt{\frac{s^{*}}{s}}\right)^{2}\cdot\left\Vert \left(\xx-\eta g_{\xx}\right)_{S'\cup S^{*}}-\xx^{*}\right\Vert ^{2}\leq\left(1+\sqrt{\frac{s^{*}}{s}}\right)^{2}\cdot\left\Vert \left(\xx-\eta g_{\xx}\right)_{S\cup S'\cup S^{*}}-\xx^{*}\right\Vert ^{2}\ .
	\end{align*}
	The second inequality follows from the fact that as we increase the
	support of $\xx-\eta g_{\xx}$ to also include $S\setminus\left(S'\cup S^{*}\right)$,
	the norm inside can only increase. Finally by expanding, we write:
	\begin{align*}
	\left\Vert \left(\xx-\eta g_{\xx}\right)_{S\cup S'\cup S^{*}}-\xx^{*}\right\Vert ^{2} & =\left\Vert \xx-\eta\left(g_{\xx}\right)_{S\cup S'\cup S^{*}}-\xx^{*}\right\Vert ^{2}\\
	& =\left\Vert \xx-\xx^{*}\right\Vert ^{2}-2\eta\left\langle \left(g_{\xx}\right)_{S\cup S'\cup S^{*}},\xx-\xx^{*}\right\rangle +\eta^{2}\left\Vert \left(g_{\xx}\right)_{S\cup S'\cup S^{*}}\right\Vert ^{2}\\
	& \leq\left\Vert \xx-\xx^{*}\right\Vert ^{2}-2\eta\left\langle \nabla f\left(\xx\right)_{S\cup S'\cup S^{*}},\xx-\xx^{*}\right\rangle +2\eta^{2}\left\Vert \nabla f\left(\xx\right)_{S\cup S'\cup S^{*}}\right\Vert ^{2}\\
	& +\underbrace{2\eta\left\langle \left(\nabla f\left(\xx\right)-g_{\xx}\right)_{S\cup S'\cup S^{*}},\xx-\xx^{*}\right\rangle +2\eta^{2}\left\Vert \left(\nabla f\left(\xx\right)-g_{\xx}\right)_{S\cup S'\cup S^{*}}\right\Vert ^{2}}_{\zeta}\ ,
	\end{align*}
	where we use $\zeta$ to denote the error term introduced by using
	stochastic gradients. Next we bound the fist part of the term above.
	To do so we use lemma to write
	\begin{align*}
	& \left\Vert \xx-\xx^{*}\right\Vert ^{2}-2\eta\left\langle \nabla f\left(\xx\right)_{S\cup S'\cup S^{*}},\xx-\xx^{*}\right\rangle +2\eta^{2}\left\Vert \nabla f\left(\xx\right)_{S\cup S'\cup S^{*}}\right\Vert ^{2}\\
	\leq & \left\Vert \xx-\xx^{*}\right\Vert ^{2}-2\eta\left(f\left(\xx\right)-f\left(\xx^{*}\right)-\frac{\alpha}{2}\left\Vert \xx-\xx^{*}\right\Vert ^{2}\right)+2\eta^{2}\left\Vert \nabla f\left(\xx\right)_{S\cup S'\cup S^{*}}\right\Vert ^{2}\\
	= & \left\Vert \xx-\xx^{*}\right\Vert ^{2}\left(1-\eta\alpha\right)-2\eta\left(f\left(\xx\right)-f\left(\xx^{*}\right)-\eta\left\Vert \nabla f\left(\xx\right)_{S\cup S'\cup S^{*}}\right\Vert ^{2}\right)\\
	= & \left\Vert \xx-\xx^{*}\right\Vert ^{2}\left(1-\eta\alpha\right)-2\eta\left(f\left(\xx\right)-f\left(\xx^{*}\right)-\frac{1}{2\beta}\left\Vert \nabla f\left(\xx\right)_{S\cup S'\cup S^{*}}\right\Vert ^{2}\right)-2\eta\cdot\left(\frac{1}{2\beta}-\eta\right)\left\Vert \nabla f\left(\xx\right)_{S\cup S'\cup S^{*}}\right\Vert ^{2}\\
	\leq & \left\Vert \xx-\xx^{*}\right\Vert ^{2}\left(1-\eta\alpha\right)-2\eta\cdot\left(\frac{1}{2\beta}-\eta\right)\left\Vert \nabla f\left(\xx\right)_{S\cup S'\cup S^{*}}\right\Vert ^{2}\ .
	\end{align*}
	For the first inequality we used the restricted strong convexity property,
	while for the second we used the restricted smoothness property. Now
	for the error term we have that:
	\[
	\mathbb{E}\left[\zeta\vert\xx\right]\leq2\eta^{2}\sigma^{2}\ ,
	\]
	which enables us to conclude that 
	\[
	\mathbb{E}\left[\left\Vert \xx'-\xx^{*}\right\Vert ^{2}\bigg\vert\xx\right]\leq\left(1+\sqrt{\frac{\ss^{*}}{\ss}}\right)^{2}\cdot\left(\left\Vert \xx-\xx^{*}\right\Vert ^{2}\left(1-\eta\alpha\right)-2\eta\cdot\left(\frac{1}{2\beta}-\eta\right)\left\Vert \nabla f\left(\xx\right)_{S\cup S'\cup S^{*}}\right\Vert ^{2}+2\eta^{2}\sigma^{2}\right)\ .
	\]
	Setting $\eta=\frac{1}{2\beta}$ this gives us that
	\[
	\mathbb{E}\left[\left\Vert \xx'-\xx^{*}\right\Vert ^{2}\bigg\vert\xx\right]\leq\left(1+\sqrt{\frac{s^{*}}{s}}\right)^{2}\cdot\left(\left\Vert \xx-\xx^{*}\right\Vert ^{2}\left(1-\frac{1}{2}\cdot\frac{\alpha}{\beta}\right)+\frac{1}{2}\left(\frac{\sigma}{\beta}\right)^{2}\right)\ .
	\]
	Thus for as long as $\left\Vert \xx-\xx^{*}\right\Vert ^{2}\geq2\frac{\sigma^{2}}{\alpha\beta}$
	we have that
	\[
	\mathbb{E}\left[\left\Vert \xx'-\xx^{*}\right\Vert ^{2}\bigg\vert\xx\right]\leq\left(1+\sqrt{\frac{\ss^{*}}{\ss}}\right)^{2}\cdot\left(1-\frac{1}{4}\cdot\frac{\alpha}{\beta}\right)\cdot\left\Vert \xx-\xx^{*}\right\Vert ^{2}\ .
	\]
	We see that the expected squared distance contracts by setting the
	ratio $\ss/\ss^{*}$ sufficiently large. Indeed if $\ss\geq81\left(\beta/\alpha\right)^{2}\cdot\ss^{*}$
	we have that: 
	
	\begin{align*}
	\mathbb{E}\left[\left\Vert \xx'-\xx^{*}\right\Vert ^{2}\bigg\vert\xx\right] & \leq\left(1+\frac{1}{9}\cdot\frac{\alpha}{\beta}\right)^{2}\cdot\left(1-\frac{1}{4}\cdot\frac{\alpha}{\beta}\right)\cdot\left\Vert \xx-\xx^{*}\right\Vert ^{2}\leq\left(1-\frac{1}{36}\cdot\frac{\alpha}{\beta}\right)\cdot\left\Vert \xx-\xx^{*}\right\Vert ^{2}\ .
	\end{align*}
	Taking expectation over the entire history of iterates we can thus
	conclude that after $T=O\left(\frac{\beta}{\alpha}\cdot\ln\frac{\left\Vert \xx_{0}-\xx^{*}\right\Vert ^{2}}{\epsilon/\beta+\sigma^{2}/\alpha\beta}\right)$
	iterations we obtain
	\[
	\mathbb{E}\left[\left\Vert \xx_{T}-\xx^{*}\right\Vert ^{2}\right]\leq\frac{\epsilon}{\beta}+\frac{2\sigma^{2}}{\alpha\beta}\ .
	\]
	Applying the restricted smoothness property, this also gives us that:
	\[
	\mathbb{E}\left[f\left(\xx_{T}\right)-f\left(\xx^{*}\right)\right]\leq\epsilon+\frac{2\sigma^{2}}{\alpha}\ .
	\]
	which is what we wanted.
\end{proof}

\subsection{Finding a Sparse Nearly-Stationary Point}
\label{subsec:sparse-stationary}

Here we prove Theorem~\ref{thm:iht-nonconv}. For simplicity, we first prove the deterministic version of the theorem, where $\sigma = 0$. We show how to extend this proof to the stochastic version in Section~\ref{sec:missing-proofs}.
\begin{proof}
	The proof is similar to that of Theorem \ref{thm:iht-pl}, but in
	addition requires that the algorithm alternates standard IHT steps with optimizing inside
	the support of the iterate. More precisely given a current iterate
	supported at $S$, we additionally run an inner loop which optimizes
	only over the coordinate in $S$, seeking a near stationary point
	$\xx$ such that $\left\Vert \nabla f\left(\xx\right)_{S}\right\Vert \leq\epsilon$.
	Thus in our analysis we can assume that before performing a gradient,
	followed by a pruning step, our current iterate supported at $S$
	satisfies $\left\Vert \nabla f\left(\xx\right)_{S}\right\Vert \leq\epsilon$.
	Hence following the previous analyses and setting $\eta=1/\beta$,
	we have:
	\begin{align*}
	f\left(\xx'\right) & \leq f\left(\xx\right)+\frac{1}{2\eta}\left(\left\Vert \left(\xx-\eta\nabla f\left(\xx\right)\right)_{S\setminus S'}\right\Vert ^{2}-\left\Vert \eta\nabla f\left(\xx\right)_{S\cup S'}\right\Vert ^{2}\right)\\
	& \leq f\left(\xx\right)+\frac{1}{2\eta}\left(2\left\Vert \xx_{S\setminus S'}\right\Vert ^{2}+2\eta^{2}\epsilon^{2}-\left\Vert \eta\nabla f\left(\xx\right)\right\Vert _{\infty}^{2}\right)\ .
	\end{align*}
	For the second inequality we first applied triangle inequality together
	with the near stationarity condition for $\xx$ to bound 
	\begin{align*}
	\left\Vert \left(\xx-\eta\nabla f\left(\xx\right)\right)_{S\setminus S'}\right\Vert  & \leq\left\Vert \xx_{S\setminus S'}\right\Vert +\left\Vert \eta\nabla f\left(\xx\right)_{S\setminus S'}\right\Vert \leq\left\Vert \xx_{S\setminus S'}\right\Vert +\eta\epsilon\ ,
	\end{align*}
	then applied $\left(a+b\right)^{2}\leq2a^{2}+2b^{2}$. In addition
	we used the fact that 
	\[
	\left\Vert \nabla f\left(\xx\right)_{S\cup S'}\right\Vert _{\infty}=\left\Vert \nabla f\left(\xx\right)\right\Vert _{\infty}\ .
	\]
	This follows from a simple case analysis. If one of the coordinates
	of the gradient with the largest absolute value lies in $S\cup S'$,
	we are done. Otherwise, we have two possibilities. Either $S'$ is
	different from $S$, so $S\cup S'$ contains one coordinate outside
	of $S$. Since these coordinates are obtained by hard thresholding
	and $\xx$ is supported only at $S$, the absolute value of $\nabla f\left(\xx\right)$
	at the largest coordinate in $S'\setminus S$ must be at least as
	large as the largest in $S^{*}\setminus S$, which yields our claim.
	Otherwise we have that $S=S'$, which means that the pruning step
	did not change the support, and thus
	\[
	\left\Vert \eta\nabla f\left(\xx\right)_{\overline{S}}\right\Vert _{\infty}\leq\min_{i\in S}\left|\xx-\eta\nabla f\left(\xx\right)\right|_{i}\leq R_{\infty}+\eta\epsilon
	\]
	which guarantees that 
	\begin{align*}
	\left\Vert \nabla f\left(\xx\right)\right\Vert _{\infty} & =\max\left\{ \left\Vert \nabla f\left(\xx\right)_{S}\right\Vert _{\infty},\left\Vert \nabla f\left(\xx\right)_{\overline{S}}\right\Vert _{\infty}\right\} \\
	& \leq\max\left\{ \epsilon,\epsilon+\frac{R_{\infty}}{\eta}\right\} =\epsilon+\beta R_{\infty}\ ,
	\end{align*}
	and so we are done.
	
	In the former case we thus see that 
	\[
	\frac{\eta}{2}\left\Vert \nabla f\left(\xx\right)\right\Vert _{\infty}^{2}\leq f\left(\xx\right)-f\left(\xx'\right)+\left(\frac{\left\Vert \xx_{S\setminus S'}\right\Vert ^{2}}{\eta}+\eta\epsilon^{2}\right)\leq f\left(\xx\right)-f\left(\xx'\right)+\left(\frac{\ss R_{\infty}^{2}}{\eta}+\eta\epsilon^{2}\right)\ .
	\]
	Telescoping over $T$ iterations we see that 
	\[
	\frac{\eta}{2}\sum_{t=0}^{T-1}\left\Vert \nabla f\left(\xx_{t}\right)\right\Vert _{\infty}^{2}\leq f\left(\xx_{0}\right)-f\left(\xx_{T}\right)+T\cdot\left(\frac{\ss R_{\infty}^{2}}{\eta}+\eta\epsilon^{2}\right)
	\]
	and so returning a random point $\xx$ among those witnessed during
	the algorithm we have 
	\[
	\mathbb{E}\left[\left\Vert \nabla f\left(\xx\right)\right\Vert _{\infty}^{2}\right]\leq\frac{2\left(f\left(\xx_{0}\right)-f\left(\xx_{T}\right)\right)}{\eta T}+2\left(\frac{\ss R_{\infty}^{2}}{\eta^{2}}+\epsilon^{2}\right)
	\]
	By AM-QM,
	\[
	\mathbb{E}\left[\left\Vert \nabla f\left(\xx\right)\right\Vert _{\infty}\right]\leq\sqrt{\mathbb{E}\left[\left\Vert \nabla f\left(\xx\right)\right\Vert _{\infty}^{2}\right]}\,,
	\]
	which enables us to conclude that after sufficiently many iterations
	we are guaranteed to find a point such that 
	\[
	\left\Vert \nabla f\left(\xx\right)\right\Vert _{\infty}=O\left(\frac{R_{\infty}\sqrt{k}}{\eta}+\epsilon\right)=O\left(\beta R_{\infty}\sqrt{k}+\epsilon\right)\ .
	\]
\end{proof}

\subsection{Deferred Proofs}

\label{sec:missing-proofs}
\paragraph{Proof of Lemma~\ref{lem:sparse-proj}.}

Our proofs crucially rely on the following lemma. Intuitively it
shows that projecting a vector $\xx$ onto the non-convex set of sparse
vectors does not increase the distance to the optimum by too much.
While this is indeed always true for projections onto convex sets,
in this case we can provably show that the possible increase in distance
is small.

\begin{proof}
	We have that the function 
	\[
	h(k)=\frac{\left\Vert T_{\ss}\left(\xx\right)-\xx\right\Vert ^{2}}{n-\ss}
	\]
	is non-increasing. Indeed, using more nonzeros can only decrease
	the ratio. Thus 
	\[
	\frac{\left\Vert T_{\ss}\left(\xx\right)-\xx\right\Vert ^{2}}{n-\ss}\leq\frac{\left\Vert T_{\ss^{*}}\left(\xx\right)-\xx\right\Vert ^{2}}{n-\ss^{*}}\leq\frac{\left\Vert \xx^{*}-\xx\right\Vert ^{2}}{n-\ss^{*}}\ ,
	\]
	where the last inequality follows from the fact that among all $\ss$-sparse
	vectors, $T_{\ss}\left(\xx\right)$ minimizes the distance to $\xx$.
\end{proof}

\paragraph{Proof of Theorem~\ref{thm:iht-pl} (stochastic version).}
Next we extend the proof of Theorem \ref{thm:iht-pl} to the case when only stochastic gradients are available.
\begin{proof}
	Similarly to before we can write
	\begin{align*}
	f\left(\xx'\right) & \leq f\left(\xx\right)+\frac{\beta}{2}\left\Vert \frac{1}{\beta}\nabla f\left(\xx\right)+\left(T_{\ss}\left(\xx-\eta g_{\xx}\right)-\xx\right)\right\Vert ^{2}-\frac{1}{2\beta}\left\Vert \nabla f\left(\xx\right)\right\Vert ^{2}\\
	& =f\left(\xx\right)+\frac{\beta}{2}\left\Vert \left(T_{\ss}\left(\xx-\eta g_{\xx}\right)-\left(\xx-\frac{1}{\beta}\nabla f\left(\xx\right)\right)\right)_{S\cup S'}\right\Vert ^{2}-\frac{1}{2\beta}\left\Vert \nabla f\left(\xx\right)_{S\cup S'}\right\Vert ^{2}\\
	& \leq f\left(\xx\right)+\frac{\beta}{2}\left(\left\Vert \left(T_{\ss}\left(\xx-\eta g_{\xx}\right)-\left(\xx-\eta g_{\xx}\right)\right)_{S\cup S'}\right\Vert +\left\Vert \left(\eta g_{\xx}-\frac{1}{\beta}\nabla f\left(\xx\right)\right)_{S\cup S'}\right\Vert \right)^{2}-\frac{1}{2\beta}\left\Vert \nabla f\left(\xx\right)_{S\cup S'}\right\Vert ^{2}\\
	& \leq f\left(\xx\right)+\beta\left\Vert \left(T_{\ss}\left(\xx-\eta g_{\xx}\right)-\left(\xx-\eta g_{\xx}\right)\right)_{S\cup S'}\right\Vert ^{2}+\beta\left\Vert \left(\eta g_{\xx}-\frac{1}{\beta}\nabla f\left(\xx\right)\right)_{S\cup S'}\right\Vert ^{2}-\frac{1}{2\beta}\left\Vert \nabla f\left(\xx\right)_{S\cup S'}\right\Vert ^{2}\\
	& =f\left(\xx\right)+\beta\left\Vert \left(\xx-\eta g_{\xx}\right)_{S\setminus S'}\right\Vert ^{2}+\beta\left\Vert \left(\eta g_{\xx}-\frac{1}{\beta}\nabla f\left(\xx\right)\right)_{S\cup S'}\right\Vert ^{2}-\frac{1}{2\beta}\left\Vert \nabla f\left(\xx\right)_{S\cup S'}\right\Vert ^{2}\ .
	\end{align*}
	Next we apply Corollary \ref{cor:prog} to further bound 
	\begin{align*}
	\left\Vert \left(\xx-\eta g_{\xx}\right)_{S\setminus S'}\right\Vert ^{2} & \leq\frac{2\left|S^{*}\right|+\left|\supp\left(\xx^{*}\right)\right|}{\left|S'\right|-\left|\supp\left(\xx^{*}\right)\right|}\cdot\left\Vert \left(\xx-\eta g_{\xx}\right)_{T}-\xx^{*}\right\Vert ^{2}-\left\Vert \eta\left(g_{\xx}\right)_{S^{*}}\right\Vert ^{2}+\left\Vert \eta\left(g_{\xx}\right)_{S\cup S'}\right\Vert ^{2}\\
	& =\frac{3\ss^{*}}{\ss-\ss^{*}}\cdot\left\Vert \left(\xx-\eta g_{\xx}\right)_{T}-\xx^{*}\right\Vert ^{2}-\left\Vert \eta\left(g_{\xx}\right)_{S^{*}}\right\Vert ^{2}+\left\Vert \eta\left(g_{\xx}\right)_{S\cup S'}\right\Vert ^{2}\ ,
	\end{align*}
	where $\left|T\right|\leq3\ss^{*}+\ss$. Thus
	\begin{align*}
	f\left(\xx'\right) & \leq f\left(\xx\right)+\beta\left(\frac{3\ss^{*}}{\ss-\ss^{*}}\cdot\left\Vert \left(\xx-\eta g_{\xx}\right)_{T}-\xx^{*}\right\Vert ^{2}-\left\Vert \eta\left(g_{\xx}\right)_{S^{*}}\right\Vert ^{2}+\left\Vert \eta\left(g_{\xx}\right)_{S\cup S'}\right\Vert ^{2}\right)\\
	& +\beta\left\Vert \left(\eta g_{\xx}-\frac{1}{\beta}\nabla f\left(\xx\right)\right)_{S\cup S'}\right\Vert ^{2}-\frac{1}{2\beta}\left\Vert \nabla f\left(\xx\right)_{S\cup S'}\right\Vert ^{2}\ .
	\end{align*}
	Taking expectations, and applying Lemma \ref{lem:stdevexp} we see
	that 
	\begin{align*}
	\mathbb{E}\left[f\left(\xx'\right)-f^{*}\vert\xx\right] & \leq\mathbb{E}\left[f\left(\xx\right)-f^{*}\vert\xx\right]\\
	& +\beta\left(\frac{3\ss^{*}}{\ss-\ss^{*}}\cdot\left\Vert \left(\xx-\eta\nabla f\left(\xx\right)\right)_{T}-\xx^{*}\right\Vert ^{2}-\left\Vert \eta\nabla f\left(\xx\right)_{S^{*}}\right\Vert ^{2}+\left\Vert \eta\nabla f\left(\xx\right)_{S\cup S'}\right\Vert ^{2}\right)\\
	& +\beta\left\Vert \left(\eta\nabla f\left(\xx\right)-\frac{1}{\beta}\nabla f\left(\xx\right)\right)_{S\cup S'}\right\Vert ^{2}-\frac{1}{2\beta}\left\Vert \nabla f\left(\xx\right)_{S\cup S'}\right\Vert ^{2}+2\beta\eta^{2}\sigma^{2}\\
	& =\mathbb{E}\left[f\left(\xx\right)-f^{*}\vert\xx\right]+\beta\left(\frac{3\ss^{*}}{\ss-\ss^{*}}\cdot\left\Vert \left(\xx-\eta\nabla f\left(\xx\right)\right)_{T}-\xx^{*}\right\Vert ^{2}-\left\Vert \eta\nabla f\left(\xx\right)_{S^{*}}\right\Vert ^{2}\right)\\
	& +\left\Vert \nabla f\left(\xx\right)_{S\cup S'}\right\Vert ^{2}\left(\beta\eta^{2}+\beta\left(\eta-\frac{1}{\beta}\right)^{2}-\frac{1}{2\beta}\right)+2\beta\eta^{2}\sigma^{2}\\
	& \leq\mathbb{E}\left[f\left(\xx\right)-f^{*}\vert\xx\right]+\beta\left(\frac{3\ss^{*}}{\ss-\ss^{*}}\cdot\left\Vert \left(\xx-\eta\nabla f\left(\xx\right)\right)_{T}-\xx^{*}\right\Vert ^{2}-\left\Vert \eta\nabla f\left(\xx\right)_{S^{*}}\right\Vert ^{2}\right)+2\beta\eta^{2}\sigma^{2}\,,
	\end{align*}
	where the last inequality follows from setting $\eta=\frac{1}{2\beta}$,
	which makes $\beta\eta^{2}+\beta\left(\eta-\frac{1}{\beta}\right)^{2}-\frac{1}{2\beta}=0$.
	
	Finally, repeating the argument used for the deterministic proof in
	Section \ref{sec:stoch-iht-pl}, we further bound
	\begin{align*}
	\mathbb{E}\left[f\left(\xx'\right)-f^{*}\vert\xx\right] & \leq f\left(\xx\right)-f^{*}+\beta\left(\frac{3\ss^{*}}{\ss-\ss^{*}}\cdot\frac{8}{\alpha}\left(f\left(\xx\right)-f\left(\xx^{*}\right)\right)-\eta^{2}\cdot\frac{\alpha}{2}\left(f\left(\xx\right)-f\left(\xx^{*}\right)\right)\right)+2\beta\eta^{2}\sigma^{2}\\
	& =\left(f\left(\xx\right)-f^{*}\right)\left(1+\frac{24\ss^{*}}{\ss-\ss^{*}}\cdot\frac{\beta}{\alpha}-\frac{\beta\eta^{2}\alpha}{2}\right)+2\beta\eta^{2}\sigma^{2}\\
	& =\left(f\left(\xx\right)-f^{*}\right)\left(1+\frac{24\ss^{*}}{\ss-\ss^{*}}\cdot\frac{\beta}{\alpha}-\frac{\alpha}{8\beta}\right)+\frac{\sigma^{2}}{2\beta}\ .
	\end{align*}
	Setting $\ss=\ss^{*}\cdot\left(384\left(\frac{\beta}{\alpha}\right)^{2}+1\right)$
	we have $\frac{24\ss^{*}}{\ss-\ss^{*}}\cdot\frac{\beta}{\alpha}\leq24\cdot\frac{1}{384\cdot\left(\beta/\alpha\right)^{2}}\cdot\frac{\beta}{\alpha}=\frac{1}{16}\cdot\frac{\alpha}{\beta}$,
	so
	\[
	\mathbb{E}\left[f\left(\xx'\right)-f^{*}\vert\xx\right]\leq\left(f\left(\xx\right)-f^{*}\right)\left(1-\frac{\alpha}{16\beta}\right)+\frac{\sigma^{2}}{2\beta}\ .
	\]
	Thus for as long as 
	\[
	\frac{\sigma^{2}}{2\beta}\leq\left(f\left(\xx\right)-f^{*}\right)\cdot\frac{\alpha}{32\beta}\iff f\left(\xx\right)-f^{*}\geq16\cdot\frac{\sigma^{2}}{\alpha}
	\]
	one has that 
	\[
	\mathbb{E}\left[f\left(\xx'\right)-f^{*}\vert\xx\right]\leq\left(f\left(\xx\right)-f^{*}\right)\left(1-\frac{\alpha}{32\beta}\right)\ .
	\]
	Taking expectation over the entire history, this shows that after
	$T=O\left(\left(\frac{\beta}{\alpha}\right)\ln\frac{f\left(\xx_{0}\right)-f^{*}}{\epsilon}\right)$
	iterations we obtain an iterate $\xx_{T}$ such that
	\[
	\mathbb{E}\left[f\left(\xx_{T}\right)-f^{*}\right]\leq\epsilon+\frac{16\sigma^{2}}{\alpha}\ ,
	\]
	which concludes the proof.
\end{proof}

\paragraph{Proof of Theorem~\ref{thm:iht-nonconv} (stochastic version).}
We also provide the proof for the stochastic version of Theorem~\ref{thm:iht-nonconv}.

\begin{proof}
	We follow the steps of the proof provided in Section~\ref{subsec:sparse-stationary}.
	More precisely we write:
	\begin{align*}
	f\left(\xx'\right) & \leq f\left(\xx\right)+\frac{\beta}{2}\left\Vert \frac{1}{\beta}\nabla f\left(\xx\right)+\left(T_{\ss}\left(\xx-\eta g_{\xx}\right)-\xx\right)\right\Vert ^{2}-\frac{1}{2\beta}\left\Vert \nabla f\left(\xx\right)\right\Vert ^{2}\\
	& =f\left(\xx\right)+\frac{\beta}{2}\left\Vert \left(T_{\ss}\left(\xx-\eta g_{\xx}\right)-\left(\xx-\frac{1}{\beta}\nabla f\left(\xx\right)\right)\right)_{S\cup S'}\right\Vert ^{2}-\frac{1}{2\beta}\left\Vert \nabla f\left(\xx\right)_{S\cup S'}\right\Vert ^{2}\\
	& \leq f\left(\xx\right)+\frac{\beta}{2}\left(\left\Vert \left(T_{\ss}\left(\xx-\eta g_{\xx}\right)-\left(\xx-\eta g_{\xx}\right)\right)_{S\cup S'}\right\Vert +\left\Vert \left(\eta g_{\xx}-\frac{1}{\beta}\nabla f\left(\xx\right)\right)_{S\cup S'}\right\Vert \right)^{2}-\frac{1}{2\beta}\left\Vert \nabla f\left(\xx\right)_{S\cup S'}\right\Vert ^{2}\\
	& \leq f\left(\xx\right)+\beta\left\Vert \left(T_{\ss}\left(\xx-\eta g_{\xx}\right)-\left(\xx-\eta g_{\xx}\right)\right)_{S\cup S'}\right\Vert ^{2}+\beta\left\Vert \left(\eta g_{\xx}-\frac{1}{\beta}\nabla f\left(\xx\right)\right)_{S\cup S'}\right\Vert ^{2}-\frac{1}{2\beta}\left\Vert \nabla f\left(\xx\right)_{S\cup S'}\right\Vert ^{2}\\
	& =f\left(\xx\right)+\beta\left\Vert \left(\xx-\eta g_{\xx}\right)_{S\setminus S'}\right\Vert ^{2}+\beta\left\Vert \left(\eta g_{\xx}-\frac{1}{\beta}\nabla f\left(\xx\right)\right)_{S\cup S'}\right\Vert ^{2}-\frac{1}{2\beta}\left\Vert \nabla f\left(\xx\right)_{S\cup S'}\right\Vert ^{2}\ ,
	\end{align*}
	where we used the inequality $\left(a+b\right)^{2}\leq2a^{2}+2b^{2}$.
	Applying Lemma \ref{lem:stdevexp} and using the fact that $\left\Vert \nabla f\left(\xx\right)_{S}\right\Vert \leq\epsilon$,
	we see that setting $\eta=1/\beta$ we obtain:
	\begin{align*}
	\mathbb{E}\left[f\left(\theta'\right)\right] & \leq f\left(\theta\right)+\beta\left(\left\Vert \left(\xx-\eta\nabla f\left(\xx\right)\right)_{S\setminus S'}\right\Vert ^{2}+\sigma^{2}\right)\\
	& +\beta\left(\left\Vert \left(\left(\eta-\frac{1}{\beta}\right)\nabla f\left(\xx\right)\right)_{S\cup S'}\right\Vert ^{2}+\sigma^{2}\right)-\frac{1}{2\beta}\left\Vert \nabla f\left(\xx\right)_{S\cup S'}\right\Vert ^{2}\\
	& \leq f\left(\theta\right)+2\beta\left(\left\Vert \xx_{S\setminus S'}\right\Vert ^{2}+\frac{\epsilon^{2}}{\beta^{2}}+\sigma^{2}\right)-\frac{1}{2\beta}\left\Vert \nabla f\left(\xx\right)_{S\cup S'}\right\Vert ^{2}\\
	& \leq f\left(\theta\right)+2\beta\left(\ss R_{\infty}^{2}+\frac{\epsilon^{2}}{\beta^{2}}+\sigma^{2}\right)-\frac{1}{2\beta}\left\Vert \nabla f\left(\xx\right)\right\Vert _{\infty}^{2}\ ,
	\end{align*}
	where again we used the fact that $\left\Vert \nabla f\left(\xx\right)_{S\cup S'}\right\Vert _{\infty}=\left\Vert \nabla f\left(\xx\right)\right\Vert _{\infty}$,
	unless $\left\Vert \nabla f\left(\xx\right)\right\Vert _{\infty}\leq\frac{R_{\infty}}{\eta}=\beta R_{\infty}$.
	Thus, telescoping over $T$ iterations we see that 
	\[
	\frac{1}{2\beta}\sum_{t=0}^{T-1}\left\Vert \nabla f\left(\xx_{t}\right)\right\Vert _{\infty}^{2}\leq f\left(\xx_{0}\right)-f\left(\xx_{T}\right)+T\cdot2\beta\left(\ss R_{\infty}^{2}+\beta^{-2}\epsilon^{2}+\sigma^{2}\right)
	\]
	and so returning a random point $\xx$ among those witnessed during
	the algorithm we have 
	\[
	\mathbb{E}\left[\left\Vert \nabla f\left(\xx\right)\right\Vert _{\infty}^{2}\right]\leq\frac{2\beta\left(f\left(\xx_{0}\right)-f\left(\xx_{T}\right)\right)}{T}+2\left(\beta^{2}\ss R_{\infty}^{2}+\epsilon^{2}+\beta^{2}\sigma^{2}\right)
	\]
	By AM-QM,
	\[
	\mathbb{E}\left[\left\Vert \nabla f\left(\xx\right)\right\Vert _{\infty}\right]\leq\sqrt{\mathbb{E}\left[\left\Vert \nabla f\left(\xx\right)\right\Vert _{\infty}^{2}\right]}
	\]
	which enables us to conclude that after sufficiently many iterations
	we are guaranteed to find a point such that 
	\[
	\left\Vert \nabla f\left(\xx\right)\right\Vert _{\infty}=O\left(\beta R_{\infty}\sqrt{k}+\beta\sigma+\epsilon\right)\ .
	\]
	
\end{proof}

\paragraph{Miscellaneous Proofs.}
\begin{lem}
	\label{lem:stdevexp}Let $\sigma > 0$ and let $g_{\xx}$ be a stochastic gradient satisfying
	standard conditions: 
	\[
	\mathbb{E}[g_{\xx}| \xx] = \nabla f(\xx)], 
	\]
	and
	\[
	\mathbb{E}[\|g_{\xx}- \nabla f(\xx)\|^2] \leq \sigma^2\,.
	\]
	Then for any vector $a\in\mathbb{R}^{N}$ and any subset $S$ of coordinates:
	\[
	\left\Vert \left(\nabla f\left(\xx\right)+a\right)_{S}\right\Vert ^{2}\leq\mathbb{E}\left[\left\Vert \left(g_{\xx}+a\right)_{S}\right\Vert ^{2}\bigg\vert\xx\right]\leq\left\Vert \left(\nabla f\left(\xx\right)+a\right)_{S}\right\Vert ^{2}+\sigma^{2}\ .
	\]
\end{lem}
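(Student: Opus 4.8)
The plan is to prove this by a restricted bias--variance decomposition. First I would write the shifted stochastic gradient as the sum of its conditional mean and a mean-zero fluctuation, $g_\xx + a = (\nabla f(\xx) + a) + (g_\xx - \nabla f(\xx))$, and then project onto the coordinate set $S$, which is a linear operation. Expanding the squared Euclidean norm of the restriction produces three terms: the deterministic piece $\|(\nabla f(\xx)+a)_S\|^2$, a cross term $2\langle (\nabla f(\xx)+a)_S, (g_\xx - \nabla f(\xx))_S\rangle$, and the fluctuation piece $\|(g_\xx - \nabla f(\xx))_S\|^2$.

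Next I would take the conditional expectation given $\xx$. Since $(\nabla f(\xx)+a)_S$ is deterministic conditioned on $\xx$ and restriction to $S$ commutes with expectation, the unbiasedness assumption $\mathbb{E}[g_\xx|\xx] = \nabla f(\xx)$ forces $\mathbb{E}[(g_\xx - \nabla f(\xx))_S \mid \xx] = 0$, so the cross term vanishes. This yields the exact identity
\[
\mathbb{E}\left[\|(g_\xx+a)_S\|^2 \mid \xx\right] = \|(\nabla f(\xx)+a)_S\|^2 + \mathbb{E}\left[\|(g_\xx - \nabla f(\xx))_S\|^2 \mid \xx\right].
\]

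The two desired inequalities then follow immediately by sandwiching the remaining variance term. For the lower bound I would note that $\mathbb{E}[\|(g_\xx - \nabla f(\xx))_S\|^2 \mid \xx] \ge 0$. For the upper bound I would use that restricting to a subset of coordinates can only decrease the norm, so $\|(g_\xx - \nabla f(\xx))_S\|^2 \le \|g_\xx - \nabla f(\xx)\|^2$, and then apply the variance assumption to bound $\mathbb{E}[\|g_\xx - \nabla f(\xx)\|^2 \mid \xx] \le \sigma^2$.

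There is no substantive obstacle here: the entire content is the cancellation of the cross term under unbiasedness together with the monotonicity of the Euclidean norm under coordinate restriction. The only point requiring a word of care is the gap between the stated variance hypothesis, which is written in unconditional form, and the conclusion, which is conditional on $\xx$; I would resolve this by reading Assumption~1 as holding conditionally on $\xx$, which is the standard convention in stochastic optimization and is exactly what the conditional statement of the lemma requires.
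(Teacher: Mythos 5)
Your proposal is correct and follows essentially the same route as the paper's proof: a bias--variance expansion of $\|(g_{\theta}+a)_S\|^2$, cancellation of the cross term by unbiasedness, and bounding the restricted fluctuation term by $\sigma^2$ from above and $0$ from below. Your explicit remark that the variance hypothesis must be read conditionally on $\theta$ is a point the paper leaves implicit, but it does not change the argument.
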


\begin{proof}
	We expand the norm under the expected value as: 
	\begin{align*}
	\mathbb{E}\left[\left\Vert \left(g_{\xx}+a\right)_{S}\right\Vert ^{2}\bigg\vert\xx\right] & =\mathbb{E}\left[\left\Vert \left(\nabla f\left(\xx\right)+a\right)_{S}+\left(g_{\xx}-\nabla f\left(\xx\right)\right)_{S}\right\Vert ^{2}\bigg\vert\xx\right]\\
	& =\mathbb{E}\left[\left\Vert \left(\nabla f\left(\xx\right)+a\right)_{S}\right\Vert ^{2}+\left\Vert \left(g_{\xx}-\nabla f\left(\xx\right)\right)_{S}\right\Vert ^{2}+2\left\langle \left(\nabla f\left(\xx\right)+a\right)_{S},\left(g_{\xx}-\nabla f\left(\xx\right)\right)_{S}\right\rangle \bigg\vert\xx\right]\\
	& =\left\Vert \left(\nabla f\left(\xx\right)+a\right)_{S}\right\Vert ^{2}+\mathbb{E}\left[\left\Vert \left(g_{\xx}-\nabla f\left(\xx\right)\right)_{S}\right\Vert ^{2}\bigg\vert\xx\right]\\
	&+\mathbb{E}\left[2\left\langle \left(\nabla f\left(\xx\right)+a\right)_{S},\left(g_{\xx}-\nabla f\left(\xx\right)\right)_{S}\right\rangle \bigg\vert\xx\right]\\
	& \leq\left\Vert \left(\nabla f\left(\xx\right)+a\right)_{S}\right\Vert ^{2}+\sigma^{2}\ .
	\end{align*}
	From the the chain of equalities above, we can trivially lower bound
	the expectation by $\left\Vert \left(\nabla f\left(\xx\right)+a\right)_{S}\right\Vert ^{2}$
	, which gives us what we needed.
\end{proof}

A crucial step in the proof of Theorem~\ref{thm:iht-pl} requires upper bounding the $\ell_{2}$
distance to the closest global optimizer by the difference in function
value. In general, it is known that this is implied by the $\PLlong$
condition, and so it automatically holds for the stronger concentrated $\PLlong$ condition.
We reproduce the proof from~\citep{karimi2016linear} for completeness.
\begin{lem}
	\label{lem:conPLQG}(From $\PLlong$ to quadratic growth)
	Let $f:\mathbb{R}^{N}\rightarrow\mathbb{R}$ be a function satisfying
	the  $\PLlong$ inequality 
	\[
	\left\Vert \nabla f\left(\xx\right)\right\Vert ^{2}\geq\frac{\alpha}{2}\left(f\left(\xx\right)-f^{*}\right)\ .
	\]
	Then there exists a global minimizer $\xx^{*}$ of $f$ such that
	\[
	f\left(\xx\right)-f^{*}\geq\frac{\alpha}{8}\left\Vert \xx-\xx^{*}\right\Vert ^{2}\ .
	\]
\end{lem}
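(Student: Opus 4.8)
The plan is to run the continuous-time gradient flow $\dot{\xx}(t) = -\nabla f(\xx(t))$ initialized at $\xx(0) = \xx$, and to bound the distance from $\xx$ to the endpoint of the trajectory by its arc length, which the $\PLshort$ inequality controls in terms of the function-value gap. First I would set $D(t) = f(\xx(t)) - f^*$ and differentiate to get $\dot{D}(t) = \left\langle \nabla f(\xx(t)), \dot{\xx}(t)\right\rangle = -\left\Vert \nabla f(\xx(t))\right\Vert^2$. Feeding in the $\PLshort$ inequality yields $\dot{D}(t) \leq -\frac{\alpha}{2} D(t)$, so by Gr\"onwall $D(t) \leq D(0)\, e^{-\alpha t/2} \to 0$. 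This shows the gap vanishes along the flow; the trajectory's convergence to an actual point will come for free once its length is shown to be finite.

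For the length bound, the key manipulation is $-\frac{d}{dt}\sqrt{D(t)} = \frac{\left\Vert \nabla f(\xx(t))\right\Vert^2}{2\sqrt{D(t)}}$. Rewriting the $\PLshort$ hypothesis as $\sqrt{D(t)} \leq \sqrt{2/\alpha}\,\left\Vert \nabla f(\xx(t))\right\Vert$ and substituting this into the denominator gives $-\frac{d}{dt}\sqrt{D(t)} \geq \frac{\sqrt{\alpha}}{2\sqrt{2}}\left\Vert \nabla f(\xx(t))\right\Vert$. Since $\left\Vert \dot{\xx}(t)\right\Vert = \left\Vert \nabla f(\xx(t))\right\Vert$, integrating from $0$ to $\infty$ bounds the total arc length by
\[
\int_0^\infty \left\Vert \dot{\xx}(t)\right\Vert\, dt \leq \frac{2\sqrt{2}}{\sqrt{\alpha}}\left(\sqrt{D(0)} - \sqrt{D(\infty)}\right) = \frac{2\sqrt{2}}{\sqrt{\alpha}}\sqrt{f(\xx) - f^*}\,,
\]
where I used $D(\infty) = 0$ from the first step.

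Finiteness of the arc length forces $\xx(t)$ to be Cauchy, hence convergent to some point $\xx^*$; continuity of $f$ together with $D(\infty)=0$ then identifies $f(\xx^*) = f^*$, so $\xx^*$ is a global minimizer. The net displacement is at most the arc length, $\left\Vert \xx - \xx^*\right\Vert \leq \int_0^\infty \left\Vert \dot{\xx}(t)\right\Vert\, dt \leq \frac{2\sqrt{2}}{\sqrt{\alpha}}\sqrt{f(\xx) - f^*}$, and squaring and rearranging delivers exactly $f(\xx) - f^* \geq \frac{\alpha}{8}\left\Vert \xx - \xx^*\right\Vert^2$. I expect the main obstacle to be the soft-analysis step that turns a finite-length gradient-flow trajectory into a genuine limit point sitting at a global minimizer---namely existence/regularity of the flow, its Cauchy convergence, and identifying the limit as a minimizer---whereas the differential inequalities become routine once the $\sqrt{D}$ substitution is set up.
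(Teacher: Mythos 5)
Your proof is correct and is essentially the paper's own argument (the standard Karimi et al.\ ``PL implies quadratic growth'' proof): the paper flows along $-\nabla g$ for $g=\sqrt{f-f^{*}}$ and uses $\|\nabla g\|\geq\sqrt{\alpha/8}$ to bound the trajectory's arc length by $g(\xx_0)$, while you flow along $-\nabla f$ and differentiate $\sqrt{D(t)}$ --- these trace the same curve up to a time reparametrization, and the resulting arc-length bound $\|\xx-\xx^{*}\|\leq\frac{2\sqrt{2}}{\sqrt{\alpha}}\sqrt{f(\xx)-f^{*}}$ is identical. The soft-analysis issues you flag (global existence of the flow, Cauchy convergence of the finite-length trajectory, identifying the limit as a minimizer) are likewise left implicit in the paper's version, so you are not missing anything the paper supplies.
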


\begin{proof}
	Let $g\left(\xx\right)=\sqrt{f\left(\xx\right)-f^{*}}$
	for which we have
	\[
	\nabla g\left(\xx\right)=\frac{1}{2\sqrt{f\left(\xx\right)-f^{*}}}\nabla f\left(\xx\right)\ .
	\]
	Using the $\PLshort$ condition we have
	\begin{align*}
	\left\Vert \nabla g\left(\xx\right)\right\Vert ^{2} & =\frac{1}{4\left(f\left(\xx\right)-f^{*}\right)}\cdot\left\Vert \nabla f\left(\xx\right)\right\Vert ^{2}\geq\frac{1}{4\left(f\left(\xx\right)-f^{*}\right)}\cdot\frac{\alpha}{2}\cdot\left(f\left(\xx\right)-f^{*}\right)=\frac{\alpha}{8}\ .
	\end{align*}
	Now starting at some $\xx_{0}$, we consider the dynamic $\dot{\xx}=-\nabla g\left(\xx\right)$.
	We see that this always decreases function value until it reaches
	some $\xx_{T}$ for which $\nabla g\left(\xx_{T}\right)=0$
	and hence by the $\PLshort$ inequality, $\xx_{T}$ is a minimizer
	i.e. $f\left(\xx_{T}\right)=f^{*}$. Now we can write 
	\begin{align*}
	g\left(\xx_{T}\right) & =g\left(\xx_{0}\right)+\int_{0}^{T}\left\langle \nabla g\left(\xx_{t}\right),\dot{\xx_{t}}\right\rangle dt=g\left(\xx_{0}\right)+\int_{0}^{T}\left\langle \nabla g\left(\xx_{t}\right),-\nabla g\left(\xx_{t}\right)\right\rangle dt\\
	& =g\left(\xx_{0}\right)-\int_{0}^{T}\left\Vert \nabla g\left(\xx_{t}\right)\right\Vert ^{2}dt\ .
	\end{align*}
	Thus 
	\begin{align*}
	g\left(\xx_{0}\right)-g\left(\xx_{T}\right) & =\int_{0}^{T}\left\Vert \nabla g\left(\xx_{t}\right)\right\Vert ^{2}dt\geq\sqrt{\frac{\alpha}{8}}\cdot\int_{0}^{T}\left\Vert \nabla g\left(\xx_{t}\right)\right\Vert dt=\sqrt{\frac{\alpha}{8}}\cdot\int_{0}^{T}\left\Vert \dot{\xx_{t}}\right\Vert dt\ ,
	\end{align*}
	where we used our lower bound on the norm of $\nabla g\left(\xx\right)$.
	Finally, we use the fact that the last integral lower bounds the total
	movement of $\xx$ as it moves from $\xx_{0}$ to $\xx_{T}$. Thus
	\[
	\int_{0}^{T}\left\Vert \dot{\xx_{t}}\right\Vert dt\geq\left\Vert \xx_{0}-\xx_{T}\right\Vert \ ,
	\]
	so 
	\[
	g\left(\xx_{0}\right)-g\left(\xx_{T}\right)\geq\sqrt{\frac{\alpha}{8}}\left\Vert \xx_{0}-\xx_{T}\right\Vert \ ,
	\]
	which enables us to conclude that
	\[
	f\left(\xx_{0}\right)-f^{*}\geq\frac{\alpha}{8}\left\Vert \xx_{0}-\xx_{T}\right\Vert ^{2}\ ,
	\]
	where $\xx_{T}$ is some global minimizer of $f$. This concludes
	the proof.
\end{proof}

\section{Additional Experiments}

\subsection{CIFAR-100 Experiments}
\label{sec:cifar100}

We tested the IHT pruning algorithm on the CIFAR-100 dataset \citep{cifar100}. We used the WideResNet-28-10 architecture \citep{zagoruyko2016wide}, which to our knowledge gives state-of-the-art performance on this dataset. Models were trained for a fixed 200 epochs, using Stochastic Gradient Descent (SGD) with momentum, and a stepwise decreasing learning rate scheduler. For our main experiment, after a warm-up period of ten epochs, we alternated sparse and dense phases of 20 epochs until epoch 170, at which point we allowed the sparse model to train to convergence for 30 more epochs.  In addition, we found a small benefit to resetting momentum to 0 at each transition from sparse to dense, and we have done so throughout the trials. These experiments were replicated starting from three different seeds, and we report average results and their standard deviations, and they are shown in table \ref{table:sparse-cifar100} in rows labeled "AC/DC-20".

We further explored the possibility of using even smaller dense phases to further reduce training FLOPs. These trials, presented in \ref{table:sparse-cifar100} show the results of  reducing the sparse phase from 20 to 14 or even 7 epochs while keeping the length of dense phases the same, but increasing the number of total epochs to roughly match overall FLOPs (thus increasing the number of sparse and dense phases); overall we train the AC/DC-14 runs for 225 epochs and AC/DC-7 runs for 240 epochs (vs 200 for AC/DC-20). Our experiments show that it is possible to obtain competitive accuracy results with shorter dense phases, and, at higher sparsities, a further FLOP reduction - however, reducing the dense phase too much may lead to some accuracy degradation at higher sparsities. These results suggest that a further reduction in FLOPs is possible by adjusting the length of the dense phase and the overall epochs. We emphasize that $(i)$ these gains are only theoretical until hardware is available that can take advantage of sparsity in training and $(ii)$ these results, although promising, are highly preliminary; additionally, each trial was only run once due to timing constraints. 

We compare our results with Gradual Magnitude Pruning \citep{zhu2017prune}. To our knowledge, we are the first to release CIFAR-100 pruning results for this network architecture, and GMP was chosen as a baseline due to its generally strong performance against a range of other approaches \citep{gale2019state}. We obtain the GMP baseline by training on WideResNet architecture at full density for 50 epochs, then gradually increase the sparsity over the next 100 before allowing the model to converge for the final 50, matching the 200 training epochs of AC/DC. We further validated this baseline by training dense WideResNet-28-10 models for 200 epochs and then gradually pruning over 50 and finetuning over 50 more, for a total of 300 epochs, which gave similar performance at the cost of greater FLOPs and training time.

\begin{table}
\centering
\caption{CIFAR-100 Sparsity results on WideResNet}
\label{table:sparse-cifar100}
\begin{tabular}{@{}ccccc@{}}
\toprule
Method   & \begin{tabular}[c]{@{}c@{}}Top-1\\ Acc. ($\%$)\end{tabular} & Sparsity & \begin{tabular}[c]{@{}c@{}}GFLOPs\\ Inference \end{tabular} & \begin{tabular}[c]{@{}c@{}}EFLOPs\\ Train\end{tabular} \\ \midrule
Dense      & $79.0 \pm 0.25$ & $0\%$      & $11.9$ & $0.36$ \\ \midrule

AC/DC-20       & $79.6 \pm 0.17$  & $49.98\%$  & $0.50 \times$ & $0.72 \times$ \\
AC/DC-14       & $\bf{79.99}$  & $49.98\%$  & $0.50 \times$ & $0.75 \times$ \\
AC/DC-7       & $79.92$  & $49.98\%$  & $0.49 \times$ & $0.73 \times$ \\
GMP & $79.2 \pm 0.17$ & $49.98\%$    & $0.46 \times$ & $1.64 \times$  \\
\midrule  
AC/DC-20       & $\bf{80.0} \pm 0.17 $  & $74.96 \%$  & $0.29 \times$ & $0.6 \times$ \\
AC/DC-14       & ${79.5}  $  & $74.96 \%$  & $0.29 \times$ & $0.59 \times$ \\
AC/DC-7       & ${79.7}  $  & $74.96 \%$  & $0.29 \times$ & $0.54 \times$ \\
GMP      & $78.9 \pm 0.14$ & $74.96 \%$  & $0.26 \times$  & $1.52 \times$  \\
\midrule
AC/DC-20      & $\bf{79.1} \pm 0.07$  & $89.96\%$  & $0.14 \times$ & $0.51 \times$ \\
AC/DC-14      & $79.0 $  & $89.96\%$  & $0.14 \times$ & $0.47 \times$ \\
AC/DC-7      & $78.4 $  & $89.96\%$  & $0.14 \times$ & $0.39 \times$ \\
GMP & $77.7 \pm 0.23$ & $89.96\%$    & $0.08 \times$ & $1.44 \times$  \\
\midrule
AC/DC-20       & $78.2 \pm 0.12$  & $94.95\%$  & $0.08 \times$ & $0.47 \times$ \\
AC/DC-14       & $\bf{78.5}$  & $94.95\%$  & $0.08 \times$ & $0.41 \times$ \\
AC/DC-7       & $77.8 $  & $94.95\%$  & $0.08 \times$ & $0.33 \times$ \\
GMP & $76.6 \pm 0.07$ & $94.95\%$    & $0.07 \times$ & $1.41 \times$  \\
\bottomrule
\end{tabular}
\end{table}

\begin{figure*}[t]
    \centering
    \begin{subfigure}[t]{0.5\textwidth}
        \centering
        \includegraphics[height=1.3in]{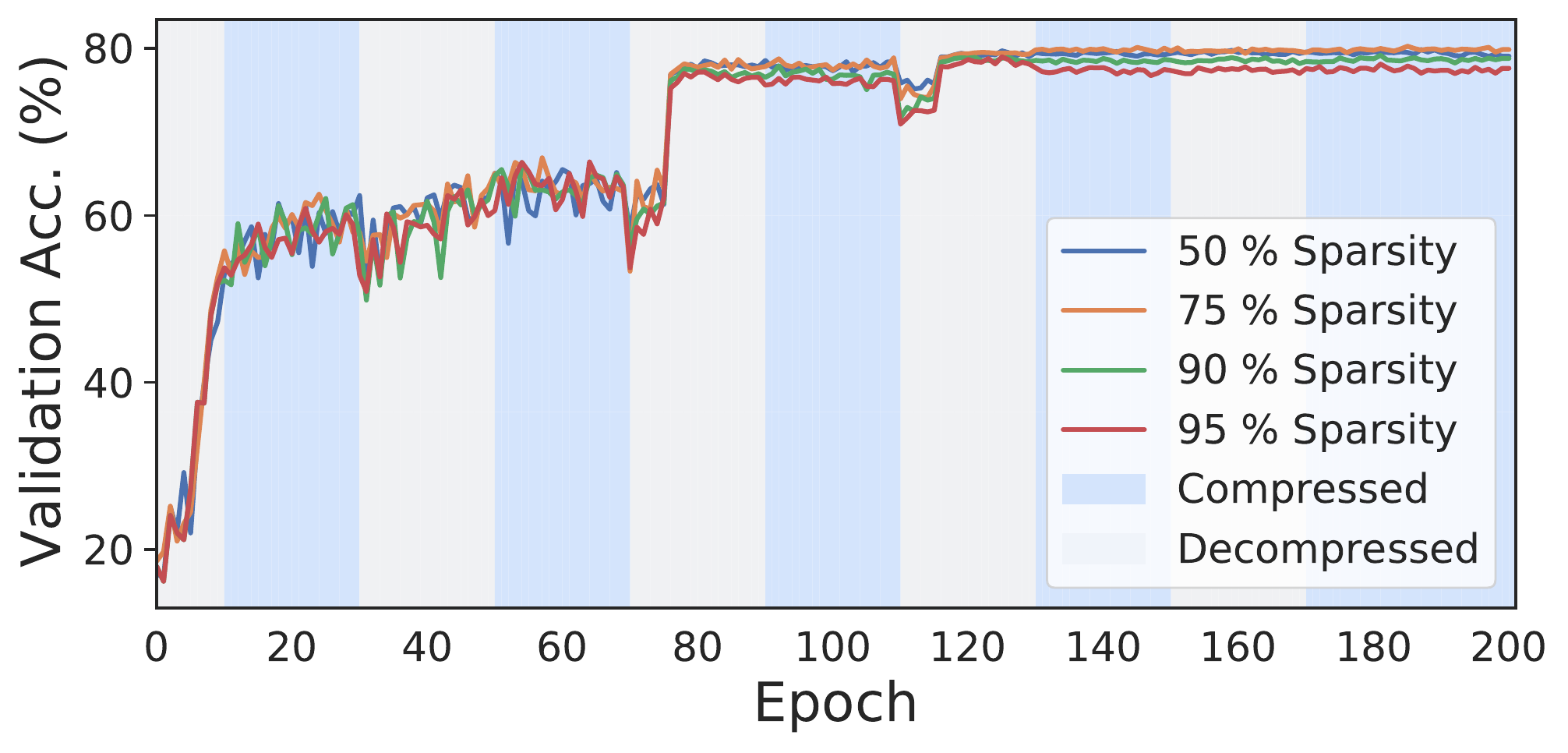}
        \caption{Sparsity pattern and test accuracy}
        \label{fig:valacc-rn50-all}
    \end{subfigure}%
    ~ 
    \begin{subfigure}[t]{0.5\textwidth}
        \centering
        \includegraphics[height=1.3in]{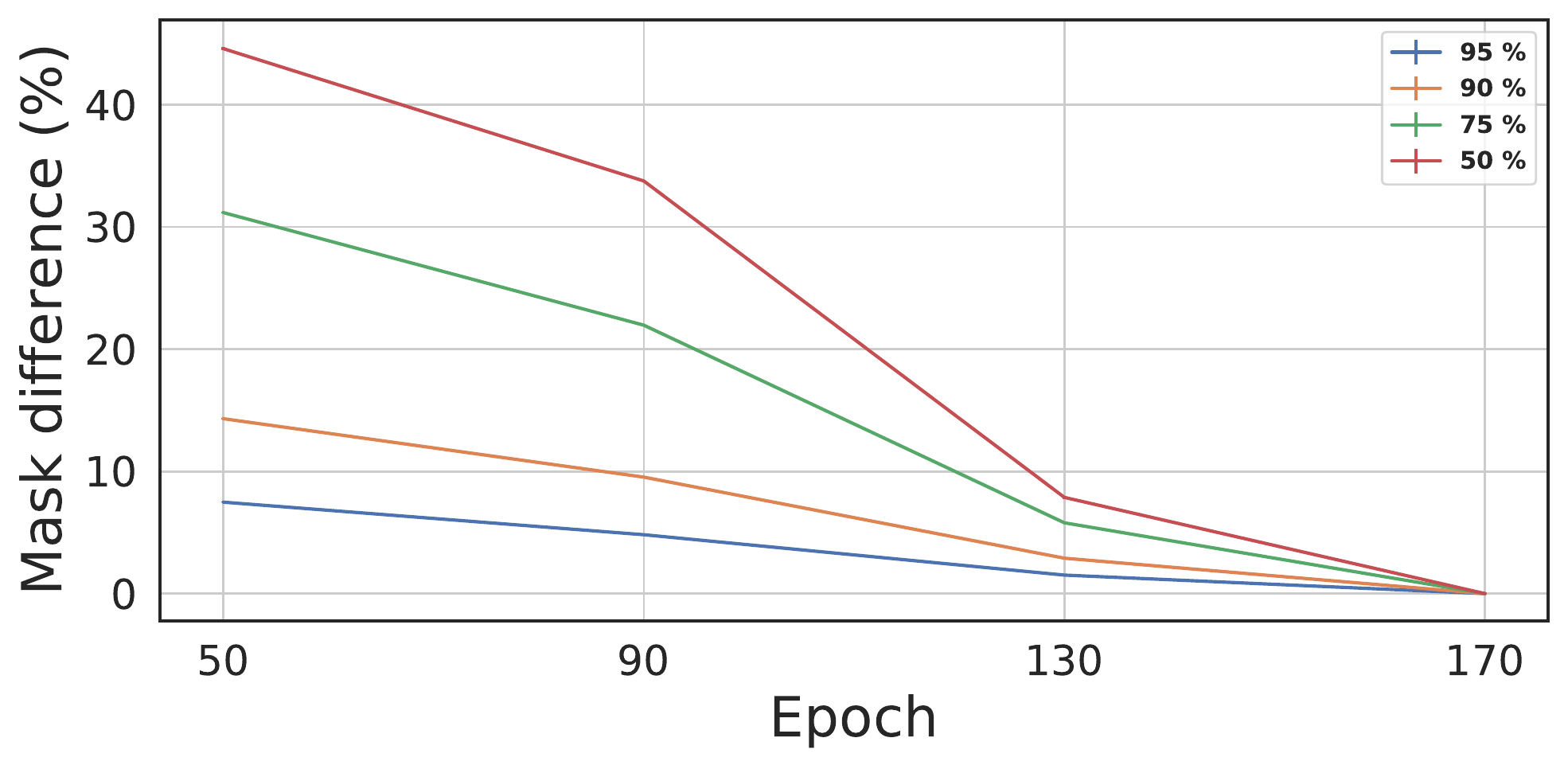}
        \caption{Relative change in consecutive masks}
        \label{fig:masks-rn50-all}
    \end{subfigure}%
    ~
    \caption{Validation accuracy and sparsity during training, together with differences in consecutive masks for WideResNet on CIFAR-100 using AC/DC.}
    \label{fig:acc-masks-rn50}
\end{figure*}

The results are shown in Table~\ref{table:sparse-cifar100}. We see that AC/DC pruning significantly outperforms Gradual Magnitude Pruning at all sparsity levels tested, and further that AC/DC models pruned at $50\%$ and $75\%$ even outperform the dense baseline, while the models pruned at $90\%$ at least match it.

\subsection{ResNet50 on ImageNet}

\parhead{Performance of dense models.} The AC/DC method has an advantage over other pruning methods of obtaining \emph{both} sparse and dense models. The performance of the dense baseline can be recovered after fine-tuning the resulting AC/DC dense model, for a small number of epochs. Namely, we start from the best dense baseline, which is usually obtained after 85 epochs, and replace the final compression phase of 15 epochs with regular dense training; we use the same learning rate scheduler and keep all other training hyper-parameters the same. For 80\% sparsity we recover the dense baseline accuracy completely, while for 90\% we are slightly below the baseline by $0.3\%$. We note that for 90\% sparsity, when the first and last layers are dense, our fine-tuned dense model recovers the baseline accuracy fully. The results for the dense models, together with the baseline accuracy, are presented in Table \ref{table:dense-rn50}, where ($\star$) denotes that the first and last layers of the network are dense. 

\begin{minipage}[c]{0.5\textwidth}
\centering
\captionof{table}{AC/DC Dense ResNet50}
\label{table:dense-rn50}
\begin{tabular}{@{}ccc@{}}
\toprule
\begin{tabular}[c]{@{}c@{}} Target \\ Sparsity \end{tabular} & \begin{tabular}[c]{@{}c@{}}Accuracy \\Dense (\%)\end{tabular} &  \begin{tabular}[c]{@{}c@{}}Accuracy \\ Finetuned (\%) \end{tabular}  \\ \midrule
$0\%$     & $76.84$    & - \\
$80\%$     & $73.82 \pm 0.02$    & $ 76.83 \pm 0.07$ \\
$90\%$     & $73.25 \pm 0.16$    & $ 76.56 \pm 0.1$  \\
$90\%^\star$ & 73.66 & 76.85 \\ \bottomrule
\end{tabular}
\end{minipage}
\begin{minipage}[c]{0.5\textwidth}
\centering
\captionof{table}{AC/DC Dense MobileNetV1}
\label{table:dense-mobnet}
\begin{tabular}{@{}ccc@{}}
\toprule
Sparsity & \begin{tabular}[c]{@{}c@{}}Accuracy \\ Dense (\%)\end{tabular} & \begin{tabular}[c]{@{}c@{}}Accuracy\\ Finetuned (\%)\end{tabular} \\ \midrule
0\%      & 71.78 & -  \\    
75\%     & $68.55  \pm 0.2$  & $71.63 \pm 0.1$    \\
90\%     & $67.47 \pm 0.13$    & $70.86 \pm 0.08$ \\ 
90\%$^\star$ & $67.65$ & $70.97$ \\ \bottomrule           
\end{tabular}
\end{minipage}

Moreover, an interesting property of AC/DC is that the resulting dense networks have a small percentage of zero-valued weights, as shown in Table~\ref{table:acdc-dense-zeros}. This is most likely caused by ``dead'' neurons or convolutional filters resulted after each compression phase; the corresponding weights do not get re-activated during the dense stages, as they can no longer receive gradients. This can be easily seen particularly for high sparsity (95\% and 98\%) where a non-trivial percentage of the weights remain inactive.

\parhead{Dynamics of masks and FLOPs during training.} The mask dynamics, measured by the relative change between two consecutive compression masks, have an important influence on the AC/DC training process. Namely, more changes between consecutive compression masks typically imply more exploration of the weights' space, and faster recovery from sub-optimal pruning decision, which in turn results in more accurate sparse models. As can be seen in Figure~\ref{fig:masks-rn50-all}, the relative mask difference between consecutive compression phases decreases during training, but it is critical to be maintained at a non-trivial level. For completeness, we also included the evolution of the validation accuracy during AC/DC training, for all sparsity levels (please see Figure~\ref{fig:valacc-rn50-all}); at 98\% sparsity in particular, it is easiest to see that dense phases enable the exploration of better pruning masks, which ensure that the sparse model improves continuously during training.

Despite the dynamics of the compression masks, we noticed that the sparsity distribution does not change significantly. This can be observed from the number of inference FLOPs per sample, at the end of each compression phase, in Figure \ref{fig:sparse-flops-rn50}. Interestingly, as training progresses, AC/DC also induces structured sparsity, as more neurons and convolutional filters get pruned. This was previously discussed in more detail (see Table~\ref{table:acdc-dense-zeros}), but can also be deduced from the decreasing inference FLOPs at the end of each dense phase, as shown in Figure~\ref{fig:dense-flops-rn50}.

\begin{figure*}[t]
    \centering
    \begin{subfigure}[t]{0.5\textwidth}
        \centering
        \includegraphics[height=1.3in]{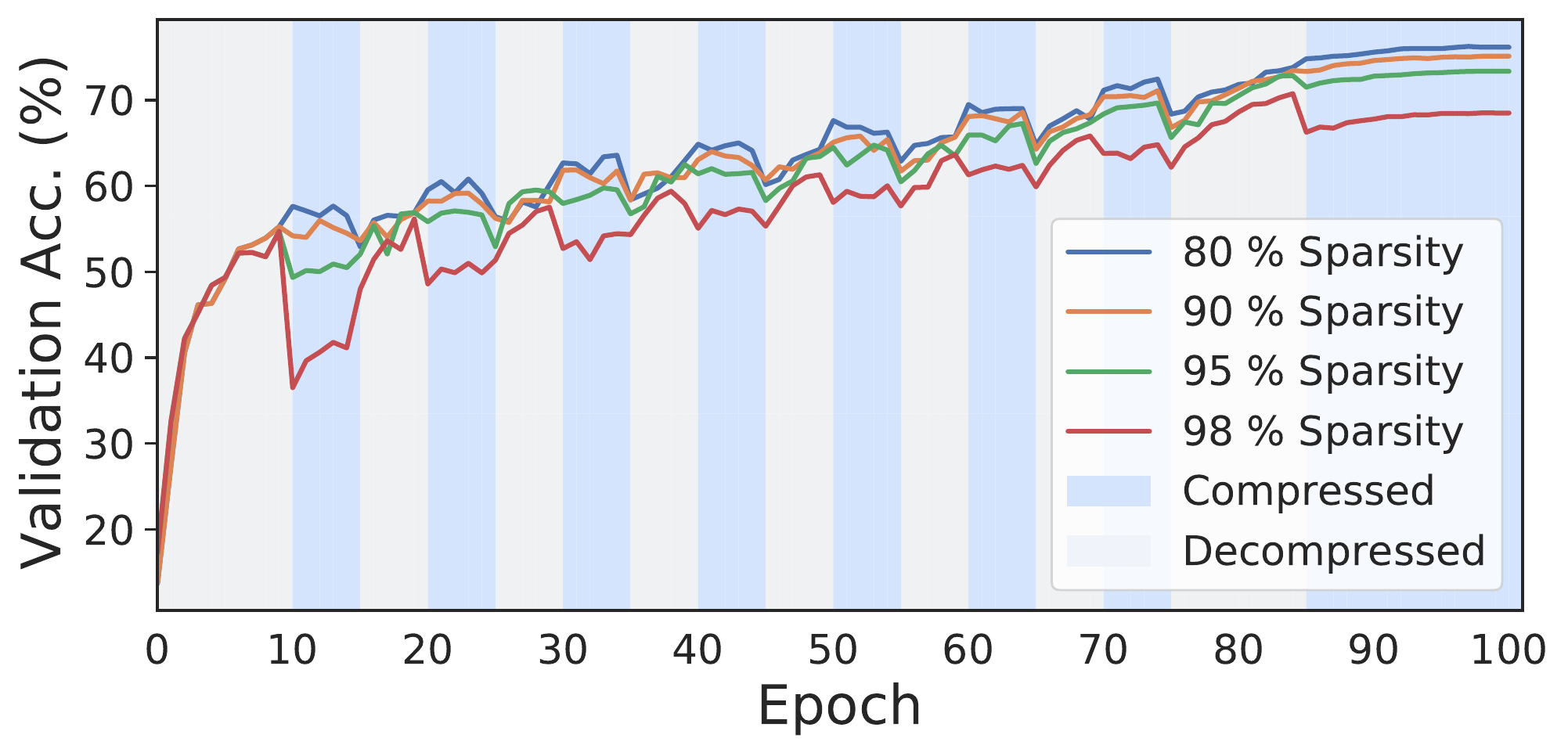}
        \caption{Sparsity pattern and test accuracy}
        \label{fig:valacc-rn50-all}
    \end{subfigure}%
    ~ 
    \begin{subfigure}[t]{0.5\textwidth}
        \centering
        \includegraphics[height=1.3in]{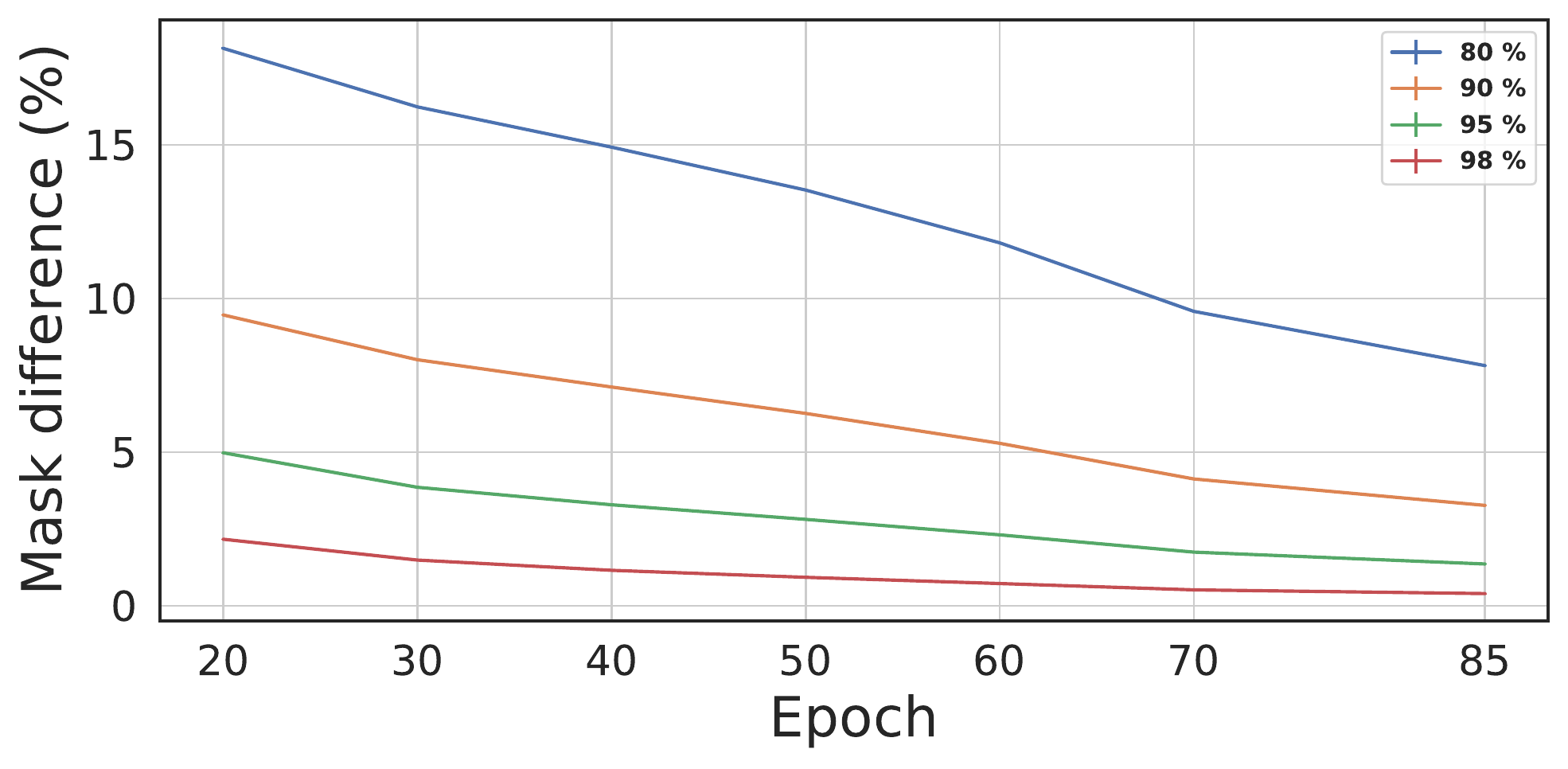}
        \caption{Relative change in consecutive masks}
        \label{fig:masks-rn50-all}
    \end{subfigure}%
    ~
    \caption{Validation accuracy and sparsity during training, together with differences in consecutive masks for ResNet50 on ImageNet using AC/DC.}
    \label{fig:acc-masks-rn50}
\end{figure*}

\begin{figure*}[t]
    \centering
    \begin{subfigure}[t]{0.45\textwidth}
        \centering
        \includegraphics[height=1.15in]{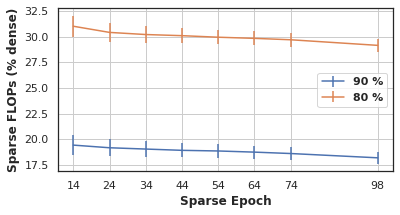}
        \caption{Test FLOPs after each sparse phase}
        \label{fig:sparse-flops-rn50}
    \end{subfigure}%
    ~ 
    \begin{subfigure}[t]{0.45\textwidth}
        \centering
        \includegraphics[height=1.15in]{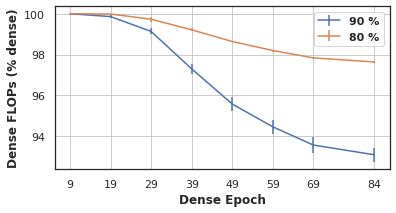}
        \caption{Test FLOPs after each dense phase}
        \label{fig:dense-flops-rn50}
    \end{subfigure}%
  \caption{Dynamics of sparse and dense inference FLOPs for ImageNet on ResNet50, as a percentage of the dense baseline FLOPs}
    \label{fig:flops-resnet50}
\end{figure*}

\begin{table}
\centering
\caption{Accuracy, sparsity, inference FLOPs and percentage of inactive weights for the resulting AC/DC dense models on ResNet50 (before fine-tuning, one seed).}
\label{table:acdc-dense-zeros}
\begin{tabular}{@{}cccc@{}}
\toprule
\begin{tabular}[c]{@{}c@{}}Target\\ Sparsity \end{tabular} & \begin{tabular}[c]{@{}c@{}}Top-1\\ Accuracy (\%) \end{tabular} &
\begin{tabular}[c]{@{}c@{}}Inference\\ FLOPs \end{tabular} &
\begin{tabular}[c]{@{}c@{}}Inactive\\ Weights (\%) \end{tabular} \\ \midrule
80      & $73.8$ & $0.98\times$ & 3.2 \\
90      & $73.1$ & $0.93\times$ & 10.5 \\
95      & $72.9$ & $0.85\times$ & 22.0 \\
98      & $70.8$ & $0.67\times$ & 49.8\\
\bottomrule
\end{tabular}
\end{table}

\parhead{AC/DC with uniform pruning.} As discussed, for example, in \cite{singh2020woodfisher}, global magnitude pruning usually performs better than its uniform counterpart. Interestingly, with global magnitude pruning later layers (which also tend to be the largest) are pruned the most. Moreover, we did not encounter convergence issues caused by entire layers being pruned, as hypothesized in some previous work \citep{evci2020rigging, jayakumar2020top}. However, one concern related to global magnitude pruning is a potential FLOP inefficiency of the resulting models; in theory, this would be a consequence of the earlier layers being pruned the least. For this reason, we performed additional experiments with AC/DC at uniform sparsity, with the first and last layers dense (as commonly used in the literature \cite{evci2020rigging, jayakumar2020top}). Our results show that there are no significant differences compared to AC/DC with global magnitude pruning. However, keeping the first and last layers dense significantly improves the results with global magnitude pruning. These observations emphasize that AC/DC is an easy-to-use method which works reliably well with different pruning criteria. For complete results, please see Table~\ref{table:unif-acdc}.

\begin{table}
\centering
\caption{AC/DC with uniform vs global magnitude pruning on ResNet50 (one seed), where ($\star$) denotes that the first and last layers are dense.}
\label{table:unif-acdc}
\begin{tabular}{@{}ccccc@{}}
\toprule
\begin{tabular}[c]{@{}c@{}}Sparsity\\ Distribution \end{tabular} & \begin{tabular}[c]{@{}c@{}}Target \\ Sparsity(\%)\end{tabular} &
\begin{tabular}[c]{@{}c@{}}Global \\ Sparsity(\%)\end{tabular}&
\begin{tabular}[c]{@{}c@{}}Top-1\\ Accuracy (\%) \end{tabular} & \begin{tabular}[c]{@{}c@{}}FLOPs\\ Inference \end{tabular} \\ \midrule
global     & $90$ & $89.8$ & $75.14$ & $0.18\times$ \\ 
global$^\star$ & $90$ & $82.6$ & $75.64$ & $0.21\times$ \\
uniform$^\star$ & $90$ & $82.6$ & $75.04$ & $0.13\times$ \\
\midrule
global  & $95$     & $94.8$ & $73.15$ & $0.11 \times$ \\ 
global$^\star$ & $95$ & $87.2$ & $74.16$ & $0.13\times$ \\ 
uniform$^\star$ & $95$ & $87.2$ & $73.28$ & $0.08\times$ \\
\bottomrule
\end{tabular}
\end{table}

\parhead{Direct comparison with Top-KAST.} As previously highlighted, Top-KAST is the closest to us, in terms of validation accuracy, out of existing sparse training methods. However, for the results reported, the authors kept the first convolutional and final fully-connected layers dense. To obtain a fair comparison, we used AC/DC on the same sparse distribution, and for 90\% sparsity over the pruned layers (82.57\% overall network sparsity), our results improved significantly. Namely, the best sparse model reached 75.64\% validation accuracy (0.6\% increase from the results in Table~\ref{table:medium-sparse-rn50}), while the accuracy of the best dense model was 76.85\% after fine-tuning. For more details, we also provide in Table \ref{table:us_vs_topkast} the results for Top-KAST when all layers are pruned, as they were provided to us by the authors. Notice that AC/DC surpasses even Top-KAST with dense back-propagation. 

It is important to note, however, that because of its flexibility in choosing the gradients density, Top-KAST can theoretically obtain significantly better training speed-ups than AC/DC, the latter being constrained by its dense training phases. This allows Top-KAST to improve the accuracy of the models by increasing the number of training epochs, while still enabling (theoretical) training speed-up. We present in Table~\ref{table:us_vs_topkast_nx} another comparison between AC/DC and Top-KAST, when the training time for the latter is increased 2 or 5 times; for all results (which were provided to us by the authors), the first and last layers for Top-KAST are dense. When comparing with AC/DC with all layers pruned, Top-KAST obtains better results at 98\% and 95\% sparsity, with increased training epochs. However, when using the same sparse distribution as Top-KAST (not pruning the first and last layers), the results for AC/DC at 95\% and 98\% sparsity are significantly better than Top-KAST with increased steps. For all the results reported on AC/DC the number of training steps was fixed at 100 epochs.

We note that the results obtained with AC/DC can be improved as well with increased number of training epochs. As an example, when using the same sparsity schedule extended over 150 epochs, the best sparse model obtained with AC/DC on 90\% sparsity reached $75.99\%$ accuracy, using fewer training FLOPs compared to the original dense baseline trained on 100 epochs (namely 87\%). Furthermore, when we fine-tune the dense model by replacing the final 15 epochs compression phase with dense training, we obtain a dense model with $76.95\%$ accuracy, higher than the original dense baseline.

\begin{table}
\centering
\caption{Comparison with Top-KAST when pruning all layers (ResNet50)}
\label{table:us_vs_topkast}

\begin{tabular}{@{}cccc@{}}
\toprule
Method   & Sparsity (\%) & \multicolumn{1}{c}{\begin{tabular}[c]{@{}c@{}}Backward \\ Sparsity (\%) \end{tabular}}  & \multicolumn{1}{c}{\begin{tabular}[c]{@{}c@{}}Sparse Top-1 \\ Accuracy (\%)\end{tabular}} \\
\midrule
\bf{AC/DC}    & 80 & 80  / 0 & $\bf{76.3} \pm 0.1$   \\
Top-KAST & 80 & 0 & 75.64  \\
Top-KAST & 80 & 50 & 74.78 \\
Top-KAST & 80 & 80 & 72.19 \\
\midrule
\bf{AC/DC}    & 90 & 90 / 0 & $\bf{75.03} \pm 0.1$ \\
Top-KAST & 90 & 0 & 74.42  \\
Top-KAST & 90 & 50 & 74.09 \\
Top-KAST & 90 & 80 & 73.07 \\

\bottomrule               
\end{tabular}
\end{table}

\begin{table}
\centering
\caption{Comparison with Top-KAST with increased training steps (ResNet50). ($\star$) indicates that the first and last layers are dense for AC/DC, while this is the case for all Top-KAST results.}
\label{table:us_vs_topkast_nx}

\begin{tabular}{@{}cccccc@{}}
\toprule
Method   & \multicolumn{1}{c}{\begin{tabular}[c]{@{}c@{}} Sparsity \\ (\%) \end{tabular}} & \multicolumn{1}{c}{\begin{tabular}[c]{@{}c@{}}Backward \\ Sparsity (\%) \end{tabular}}  & \multicolumn{1}{c}{\begin{tabular}[c]{@{}c@{}}Sparse Top-1 \\ Accuracy (\%)\end{tabular}} & \multicolumn{1}{c}{\begin{tabular}[c]{@{}c@{}}Train \\ FLOPs (\%)\end{tabular}} & \multicolumn{1}{c}{\begin{tabular}[c]{@{}c@{}}Inference \\ FLOPs (\%)\end{tabular}}  \\
\midrule
\bf{AC/DC}    & 80 & 80  / 0 & $\bf{76.3} \pm 0.1$  & $0.65\times$ & $0.29\times$ \\
Top-KAST$_{1\times}$ & 80 & 0 & 75.59 & $0.48\times$ & $0.23\times$  \\
 Top-KAST$_{1\times}$ & 80 & 60 & 74.59 & $0.29\times$ & $0.23\times$ \\
Top-KAST$_{2\times}$ & 80 & 0 & 76.11 & $0.97\times$ & $0.23\times$ \\
Top-KAST$_{2\times}$ & 80 & 60 & 75.29 & $0.58\times$ & $0.23\times$ \\
\midrule
AC/DC    & 90 & 90 / 0 & $75.03 \pm 0.1$ & $0.58\times$ & $0.18 \times$ \\ 
\bf{AC/DC$^\star$}    & 90 & 90 / 0 & $\bf{75.64}$ & $0.6\times$ & $0.21\times$\\
AC/DC$^\star$ unif. & $90$ & $90/0$ & $75.04$ & $0.55\times$ & $0.13\times$ \\
Top-KAST$_{1\times}$ & 90 & 0 & 74.65 & $0.42\times$ & $0.13\times$\\
Top-KAST$_{1\times}$ & 90 & 80 & 73.03 & $0.16\times$ & $0.13\times$ \\
Top-KAST$_{2\times}$ & 90 & 0 & 75.35 & $0.84\times$ & $0.13\times$ \\
Top-KAST$_{2\times}$ & 90 & 80 & 74.16 & $0.32\times$ & $0.13\times$ \\
\midrule
AC/DC    & 95 & 95 / 0 & $73.14\pm 0.2$ & $0.53\times$ & $0.11\times$ \\ 
\bf{AC/DC}$^\star$   & 95 & 95 / 0 & \bf{74.16} & $0.54\times$ & $0.13\times$ \\ 
AC/DC$^\star$ (unif)   & 95 & 95 / 0 & 73.28 & $0.5\times$ & $0.08\times$\\
Top-KAST$_{1\times}$ & 95 & 0 & 71.83 & $0.39\times$ & $0.08\times$\\
Top-KAST$_{1\times}$ & 95 & 90 & 70.42 & $0.1\times$ & $0.08\times$ \\
Top-KAST$_{2\times}$ & 95 & 0 & 73.29 & $0.77\times$ & $0.08\times$ \\
Top-KAST$_{2\times}$ & 95 & 90 & 72.42 & $0.19\times$ & $0.08\times$ \\
\bf{Top-KAST$_{5\times}$} & 95 & 0 & \bf{74.27} & $1.94\times$ & $0.08\times$ \\
Top-KAST$_{5\times}$ & 95 & 90 & 73.17 & $0.48\times$ & $0.08\times$ \\
\midrule
AC/DC    & 98 & 98 / 0 & $68.44\pm 0.09$ & $0.46\times$ & $0.06\times$\\
\bf{AC/DC$^\star$}    & 98 & 98 / 0 & $\bf{71.27}$ & $0.47\times$ & $0.08\times$ \\
Top-KAST$_{1\times}$ & 98 & 90 & 67.06 & $0.08\times$ & $0.05\times$\\
Top-KAST$_{1\times}$ & 98 & 95 & 66.46 & $0.06\times$ & $0.05\times$ \\
Top-KAST$_{2\times}$ & 98 & 90 & 68.99 & $0.15\times$ & $0.05\times$ \\
Top-KAST$_{2\times}$ & 98 & 85 & 68.87 & $0.12\times$ & $0.05\times$ \\

\bottomrule               
\end{tabular}
\end{table}

\subsection{MobileNet on ImageNet}

\parhead{Performance of dense models.} Similar to ResNet50, we observed that dense models obtained with AC/DC are able to recover the baseline accuracy after additional fine-tuning. We performed fine-tuning identically to the ResNet50 experiments and observe that AC/DC models obtained with a 75\% target sparsity recovered the baseline accuracy, while for 90\% the gap is just below 1\%. We present results for the (fine-tuned) dense models in Table \ref{table:dense-mobnet}, where ($\star$) indicates that the first layer and depth-wise convolutions were never pruned.

\parhead{Masks dynamics.} Similar to ResNet50, the change between consecutive AC/DC compression masks plays an important role in obtaining accurate sparse models on MobileNet. As shown in Figure~\ref{fig:masks-mobnet}, the compression masks stabilize as training progresses. For completeness, we also illustrate the evolution of the validation accuracy during AC/DC training on MobileNet, at 75\% and 90\% sparsity, in Figure~\ref{fig:valacc-mobnet}.

\parhead{Comparison with RigL.} We note that the results obtained by RigL \citep{evci2020rigging} improve significantly when increasing the number of training steps 2 or 5 times. Moreover, for all results reported with RigL on MobileNet the first convolutional layer and all depth-wise convolutions are dense, whereas we do not impose such restrictions on our sparse model. Our results can further be improved by using the same sparsity distribution; namely, for 90\% sparsity over the pruned parameters (88.57\% overall sparsity), the best sparse model obtained with AC/DC achieved 66.56\% accuracy (0.5\% improvement), while the best dense improved from 67.64\% to 70.97\% after fine-tuning. 
In Table \ref{table:us_vs_rigl} we present results for AC/DC and RigL at 75\% and 90\% sparsity, when the latter is trained over the same number of epochs, or with 2x or 5x the number of passes through the training data. We conclude that AC/DC has very similar validation accuracy to RigL$_{2\times}$. For $75\%$ sparsity, AC/DC achieves similar performance with significantly fewer training and inference FLOPs than RigL. At $90\%$ sparsity, AC/DC and RigL$_{2\times}$ are close in terms of both validation accuracy and training FLOPs; however, the validation accuracy of AC/DC can be improved by almost $0.5\%$ when the first and depth-wise convolutional layers are kept dense. We note that RigL$_{5\times}$ has significantly higher validation accuracy, and for $75\%$ sparsity it even matches the baseline; however, this variant of RigL also uses $2.6\times$ and $1.5\times$ the dense baseline training FLOPs for $75\%$ and $90\%$ sparsities, respectively, which makes it impractical due to its high computational training cost. 

\begin{figure*}[t]
    \centering
    \begin{subfigure}[t]{0.5\textwidth}
        \centering
        \includegraphics[height=1.3in]{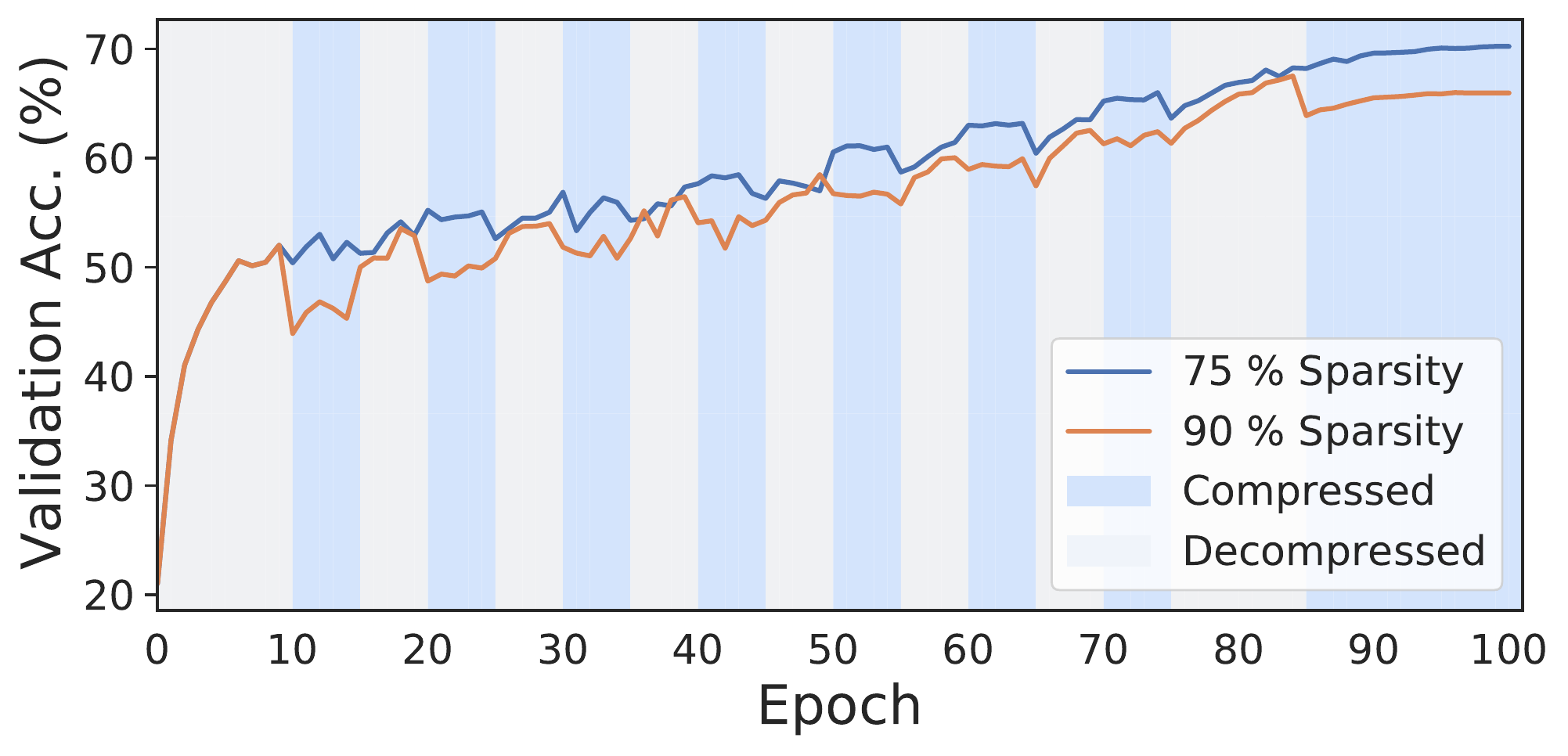}
        \caption{Sparsity pattern and test accuracy}
        \label{fig:valacc-mobnet}
    \end{subfigure}%
    ~ 
    \begin{subfigure}[t]{0.5\textwidth}
        \centering
        \includegraphics[height=1.3in]{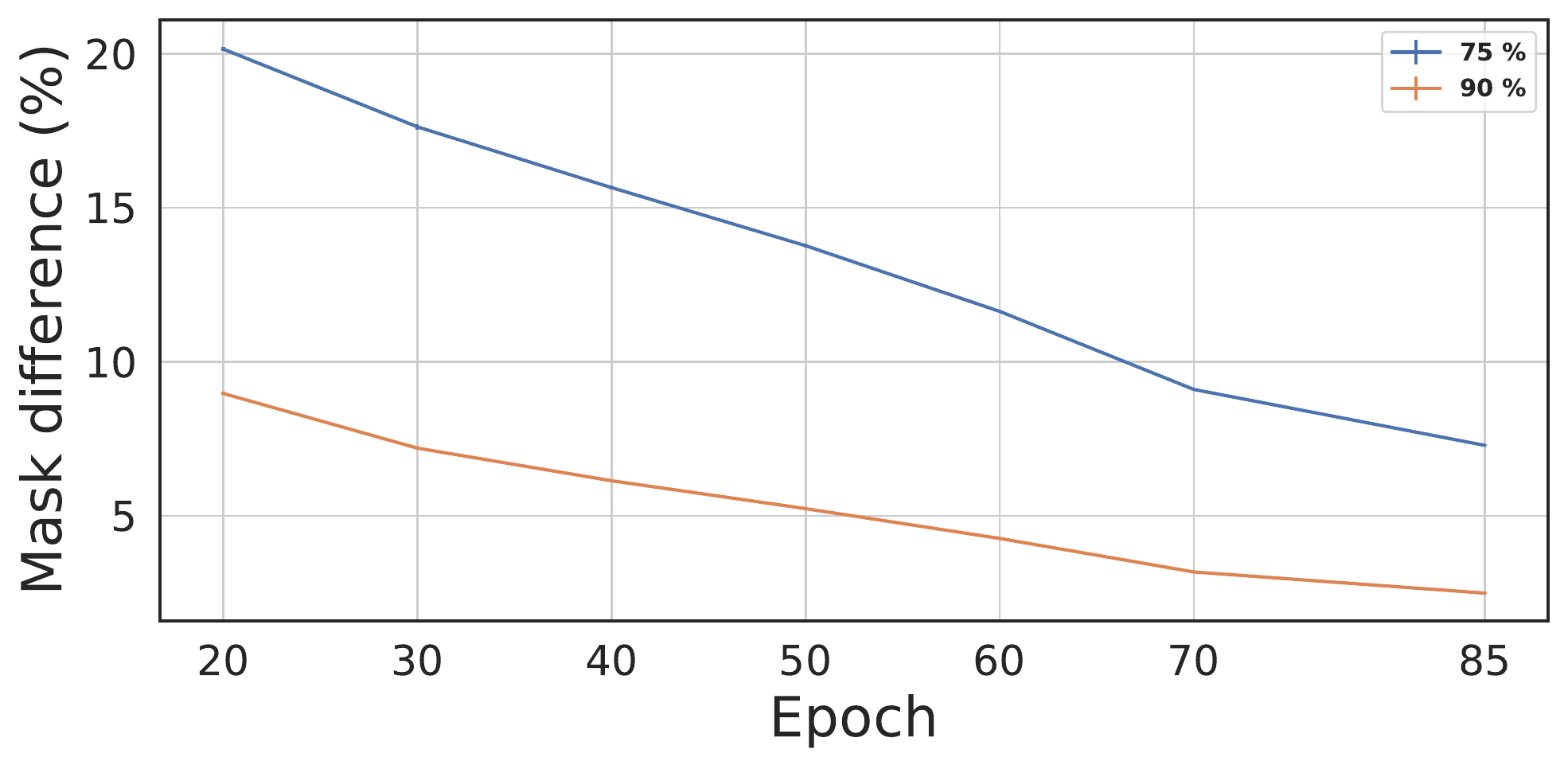}
        \caption{Relative change in consecutive masks}
        \label{fig:masks-mobnet}
    \end{subfigure}%
    ~
    \caption{Validation accuracy and sparsity during training, together with differences in consecutive masks for ImageNet with MobileNetV1 using AC/DC.}
    \label{fig:acc-masks-mobnet}
\end{figure*}

\begin{table}
\centering
\caption{Comparison between AC/DC and RigL on MobileNet, where ($\star$) denotes that the first and depth-wise convolutions were kept dense.}
\label{table:us_vs_rigl}

\begin{tabular}{@{}ccccc@{}}
\toprule
Method   & Sparsity (\%) & \multicolumn{1}{c}{\begin{tabular}[c]{@{}c@{}}Top-1 \\ Accuracy (\%)\end{tabular}} & \multicolumn{1}{c}{\begin{tabular}[c]{@{}c@{}} Inference \\ FLOPs \end{tabular}} & \multicolumn{1}{c}{\begin{tabular}[c]{@{}c@{}} Train \\ FLOPs \end{tabular}} \\
\midrule
AC/DC  & 75 &  70.3  & $0.34\times$ & $0.64\times$ \\ 
AC/DC$^\star$  & 75 &  70.41  & $0.36\times $ & $0.66\times$ \\ 
RigL$^\star$ (ERK) & 75 & 68.39 & $0.52\times$ & $0.52\times$  \\
RigL$^\star_{2\times}$(ERK) & 75 &  70.49 & $0.52\times$ & $1.05\times$ \\
RigL$^\star_{5\times}$(ERK) & 75 &  71.9 & $0.52\times$ & $2.63\times$ \\
\midrule

AC/DC    & 90 &  66.08  & $0.18\times$ & $0.56\times$ \\
AC/DC$^\star$  & 90 &  66.56  & $0.21\times$ & $0.58\times$ \\
RigL$^\star$(ERK) & 90 &  63.58 & $0.27\times$ & $0.29 \times$  \\
RigL$_{2\times}^\star$(ERK) & 90 & 65.92 & $0.27\times$ &  $0.59\times$ \\
RigL$_{5\times}^\star$(ERK) & 90 &  68.1 & $0.27 \times$ & $1.47 \times$ \\

\bottomrule               
\end{tabular}
\end{table}

\subsection{Inference Speedups}

We now examine the potential for real-world speedup of models produced through our framework. 
For this, we use the CPU-based inference framework of~\citep{NM}, which supports efficient inference over unstructured sparse models, and is free to use for non-commercial purposes.   
Specifically, we export our Pytorch-trained models to the ONNX intermediate format, preserving weight sparsity, and then execute inference on a subset of samples, at various batch sizes, measuring time per batch. We execute on an Intel i9-7980XE CPU with 16 cores and 2.60GHz core frequency. 
We simulate two scenarios: the first is \emph{real-time inference}, i.e. samples are processed one at a time, in a resource-constrained environment, using only 4 cores.
The second is \emph{batch inference}, for which we pick batch size 64, in a cloud environment, for which we use all 16 cores. 
We measure average time per batch for the sparse models against dense baselines, for which we use both the Deepsparse engine, and the ONNX runtime (ONNXRT). 
We present the average over 10 runs. The variance is extremely low, so we omit it for readability.

\begin{table}
\centering
\caption{Time per batch (milliseconds) using a sparse inference engine~\citep{NM}.}
\label{table:inference}

\begin{tabular}{@{}ccccc@{}}
\toprule
Model/Setup   & Real-Time Inference, 4 cores & Batch 64 Inference, 16 cores \\
\midrule
ResNet50 ONNXRT v1.6 &	14.773	& 329.734  \\
ResNet50 Dense	& 15.081	& 285.958 \\
ResNet50 90\% Pruned &	9.46 &	124.193 \\
ResNet50 90\% Unif. Pruned &	8.495 &	116.897 \\
\midrule
MobileNetV1 ONNXRT v1.6	& 2.552	& 80.748 \\		
MobileNetV1 Dense	& 2.513	& 55.845 \\
MobileNetV1 Pruned 75\%	& 1.96	& 40.976 \\
MobileNetV1 Pruned 90\%	& 1.468	& 34.909 \\
\bottomrule               
\end{tabular}
\end{table}

We now briefly discuss the results. 
First, notice that the dense baselines offer similar performance for real-time inference, but that the Deepsparse engine has a slight edge at batch 64. We will therefore compare against its timings below. 
The results show a speedup of 1.6x for the 90\% global-pruned ResNet50 model, and 1.8x for the uniformly pruned one: the uniformly-pruned model is slightly faster, which correlates with its lower FLOP count due to the uniform pruning pattern. 
This pattern is preserved in MobileNetV1 experiments, although the speedups are relatively lower, since the architecture is more compact.  
We note that the speedups are more significant for batched inference, where the engine has more potential for parallelization, and our setup uses more cores. 

\subsection{Sparse-Dense Output Comparison}

\begin{table}
\centering
\caption{Sample agreement between ResNet50 sparse and dense models}
\label{table:similarity-vd}

\begin{tabular}{@{}cccccc@{}}
\toprule
Method   & Sparsity &  \multicolumn{1}{c}{\begin{tabular}[c]{@{}c@{}}Sparse Top-1 \\ Accuracy (\%)\end{tabular}} & \multicolumn{1}{c}{\begin{tabular}[c]{@{}c@{}}Dense Top-1 \\ Accuracy (\%)\end{tabular}} & \multicolumn{1}{c}{\begin{tabular}[c]{@{}c@{}}Sparse-Dense \\ Agreement (\%) \end{tabular}} &
\multicolumn{1}{c}{\begin{tabular}[c]{@{}c@{}}Sparse-Dense \\ Cross-entropy \end{tabular}}
\\
\midrule

AC/DC    & 80\% & $76.3 \pm 0.1$      & $76.8 \pm 0.07$ & $89.8 \pm 0.3$ & $0.85 \pm 0.005$   \\
SparseVD & 80\% & 75.3 & 75.2 & 98.6 & -\\
GMP & 80\% & 76.4 & 76.9 & 86.0 & 1.03\\
\midrule
AC/DC    & 90\% & $75.0 \pm 0.1$      & $76.6 \pm 0.09$  & $86.8 \pm 1.5$ & $1.02 \pm 0.004$    \\
SparseVD & 90\% & 73.8 & 73.6 & 98.3  & -\\
GMP & 90\% & 74.7 & 76.9 & 83.5  & 1.29 \\
\bottomrule               
\end{tabular}
\end{table}

To the best of our knowledge, our results are the first ones to show that \emph{both} a dense and a sparse model can be trained jointly. Although other sparse training techniques such as RigL or Top-KAST can train sparse models faster, none of them offer the additional benefit of an accurate dense model. 

One method that does generate sparse-dense model couples is Sparse Variational Dropout (SparseVD) \citep{molchanov2017variational}; there, after training a dense model with variational inference, a large proportion of the weights can be pruned in a single step, without affecting the accuracy of the dense model. (We note however that Sparse Variational Dropout doubles the FLOP cost of training, due to the variational parameters.)
However, our investigation of the SparseVD models trained by \citep{gale2019state} shows that the sparse and dense models 
agree in over 98\% of their predictions as measured on the ImageNet validation set, and are of no better quality than the sparse model - if anything, they are slightly worse. Please see Table~\ref{table:similarity-vd} for complete results.

In comparison, AC/DC with finetuning produces dense models of validation accuracy that is comparable to that of a dense model trained without any compression, and that therefore do differ from their sparse co-trained counterparts. To understand the relative sizes of these differences, we used GMP pruning as the baseline. In particular, we compared the similarity of a fully trained dense model with GMP trained  
over 100 epochs. We note AC/DC and GMP show comparable accuracy for both their sparse and dense models in this scenario; however, the total training epochs are substantially lower for producing these models with AC/DC.

We use two metrics to investigate the difference in sparse-dense model pairs: the proportion of validation examples on which the top 
prediction agreed between the two models, and the average cross-entropy of the predictions across all validation examples. In both metrics, Table~\ref{table:similarity-vd} shows that model similarity is higher for $80\%$ sparsity than $90\%$, and higher for AC/DC than GMP training: in particular, sparse/dense cross-entropy is about $20\%$ lower for AC/DC, and the number of top-prediction disagreements is about $25\%$ lower for AC/DC at $90\%$ sparsity, and $37\%$ lower for AC/DC at $80\%$ sparsity.

\subsection{Memorization Experiments on CIFAR10}

In what follows, we study the similarities between the sparse and dense models learned with AC/DC, on the particular setup of memorizing random labels. Specifically, we select 1000 i.i.d. training samples from the CIFAR10 dataset and randomly change their labels. We train a ResNet20 model using AC/DC, at various target sparsity levels, ranging from 50\% to 95\%. We use SGD with momentum, weight decay, and initial learning rate $0.1$ which is decayed by a factor of 10 every $60$ epochs, starting with epoch 65. 

Using data augmentation dramatically affects the memorization of randomly-labelled training samples, and thus we differentiate between the two possible cases. Namely, the regular baseline can easily memorize (in the sense of reaching perfect accuracy) the randomly-labelled samples, when \emph{no} data augmentation is used; in comparison, with data augmentation memorization is more difficult, and the accuracy on randomly-labelled samples for the baseline is just above 60\%. In addition to the accuracy on the perturbed samples with respect to their new random labels, we also track the accuracy with respect to the ``true'' or correct labels. This differentiation offers a better understanding regarding where memorization fails and a glimpse into the robustness properties of neural networks in general, and of AC/DC, in particular.

\parhead{No data augmentation.} As previously mentioned, in this case the baseline model can perfectly memorize the perturbed data, with respect to their random labels. Interestingly, prior to the initial learning rate decay, most ($\geq 70\%$) perturbed samples are still correctly classified with respect to their ``true'' labels, and memorization happens very quickly after the learning rate is decreased. In the case of AC/DC with low target sparsity (50\% and 75\%), memorization has a very similar behavior to the dense baseline. However, for higher sparsity levels (90\% and 95\%) we can see a clear difference between the sparse and dense models. Namely, during each compression phase most perturbed samples are correctly classified with respect to their true labels, whereas in decompression phases their random labels are memorized. This phenomenon is illustrated in Figure~\ref{fig:memo-no-da}.

\begin{figure*}[t]
     \centering
    \begin{subfigure}[t]{0.5\textwidth}
        \centering
        \includegraphics[height=1.3in]{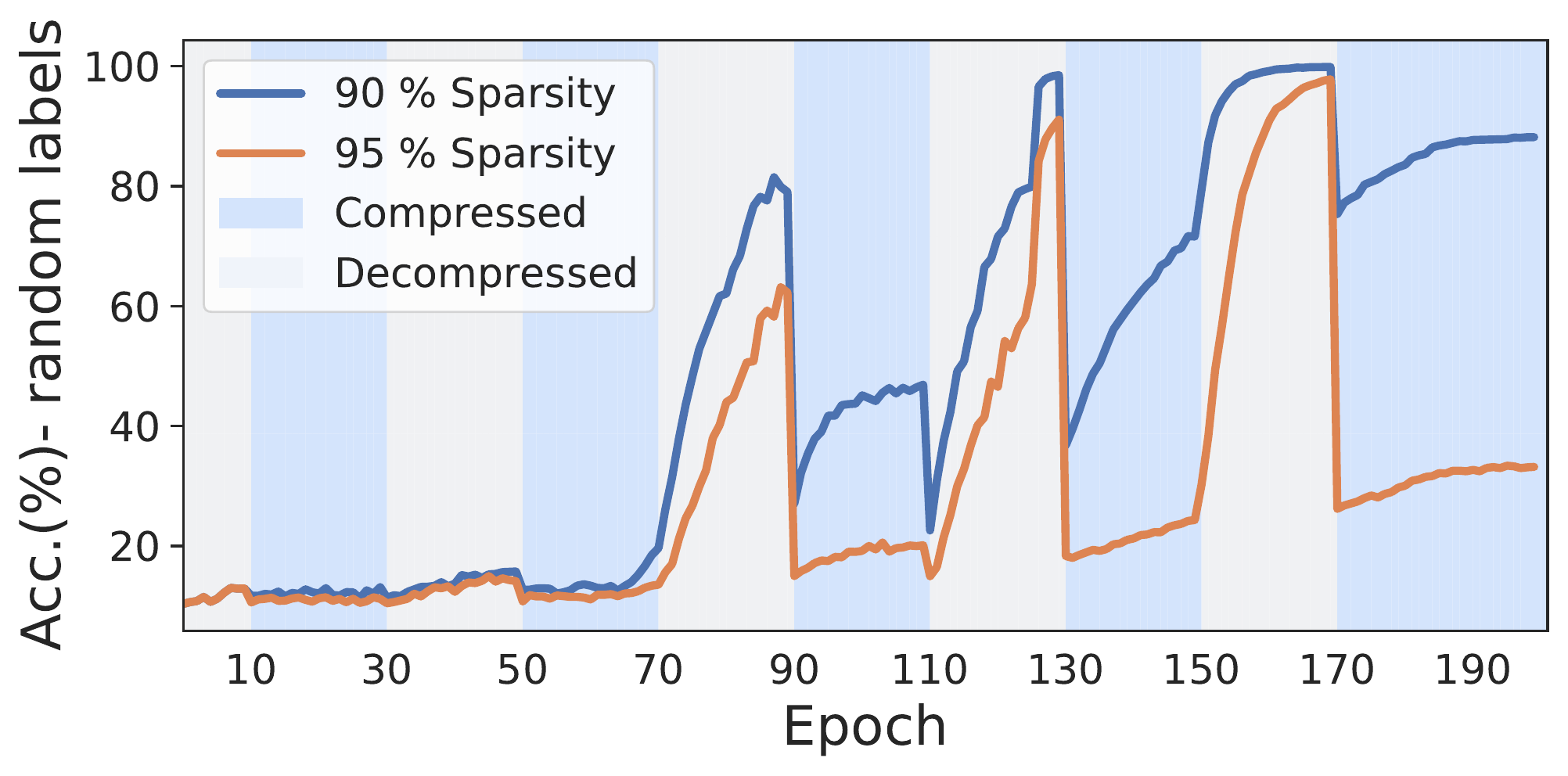}
        \caption{Accuracy on the mis-labelled data}
        \label{fig:random-no-da}
    \end{subfigure}%
    ~ 
    \hfill %
    \begin{subfigure}[t]{0.5\textwidth}
        \centering
        \includegraphics[height=1.3in]{figures/cifar10_random_correct_no_da_all.pdf}
        \caption{Accuracy on the mis-labelled data (w.r.t. the true labels)}
        \label{fig:random-correct-no-da}
    \end{subfigure}%
    
    \caption{Accuracy during training with AC/DC at 90\% and 95\% target sparsity, for 1000 randomly labelled CIFAR10 images. No data augmentation was applied to the training samples.}
    \label{fig:memo-no-da}
\end{figure*}

\parhead{Data augmentation.} In this case, memorization of the perturbed samples is more difficult, and it happens later on during training, usually after the second learning rate decrease for the baseline model. Interestingly, in the case of AC/DC we can see (Figure \ref{fig:memo-da}) a clear inverse relationship between the amount of memorization and the target sparsity. Although low sparsity enables more memorization, most perturbed samples are still correctly classified with respect to their true labels. For higher sparsity levels (90\% and 95\%), most perturbed samples are correctly classified with respect to their true labels (almost 90\%) and very few are memorized. Furthermore, the dense model resulted from AC/DC training is more robust than the original baseline, as it still learns the correct labels of the perturbed samples, despite being presented with random ones.

\begin{figure*}[t]
     \centering
    \begin{subfigure}[t]{0.5\textwidth}
        \centering
        \includegraphics[height=1.4in]{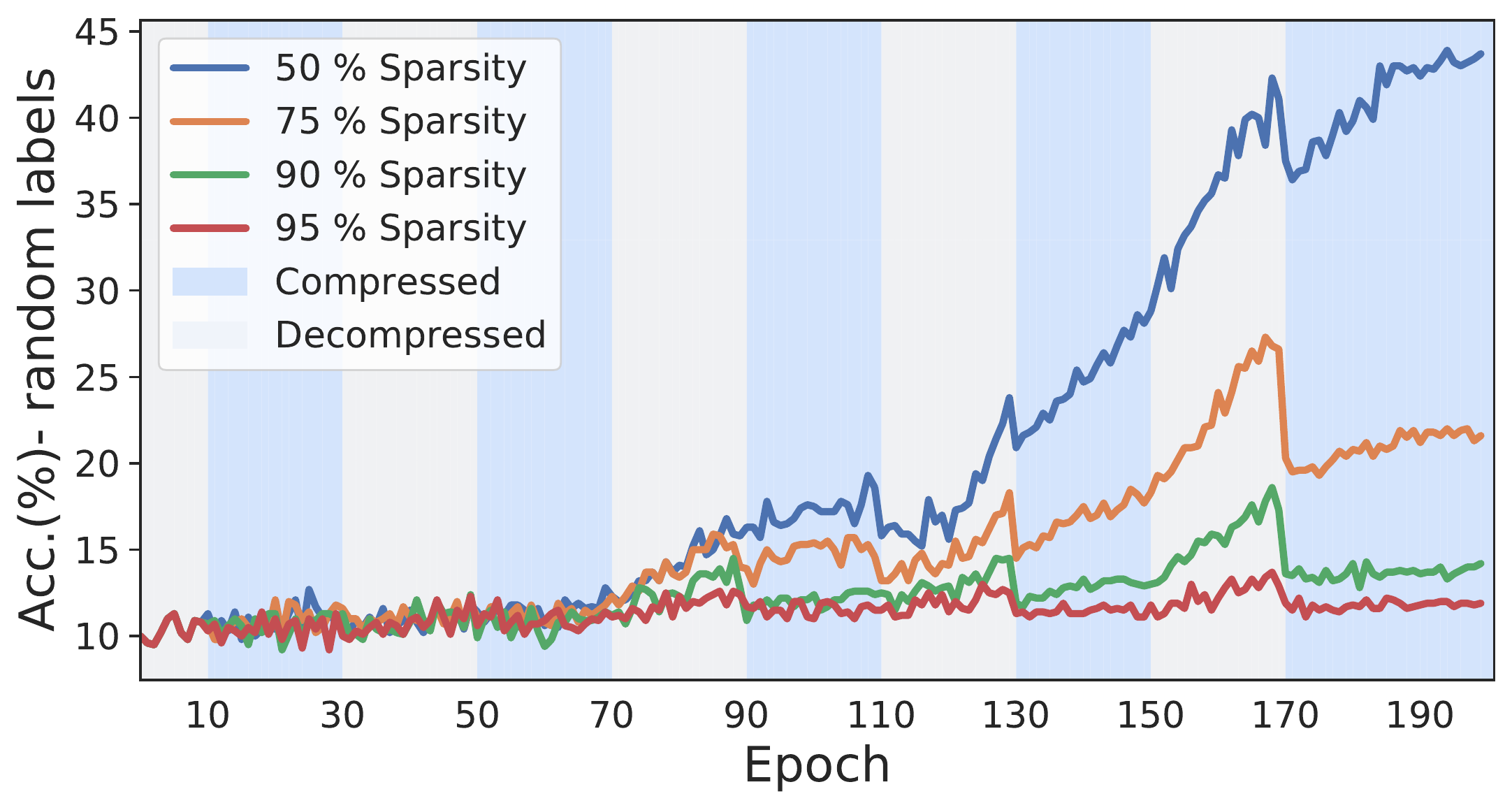}
        \caption{Accuracy on the mis-labelled data}
        \label{fig:random-da}
    \end{subfigure}%
    ~ 
    \begin{subfigure}[t]{0.5\textwidth}
        \centering
        \includegraphics[height=1.4in]{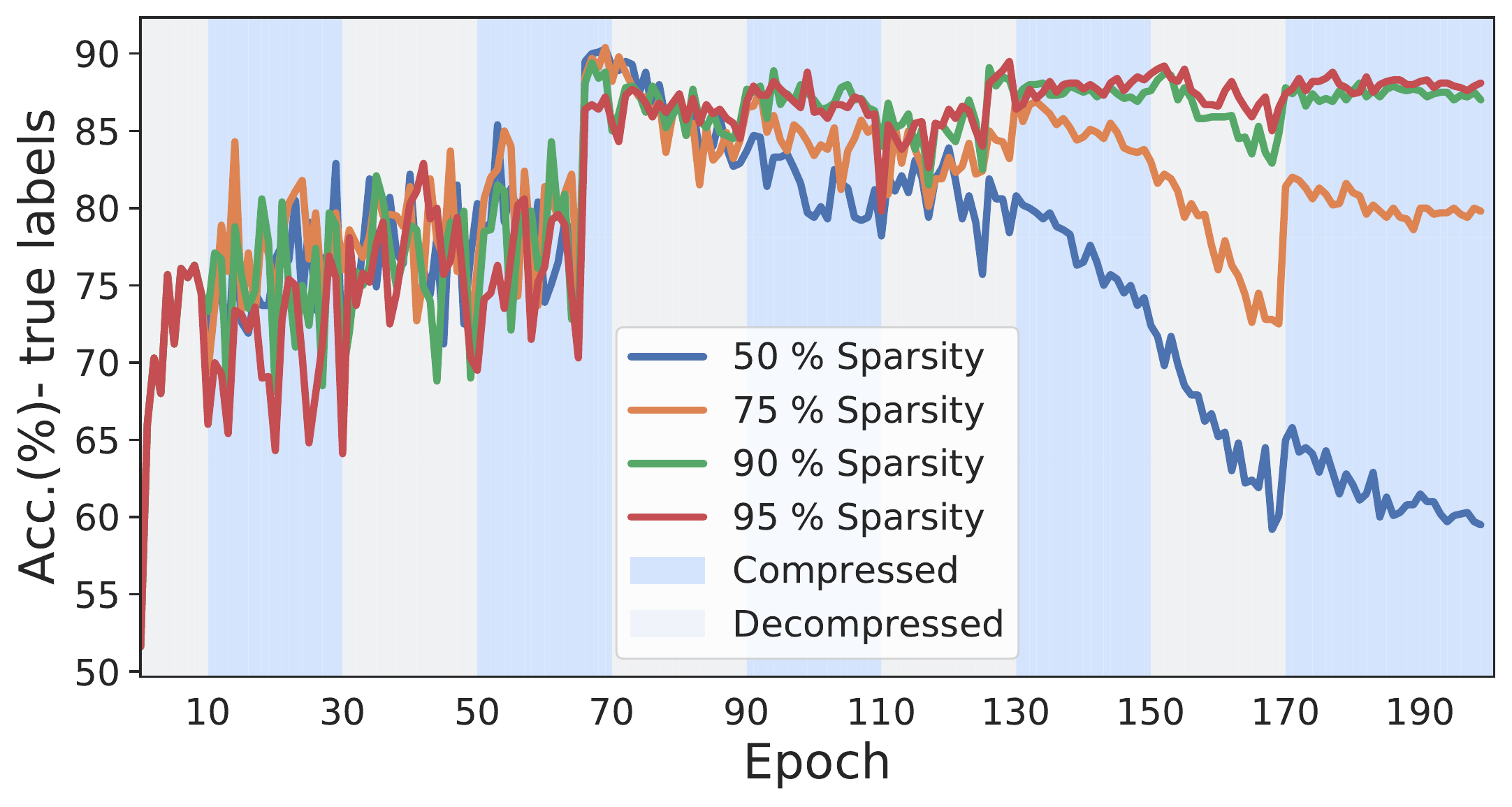}
        \caption{Accuracy on the mis-labelled data (w.r.t. the true labels)}
        \label{fig:random-correct-da}
    \end{subfigure}%
    
    \caption{Accuracy during training with AC/DC at 50\%, 75\%, 90\% and 95\% target sparsity, for 1000 randomly labelled CIFAR10 images. Here, all samples were trained using data augmentation.}
    \label{fig:memo-da}
\end{figure*}

\section{Computational Details}

\subsection{Hardware Details}
\label{subsec:hardware}

Experiments were run on NVIDIA RTX 2080 GPUs for image classification tasks, and NVIDIA RTX 3090 GPUs for language modelling. Each ImageNet run took approximately 2 days for ResNet50 and one day for MobileNet, while each Transformer-XL experiment took approximately 2 days. 

\subsection{FLOPs Computation}

When computing FLOPs, we take into account the number of zero-valued weights for linear and convolutional layers. 
To compute the FLOPs required for a backward pass over a sample, we use the same convention as RigL \citep{evci2020rigging}; namely, if $F$ denotes the inference FLOPs per sample, the number of backward FLOPs is estimated as $B = 2 \cdot F$, as we need $F$ FLOPs to backpropagate the error, and additional $F$ to compute the gradients w.r.t. the weights. For ImageNet experiments, we ignore the FLOPs required for Batch Normalization, pooling, ReLU or Cross Entropy, similarly to other methods \cite{evci2020rigging, singh2020woodfisher, kusupati2020soft}; however, these layers have a negligible impact on the total FLOPs number (at most $0.01\times$ the dense number).

For compression and decompression phases $C$ and $D$, we consider $F_C$ and $F_D$ the compression and decompression inference FLOPs per sample, respectively. We use $F$ to denote the inference FLOPs per sample for the baseline network. During each compression phase, the training FLOPs per sample can be estimated as $3\cdot F_C$. For decompression phases, we noticed that a small fraction of weights remain zero, and therefore $F_D < F$. When doing a backward pass we have additional $F_D$ from back-propagating the error, and $F$ extra FLOPs for the gradients with respect to all parameters. Therefore, we estimate the training FLOPs per sample during a decompression phase as $2\cdot F_D + F$. We measure the number of FLOPs on a random input sample, at the end of each training epoch and use this value to estimate the total training FLOPs for that particular epoch. To obtain the final number of FLOPs, we compute the inference FLOPs on a random input sample, estimate the backward FLOPs, compute the estimated training FLOPs over all training epochs as described above, and scale by the number of training samples.    

\subsection{Choice of Hyper-parameters}

\parhead{Length of compression/decompression phases.} AC/DC alternates between compression and decompression phases to co-train sparse and dense models. It is important to note, however, that the length of these phases, together with the warm-up and fine-tuning phases, could have a significant impact on the quality of the resulting models. Before settling on the sparsity pattern we used for all our ImageNet experiments (see Figure~\ref{fig:valacc-rn50-all} and Figure~\ref{fig:valacc-mobnet}), we experimented with different lengths for the sparse/dense phases, but found that ultimately the pattern used in the paper had the best trade-off between training FLOPs and validation accuracy. 

Notably, we experimented on ResNet50, 90\% sparsity, with increasing the training epochs to 130 and with different lengths for the compression/decompression phases. For example, we found that alternating between sparse/dense training every 10 epochs yielded slightly better results after 130 epochs: $75.34\%$ for the sparse model, $76.87\%$ for the fine-tuned dense model; however, this also had higher training FLOPs requirements ($0.7\times$ for the sparse model and $0.9\times$ including the fine-tuned dense). We additionally experimented with longer dense phases (10 epochs), compared to sparse phases (5 epochs); this also resulted in more accurate models: $75.45\%$ accuracy for the sparse model and $76.78\%$ -- for the fine-tuned dense model. However, the training FLOPs were substantially higher: $0.85\times$ for the sparse model and $1.15\times$ for the fine-tuned dense. 

Due to computational limitations, and to ensure a fair comparison with the dense baseline and other pruning methods, we decided on using a fixed number of 100 training epochs (the same used for the dense baseline). In this setup, we experimented mainly with the lengths for the compression/decompression phases used in Figure~\ref{fig:valacc-rn50-all} and Figure~\ref{fig:valacc-mobnet}, but noticed that having a longer final decompression phase had a positive impact on the fine-tuned dense model. For instance, when following a sparsity schedule as in Figure~\ref{fig:ac-dc}, the sparse model at 90\% sparsity had a very similar performance to the reported results (75.18\% accuracy, from one seed), while the fine-tuned dense model was significantly below the dense baseline (76.05\% validation accuracy). 
We believe having a short warm-up period and a longer fine-tuning phase are both beneficial for the sparse model; in our experiments, we only used warm-up phases of 10 epochs, but believe that shorter phases are worth exploring as well. Furthermore, the mask difference between consecutive compression phases is an important guide for choosing the sparsity schedule: as it was previously discussed, having a non-trivial difference between the masks typically results in better sparse models. Illustrations of the pruning masks during training on ImageNet are presented in Figure~\ref{fig:masks-rn50-all} and Figure~\ref{fig:masks-mobnet}.

When choosing the sparsity schedule for the language models experiments on Transformers-XL, we followed the same principles as for ImageNet. In fact, the sparsity schedule is very similar to the one used for ImageNet, scaled by the number of training epochs (~48 epochs or 100,000 steps for Transformers).

In the case of CIFAR100, we used for AC/DC the same number of 200 training epochs as for the dense baseline. We experimented with sparse/dense phases of lengths 10 or 20, and found that generally switching every 20 epochs between sparse and dense training yielded the best results. 

\parhead{Training Hyper-parameters for ImageNet.} We used the same hyper-parameters for all our ImageNet experiments, on both ResNet50 and MobileNetV1. Namely, we trained using SGD with momentum and batch size 256. We used a cosine learning rate scheduler, after an initial warm-up phase of 5 epochs, when the learning rate was linearly increased to 0.256. The momentum value was 0.875 and weight decay was 0.00003051757813. These hyper-parameters have the standard values used in the implementation of STR \cite{kusupati2020soft}. Furthermore, to improve efficiency, we train and evaluate the models using mixed precision (FP16). For models trained with mixed precision, the difference in accuracy between evaluating them with FP32 versus FP16 is negligible (<0.05\%). However, we noticed larger differences (around 0.2-0.3\%) in accuracy when training AC/DC with FP16 versus FP32. 

\parhead{Training Hyper-parameters for Transformer-XL.} For our Transformer-XL experiments, we integrated into our code-base the implementation provided by NVIDIA \footnote{\small{\url{https://github.com/NVIDIA/DeepLearningExamples/tree/master/PyTorch/LanguageModeling/Transformer-XL}}}, which also follows closely the original implementation in \cite{dai2019transformer}. We used the same hyper-parameters for training the large Transformer-XL model with 18 layers on WikiText-103, including the Lamb optimizer \cite{you2019large} with cosine learning rate scheduler.

\end{document}